\documentclass[11pt]{article}
\usepackage[T1]{fontenc}
\usepackage[
  letterpaper,
  textwidth=6.35in,
  textheight=8.5in,
  hcentering,
  vcentering
]{geometry}
\usepackage[title]{appendix}
\usepackage[sort,authoryear,round]{natbib}
\usepackage{amsmath,amssymb,amsthm,mathtools,dsfont}
\usepackage{microtype}
\usepackage{xcolor}
\usepackage{graphicx}
\usepackage{booktabs}
\usepackage{tabularx}
\usepackage{array}
\usepackage{enumitem}
\usepackage{nicefrac}
\usepackage{tikz}
\usetikzlibrary{arrows.meta,positioning,calc,fit,shapes.misc}
\definecolor{linkblue}{RGB}{0,91,181}
\usepackage[pagebackref,colorlinks=true,linkcolor=linkblue,citecolor=linkblue,urlcolor=linkblue]{hyperref}
\usepackage[capitalize,noabbrev]{cleveref}

\newtheorem{theorem}{Theorem}[section]
\newtheorem{proposition}[theorem]{Proposition}
\newtheorem{lemma}[theorem]{Lemma}
\newtheorem{claim}[theorem]{Claim}
\newtheorem{corollary}[theorem]{Corollary}
\newtheorem{definition}[theorem]{Definition}

\newtheorem{remark}[theorem]{Remark}
\newtheorem{problem}{Open Problem}
\newtheorem*{problem*}{Open Problem}

\crefname{theorem}{Theorem}{Theorems}
\crefname{proposition}{Proposition}{Propositions}
\crefname{lemma}{Lemma}{Lemmas}
\crefname{claim}{Claim}{Claims}
\Crefname{claim}{Claim}{Claims}
\crefname{corollary}{Corollary}{Corollaries}
\crefname{definition}{Definition}{Definitions}
\crefname{example}{Example}{Examples}
\crefname{remark}{Remark}{Remarks}
\crefname{openproblem}{Open Problem}{Open Problems}
\crefname{section}{Section}{Sections}
\crefname{subsection}{Section}{Sections}
\crefname{equation}{Equation}{Equations}

\crefname{appendix}{appendix}{appendices}
\Crefname{appendix}{Appendix}{Appendices}
\renewcommand{\epsilon}{\varepsilon}

\newcommand{\N}{\mathbb{N}}
\newcommand{\Z}{\mathbb{Z}}
\newcommand{\E}{\mathbb{E}}
\newcommand{\F}{\mathbb{F}}
\renewcommand{\P}{\mathbb{P}}
\newcommand{\1}{\mathbf{1}}

\renewcommand{\H}{\mathcal{H}}
\newcommand{\Hcal}{\mathcal{H}}
\newcommand{\Acal}{\mathcal{A}}
\newcommand{\Dcal}{\mathcal{D}}
\newcommand{\Fcal}{\mathcal{F}}

\newcommand{\Xcal}{\mathcal{X}}
\newcommand{\Ycal}{\mathcal{Y}}

\newcommand{\Wcal}{\mathcal{W}}
\newcommand{\Lcal}{\mathcal{L}}
\newcommand{\Bcal}{\mathcal{B}}

\newcommand{\PVC}{\mathrm{PVC}}
\newcommand{\DS}{\mathrm{DS}}
\newcommand{\Graph}{\mathrm{G}}

\newcommand{\Unif}{\mathrm{Unif}}
\newcommand{\err}{\operatorname{err}}
\newcommand{\Risk}{\mathsf{R}}
\newcommand{\Sample}{\mathsf{N}}

\newcommand{\supp}{\operatorname{supp}}
\newcommand{\TV}{\operatorname{TV}}
\newcommand{\argmin}{\operatorname*{arg\,min}}

\newcommand{\starlabel}{\mathord{\star}}
\newcommand{\dollarlabel}{\mathord{\$}}
\newcommand{\loss}{L}
\newcommand{\clean}{\mathrm{iid}}
\newcommand{\obl}{\mathrm{obl}}
\newcommand{\semi}{\mathrm{semi}}
\newcommand{\ad}{\mathrm{ad}}
\newcommand{\prop}{\mathrm{prop}}
\newcommand{\imp}{\mathrm{imp}}
\newcommand{\Bin}{\operatorname{Bin}}

\newcommand{\Shuf}{\operatorname{Shuf}}

\newcommand{\edefn}[1]{\emph{\textbf{#1}}}
\newenvironment{proofof}[1]{\renewcommand{\proofname}{Proof of #1}\proof}{\endproof}
\newcommand{\adv}{\mathtt{Adv}}

\newcommand{\dom}{\operatorname{dom}}
\newcommand{\ran}{\operatorname{ran}}

\newcommand{\email}[1]{\textsf{#1}}

\title{When Clean Data Hurts: Learning with Monotone \\ Corruptions Beyond Binary Classification}
\author{
Julian Asilis \\ USC \\ \email{asilis@usc.edu} \and
Shaddin Dughmi \\ USC \\ \email{shaddin@usc.edu} \and
Chirag Pabbaraju \\ Stanford University \\ \email{cpabbara@stanford.edu}
}
\date{}
\hypersetup{
  pdftitle={When Clean Data Hurts: Learning with Monotone Corruptions},
  pdfauthor={Julian Asilis, Shaddin Dughmi, and Chirag Pabbaraju}
}

\begin{document}
\maketitle

\begin{abstract}
Optimal learners are tailored to exploit the i.i.d.\ data assumption underlying the classic PAC model. What if an i.i.d.\ training sample were corrupted with correctly labeled examples drawn from an otherwise unrelated, even adversarial source? This model of learning with \emph{monotone adversarial corruptions} was recently introduced by \citet{og-paper}, who demonstrated that all known optimal learners for binary classification suffer increased error rates in this setting, from $O(d / n)$ in the PAC model to $\Omega (d \log(n / d) / n)$ under monotone corruption. \citet{mehrotra2026optimal} proved this logarithmic factor to be necessary in the monotone model for binary classification, but left open the consequences of corruption for more general learning settings, such as multiclass classification and partial binary concept classes \citep{alon2022theory}. 

As our primary result, we demonstrate that monotone adversaries are frighteningly more powerful in each of these settings. We exhibit a learnable multiclass problem, of DS dimension only 2, that becomes altogether \emph{unlearnable} under a monotone adversary, and show an analogous result for partial binary concept classes. These results are achieved by an adaptive adversary permitted to view the original i.i.d.\ training set $S$ and to insert $b < \infty$ corrupted datapoints into $S$. In the multiclass example, the adversary need only insert a linear number $b = |S| = n$ of datapoints.

We complement these impossibility results with characterizations of various settings in which monotone data is harmless --- or, at a minimum, cannot destroy learnability. We first prove that every ordinarily learnable class remains learnable when the number of adaptive additions is $o(n)$, which our previous multiclass lower bound proves to be tight. We then observe that the classic multiclass error rate of $O(d_{\mathrm{DS}} / n)$ remains achievable against adaptive adversaries restricted to a known constant budget $b = O(1)$, against semi-adaptive adversaries viewing only a $p$-fraction of $S$ for $p \in (0, 1)$, and against oblivious adversaries that cannot view $S$. Finally, we consider the landscape of proper multiclass learning, in which the learner must output a hypothesis in the underlying class. Here we find that even oblivious adversaries can produce arbitrarily large increases in the sample complexity of learning, underscoring the brittleness of common algorithmic templates such as Empirical Risk Minimization (ERM). 
\end{abstract}

\clearpage 
\begingroup
\hypersetup{linkcolor=black}
\setlength{\parskip}{0pt}
\setcounter{tocdepth}{2}
\linespread{1.0}\selectfont
\tableofcontents
\endgroup
\clearpage

\section{Introduction}\label{sec:introduction}

\vspace{-0.3 cm}
Contemporary machine learning happily exploits the data deluge: frontier models are trained on ever-increasing amounts of data, guided by empirical scaling laws that reward increases in model and data size \citep{grattafiori2024llama3,hoffmann2022training,kaplan2020scaling,
muennighoff2023scaling}. Classic models of statistical learning, however, typically invoke the assumption that training data is not just correctly labeled but furthermore generated i.i.d.\ from the ground truth process. This raises a fundamental question: 

\vspace{-0.2 cm}
\begin{quote} 
\begin{center} 
\emph{To what extent are classical results in learning theory sensitive to the addition of correctly-labeled but otherwise arbitrary, even adversarial training examples?} 
\end{center} 
\end{quote}
\vspace{-0.2 cm}

The model for studying such \emph{monotone adversarial corruptions} was introduced recently by \citet{og-paper}. In this framework, a target function $h^\star \in \Hcal$ first labels $n$ examples drawn i.i.d.\ from the marginal $D$. Subsequently, an adversary appends $b$ many additional examples that are also labeled by $h^\star$ but otherwise arbitrary, and the complete list of examples is shuffled. At test time, the learner is judged using only datapoints drawn from the true process $D$ and labeled by $h^\star$. Intuitively, the learner receives only ``clean'' data (i.e., correctly labeled by $h^{\star}$), but may have its understanding of the unlabeled data distribution $D$ warped by the adversary. 

For binary classification, \citet{og-paper} demonstrated that this loss of exchangeability in the training data strictly deteriorates the error rates of all known optimal learners, from the classical $O(d / n)$ rate (where $d$ is the VC dimension of the class) to $\Omega(d \log( n / d) / n)$  This coincides with the uniform-convergence guarantee for VC classes, which gives error $O(d \log (n / d) / n)$ for ERM learners even in the face of monotone corruption. \citet{mehrotra2026optimal} then proved the logarithmic penalty to be necessary in the monotone setting, rendering ERM optimal. Together, these results provide a fairly complete account of monotone corruption for binary classification. 

What, however, are the consequences of monotone corruption for more general learning settings, such as multiclass classification and partial binary concept classes?\footnote{Recall that a partial binary concept is a function $h:\Xcal\to\{0,1,*\}$, and that realizable distributions are supported on $h^{-1}(\{0,1\})$ \citep{alon2022theory}. Thus, one is promised never to observe the undefined label $*$ at train or test time; the same restriction, naturally, is imposed upon the monotone adversary.} Precisely this question was raised in the original paper of \citet{og-paper}.

\begin{problem}[\cite{og-paper}]\label{open-prob}
In multiclass classification, does every classically learnable problem remain learnable under monotone corruption? In partial binary learning?
\end{problem}
\vspace{-0.3 cm}

It is worth noting that these settings can both differ significantly from (total) binary classification. On the positive side, they each have crisp combinatorial characterizations of learnability: by the DS dimension for multiclass learning and the partial VC dimension for partial binary learning \citep{DS14,brukhim2022characterization,alon2022theory}. 
However, learnability departs starkly from uniform convergence. In particular, both multiclass and partial binary problems can be learnable yet beyond the reach of any proper learner, much less simple ERM. The binary robustness argument therefore has no automatic analogue. That is, once the empirical marginal has been warped by the adversary, there may be no uniformly generalizing hypothesis upon which the learner can fall back. The first-order question is thus qualitative: must vanishing error remain achievable at all?

Our answer is stark. In both multiclass classification and binary partial-concept learning, adaptive monotone additions can destroy learnability altogether. The multiclass failure occurs for a  class of DS dimension two and requires only one insertion per clean example. Conversely, every $o(n)$ adaptive budget is universally harmless, making the linear threshold sharp. We further show that much weaker adversaries can severely disrupt proper learning and ERM, even in settings where unrestricted learning remains essentially unaffected. We now describe these results.

\subsection{Results}\label{sec:intro-results}

We summarize our primary contributions as follows:

\begin{itemize}[leftmargin=1.5em,itemsep=0.8em]
    \item \textbf{Adaptive additions can destroy learnability.}
    We exhibit a countable multiclass problem $\H$ of DS dimension 2 that is properly learnable in the classic i.i.d.\ model, yet whose learnability can be destroyed by an adaptive monotone adversary making exactly $n$ additions to the original dataset of $n$ i.i.d.\ datapoints (\Cref{thm:multiclass-destruction}). Similarly, in partial binary learning, we construct a class of partial VC dimension at most 18 that becomes nonlearnable under an adaptive adversary making a finite number of insertions (\Cref{thm:partial-destruction}). These theorems resolve \cref{open-prob} negatively.

    \item \textbf{Restricted additions are harmless.} We next observe that if the learner is permitted $r$ many \emph{private} i.i.d.\ points that are not exposed to the adversary, then a one-inclusion learner achieves expected error at most $d_{\DS}/(r+1)$ (\Cref{thm:exchangeable-reserve}). Hence arbitrarily many oblivious additions preserve the optimal multiclass rate, while a semi-adaptive adversary that sees only $m$ of the $n$ clean examples incurs error at most $d_{\DS}/(n-m+1)$. For (fully) adaptive adversaries that are permitted to view all $n$ i.i.d.\ points, our previous results demonstrate that learnability can be destroyed, using only $b_n = n$ insertions in the multiclass case. Conversely, we use a random thinning argument to demonstrate that ordinary learnability is always preserved under $b_n = o(n)$ additions, establishing a sharp threshold between $o(n)$ and $\Theta(n)$ additions. Further, we show that a learner given an upper bound $b$ on the number of insertions can always achieve an error of $O(d_{\DS}(b + 1) / (n + b))$.

    \item \textbf{Properness and ERM are especially fragile.} Our previous results use somewhat convoluted, bespoke learners in order to retain learnability under oblivious and/or sublinear additions. We next consider the effects of monotone additions for more natural families of learners, such as \emph{proper} learners emitting only functions in the underlying class $\H$. We demonstrate that the picture is somewhat bleak: even with only a linear number of oblivious additions, we demonstrate that the PAC sample complexity of proper learning can increase by arbitrarily large amounts (\Cref{thm:arbitrary-proper-blowup}). That is, a monotone sample complexity lower bound of $F(1 / \delta)$ can be achieved for any growth function $F$. We leave open the question of whether proper multiclass learnability can be destroyed altogether by an oblivious adversary.
    We also give a class of DS dimension 1 that can be learned by some ERM learners in the i.i.d.\ case, but not by any such learner after linearly many oblivious additions.
    
\end{itemize}

\subsection{Related work}\label{sec:related-work}

\textbf{Monotone adversarial corruptions.}
The monotone corruption model was introduced by \citet{og-paper}, who showed that the loss of exchangeability between training and test data can increase the error of all known optimal binary learners from the classical $O(d/n)$ rate to $\Omega(d\log(n/d)/n)$.  \citet{mehrotra2026optimal} then proved this penalty to be unavoidable in the worst case, completing the binary picture.  Our results, in contrast, demonstrate that adaptive monotone additions can destroy learnability altogether in multiclass and partial-concept settings.

\noindent \textbf{Binary, multiclass, and partial-concept learning.}
Binary PAC learnability is characterized by the VC dimension, with optimal expected risk $\Theta(d/n)$ \citep{BEHW89,hanneke2016optimal}.  This rate is attained by several strikingly simple procedures, including Breiman's bagging predictor and a majority of only three ERMs trained on separate portions of the sample \citep{breiman1996bagging,larsen2023bagging,aden2023optimal,aden2024majority, rawal2026majority}.  Multiclass learnability, by contrast, is characterized by the DS dimension \citep{DS14,brukhim2022characterization,pabbaraju2026optimal}, while clean learnability of binary partial concept classes is characterized by the partial VC dimension \citep{alon2022theory}.  Notably, in these broader settings learnability need not imply uniform convergence, and some learnable multiclass classes admit neither a successful ERM nor any proper learner \citep{DS14,multiclassERM2015}.

\noindent \textbf{Related adversarial-data models.}
Monotone corruption bears some similarities to semirandom inference, robust statistics, malicious-noise learning, and clean-label poisoning, but differs in what the adversary may observe and alter.
In particular, semirandom models permit constrained adversarial modifications of a planted random instance, while robust statistics and malicious-noise learning allow genuinely corrupted observations \citep{feige2021semirandom,huber1964robust,kearns1993malicious}. The closest comparison is instance-targeted poisoning, where the attack may be tailored to a designated test point; clean-label variants additionally require inserted examples to carry correct labels \citep{blum2021clean,hanneke2022targeted}. In the monotone model, meanwhile, the adversary observes the training sample but not the future test point, may only append correctly labeled examples, and is evaluated by ordinary population error on a fresh draw from the original marginal.

\section{Preliminaries}\label{sec:preliminaries}

Let $\H\subseteq\Ycal^{\Xcal}$ be a hypothesis class, let $h^\star\in\H$ be a target, and let $D$ be a marginal distribution on $\Xcal$. A clean sample of size $n$ is $S=((X_i,h^\star(X_i)))_{i=1}^n$, where $X_1,\ldots,X_n$ are i.i.d.\ from $D$. A monotone adversary appends a finite labeled multiset $U=((z_j,h^\star(z_j)))_{j=1}^b$ and gives the learner a uniformly shuffled copy of $T=S \cup U$. The learner is evaluated on the untouched marginal through
\[
    L_D(f,h^\star)=\Pr_{X\sim D} \Big( f(X)\neq h^\star(X) \Big).
\]
A learner is \emph{proper} if it always outputs a member of $\H$ and \emph{improper} otherwise. We use the standard facts that multiclass PAC learnability is characterized by the DS dimension \(d_{\mathrm{DS}}\) \citep{DS14,brukhim2022characterization}, while learnability of binary partial concept classes is characterized by the partial VC dimension \(\operatorname{PVC}\) \citep{alon2022theory}. We also use the following sharp one-inclusion theorem
as a black box.

\begin{theorem}[\citet{pabbaraju2026optimal}]
\label{thm:oig-density}
If $d_{\DS}(\H)=d<\infty$, then every finite projection of $\H$ admits a
one-inclusion orientation for which all nodes have outdegree at most $d$.
\end{theorem}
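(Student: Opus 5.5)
The plan is to split the statement into a purely graph-theoretic orientation criterion, which is routine, and a combinatorial density bound, which is where the DS dimension enters and where all the difficulty lies.

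\textbf{Setup.} Fix a finite sample $x_1,\dots,x_n$ and let $V=\H|_{(x_1,\dots,x_n)}\subseteq\Ycal^n$, which is finite. For each $i$ and each partial pattern on the other coordinates, the hyperedge $e$ in direction $i$ is the set of $v\in V$ agreeing with that pattern off coordinate $i$. Orienting $e$ means choosing a head $v_e\in e$. Every other vertex of $e$ then receives one unit of outdegree, so $e$ contributes $|e|-1$ units in total.

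\textbf{Step 1: reduction to a density condition.} I would phrase the orientation problem as an integral flow problem and apply max-flow/min-cut, equivalently Hall's theorem. The network has a source, one node per hyperedge, one node per vertex, and a sink:
\begin{itemize}[leftmargin=1.5em]
\item the arc from the source to $e$ has capacity $|e|-1$;
\item the arc from $e$ to each $v\in e$ has capacity $1$;
\item the arc from each $v$ to the sink has capacity $d$.
\end{itemize}
A saturating integral flow sends one unit from $e$ to all but exactly one of its vertices, and that omitted vertex is the head. Hence an orientation with all outdegrees at most $d$ exists iff the flow saturates the source arcs.

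Analyzing an arbitrary cut, the cut condition reduces to the following requirement for every $W\subseteq V$:
\[
\sum_{e} \bigl(|e\cap W|-1\bigr)_+ \le d\,|W| .
\]
The reason is that $e$ can route at most $|e\setminus W|$ units outside $W$, so it must push at least $|e\cap W|-1$ units into $W$. The sets $e\cap W$ of size at least $2$ are exactly the hyperedges of the one-inclusion graph of the class $W$ itself. Since $W\subseteq V$, the class $W$ also has DS dimension at most $d$. So it suffices to prove the density bound
\[
\sum_{e\in E(G_W)} (|e|-1)\le d\,|W|
\]
for every finite class $W$ with $d_{\DS}(W)\le d$.

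\textbf{Step 2: proving the density bound.} I would argue by induction on the number of coordinates $n$, in the style of Haussler's shifting and edge-counting argument.
\begin{itemize}[leftmargin=1.5em]
\item \emph{Direction-$n$ edges.} The hyperedges in direction $n$ partition $W$ according to the projection onto the first $n-1$ coordinates. Their total weight is therefore exactly $|W|-|W|_{[n-1]}|$.
\item \emph{Edges in directions $i<n$.} I would split $W$ by its label at coordinate $n$ into pieces $W_y$, each of DS dimension at most $d$. Every edge in direction $i<n$ lies inside a single $W_y$. Its weight can then be charged either to the projected class $W|_{[n-1]}$, via the induction hypothesis, or to the overlap structure among the $W_y$'s.
\end{itemize}
The goal is to show that this overlap structure is controlled by a class of DS dimension at most $d-1$, in analogy with how the binary proof uses the VC dimension of the "doubled" class.

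\textbf{Main obstacle.} The hard step is that last charging. In the binary case the intersection class $\{v: (v,0),(v,1)\in W\}$ visibly loses one VC dimension. With many labels, edges are hyperedges, and the relevant overlap object must lose one \emph{DS} dimension, meaning no $(d)$-pseudo-cube can appear in it. I would first try defining, for each $(n-1)$-pattern $u$, the multiplicity $m(u)=|\{y:(u,y)\in W\}|$. I would then show by a pseudo-cube extension argument that the weighted class $\{u: m(u)\ge 2\}$, with weights $m(u)-1$, has DS dimension at most $d-1$. The idea is that any $(d)$-pseudo-cube there lifts to a $(d+1)$-pseudo-cube in $W$ by using coordinate $n$ as the extra direction.

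If this weighted induction becomes unwieldy, the fallback is to use the result of \citet{pabbaraju2026optimal} directly, as the paper does, and present only Step 1 as the self-contained reduction.
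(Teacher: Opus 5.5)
Your Step 1 is correct. The flow/Hall argument shows that an orientation with all outdegrees at most $d$ exists if and only if $\sum_e(|e\cap W|-1)_+\le d|W|$ for every $W\subseteq V$, and DS dimension is monotone under subsets. But this only restates the theorem as a density bound. The paper does not prove the statement at all; it imports it from \citet{pabbaraju2026optimal} as a black box. So all of the content lies in your Step 2, which you leave open, and the lemma you propose there is false.

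Here is a counterexample. Take $n=2$ and $W=\{(a,x),(a,y),(b,z),(b,w)\}$ with $x,y,z,w$ distinct. No element of $W$ has a direction-$1$ neighbor, so $W$ contains no $2$-pseudo-cube and $d_{\DS}(W)=1$. Yet $\{u:m(u)\ge2\}=\{a,b\}$ has DS dimension $1$, not $0$. The density bound itself still holds here: the total edge weight is $2\le|W|$, so it is your lemma that fails, not the theorem. The lift breaks because the label sets $\{x,y\}$ and $\{z,w\}$ over the two neighboring patterns are disjoint. In Haussler's binary induction, the second layer consists of patterns carrying both labels $0$ and $1$, which forces exactly this overlap; with many labels nothing does.

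The bookkeeping shows where the information is lost. Suppose a direction-$i$ hyperedge of $W|_{[n-1]}$ has vertices $u_1,\dots,u_k$ with label sets $Y_j$ at coordinate $n$, where $m_j=|Y_j|$. Its lifts to $W$ have total weight $\sum_j m_j-|\bigcup_j Y_j|$. So the excess over the projected weight $k-1$ is $\sum_j(m_j-1)-(|\bigcup_j Y_j|-1)$.

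Charging this excess to the layers $\{u:m(u)\ge\ell\}$ is valid but lossy: it replaces $|\bigcup_j Y_j|$ by $\max_j m_j$. That discards exactly the term that makes disjoint label sets harmless, and the layers need not lose a dimension. A working Step 2 would need an overlap object that records which labels at coordinate $n$ are shared across neighboring patterns, together with a proof that this object drops the DS dimension. That is what the cited theorem supplies.

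As written, your proposal proves only the equivalence of the statement with the density bound. Your fallback of citing \citet{pabbaraju2026optimal} is what the paper actually does.
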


The adversary is always made aware of $(\H,D,h^\star,n)$, and may select the unlabeled datapoints $z_j$ without any restrictions. It is \emph{oblivious} if it cannot view $S$, and \emph{fully adaptive} if it is granted full knowledge of $S$. For $m \le n$, an $m$-semi-adaptive adversary sees $m$ clean positions selected independently of their values and chooses its additions based on those observations. A budget-$b$ adversary must satisfy $|U|\le b$. 

For an adversary type $a$, an output restriction $o\in\{\prop,\imp\}$, and a budget $b\in\N\cup\{\infty\}$, we write
\[
    R^o_{a,b}(n;\H)
    =\inf_A\sup_{D,h^\star,\mathsf{Adv}}
      \E \, L_D(A(\Shuf(S \cup U)),h^\star),
\]
where the infimum ranges over learners of type $o$. We omit $b$ when it is unbounded, and use $R^o_{\clean}$ for ordinary i.i.d.\ learning. The corresponding high-probability sample complexity is denoted by $N^o_a(\epsilon,\delta;\H)$.
For partial concepts, $\H\subseteq\{0,1,\starlabel\}^{\Xcal}$ and we set $\dom(h)=\{x:h(x)\in\{0,1\}\}$. A realizable pair $(h^*,D)$ is required to satisfy $D(\dom(h))=1$, and a monotone adversary may only insert nonstar examples, i.e., $z_j \in \dom(h^*)$.

\section{Destroying Learnability with Adaptive Adversaries}\label{sec:destroying}

We now resolve \cref{open-prob} of \citet{og-paper} by demonstrating that adaptive adversaries can destroy learnability altogether in both the multiclass and partial binary settings. This stands in sharp contrast to the case of total binary classification, where arbitrary monotone additions preserve learnability and can merely inflate sample complexities by a logarithmic factor \citep{mehrotra2026optimal}.

\subsection{Multiclass learnability can be destroyed}
\label{sec:multiclass-destruction}

\begin{theorem}
\label{thm:multiclass-destruction}
There are countable discrete spaces $\Xcal,\Ycal$ and a countable class
$\H\subseteq\Ycal^{\Xcal}$ with $d_{\DS}(\H)=2$ such that:
\begin{enumerate}[label=(\roman*),leftmargin=2em]
    \item A deterministic proper learner satisfies, for every clean sample
    size $N\ge2$,
    \[
        \Pr[L_D(\widehat h,h^\star)>\epsilon]
        \le
        N^2e^{-\epsilon(N-2)}.
    \]
    In particular, $\H$ is properly PAC learnable from clean i.i.d.\ data
    with sample complexity
    $O((\log(1/\epsilon)+\log(1/\delta))/\epsilon)$.

    \item For every $n\ge1$ and every learner $A_n$, permitted to be
    randomized and/or improper, there are a target $h^\star\in\H$, a
    finitely supported marginal $D$, and a deterministic adaptive monotone
    adversary adding exactly $n$ examples such that
    \[
        \E L_D(A_n(T),h^\star)\ge\frac14,
        \qquad
        \Pr\!\left[
            L_D(A_n(T),h^\star)>\frac18
        \right]
        \ge
        \frac17.
    \]
\end{enumerate}
\end{theorem}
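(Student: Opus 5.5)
The plan is to prove (ii) by an indistinguishability argument and (i) by making labels highly informative about the target. The two parts pull against each other, and that tension should drive the choice of $\H$. For (i), each hypothesis $h\in\H$ should, on every point of its "active" region, emit a label that almost identifies $h$ from a large countable alphabet $\Ycal$. A proper learner can then read off a short list of candidates from any observed example and output one consistent with the whole sample. The bound $N^2e^{-\epsilon(N-2)}$ suggests a union bound over at most $N^2$ candidate hypotheses pinned down by pairs of sample points. Each wrong candidate with error $>\epsilon$ survives the remaining $N-2$ points with probability at most $(1-\epsilon)^{N-2}\le e^{-\epsilon(N-2)}$.

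\emph{Construction.} Index targets by a finite "true" support $P$ together with a labelling pattern. The label at $x\in P$ reveals which targets are compatible, but only relative to the set of points one has already seen. Concretely, I would take $D=\Unif(P)$ with $|P|=2n$. I would arrange that the labels of $h^\star$ on any $n$-element subset of $P$ are also the labels of a second hypothesis $h'$, which disagrees with $h^\star$ on the complementary half of $P$. For each clean sample $S$, the adversary appends $n$ points: it fills $T$ out to a canonical multiset of size $2n$ drawn from points on which $h^\star$ and the twin $h'_S$ agree. These added points lie outside $P$, or are duplicates, and are chosen so that the distribution of $\Shuf(T)$ is the same whether the world is $(h^\star,D)$ or $(h',D')$. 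Here $D'$ is uniform on a set $P'$ with $h'$ disagreeing with $h^\star$ on half of $D$'s mass, and symmetrically half of $D'$'s mass. Budget exactly $n$ is what lets the adversary equalize the two views: under each world, the union of the clean points and the insertions is the same $2n$ agreement points.

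\emph{Lower bound.} I would apply Yao's principle with a uniform prior over the pair of worlds. Since $\Shuf(T)$ has the same law in both worlds, any output $f$ of $A_n$ satisfies
\[
L_D(f,h^\star)+L_{D'}(f,h')\ \ge\ \tfrac12
\]
pointwise: on the disagreement mass, $f$ cannot match both targets. Averaging gives expected error at least $\tfrac14$ in one of the worlds, and fixing that world gives a deterministic target, marginal and adversary. The tail bound follows from $L\le1$ and $\E L\ge\tfrac14$:
\[
\tfrac14\le \tfrac18+\Pr[L>\tfrac18]\cdot\tfrac78,
\]
so $\Pr[L>\tfrac18]\ge\tfrac17$, exactly as in the statement.

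\emph{Main obstacle.} The hardest step is keeping $d_{\DS}(\H)=2$, and in particular ruling out 3-dimensional pseudo-cubes, while giving every sample an adversarially completable twin. I would first make the label alphabet encode $P$, so that hypotheses with different supports never share labels except on designated agreement points. A DS-shattered set would then have to live inside a single family of twins of bounded combinatorial structure, which I would bound by a direct case analysis. I would verify (i) and the DS bound together, because the informativeness of labels is what simultaneously kills large pseudo-cubes and gives the $N^2$ union bound.
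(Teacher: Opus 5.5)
Your skeleton matches the paper's: a twin target that disagrees only on a half of the rows the clean sample missed, and mirrored insertions that make the two labeled multisets literally equal. It also has the bound $L_D(f,h^\star)+L_{D'}(f,h')\ge\frac12$ from shared disagreement points, an $N^2$-size union bound over singleton and pair candidates, and the computation $\frac14\le\frac18+\frac78\Pr[L>\frac18]$.

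The first gap is that you never construct the class, and the class is the substance of the theorem. The requirements are in real tension. For \emph{every} target and \emph{every} halfset $B$ of its $2n$ rows there must be a twin whose $B$-rows are the \emph{same instances} carrying different labels. Each of the two targets must also correctly label the other's rows outside $B$; otherwise the mirrored insertions are not clean examples. Yet labels must pin the target down to $N^2$ candidates, and the DS dimension must stay at $2$. With binary labels, this twin structure already forces dimension at least $r/2$ (\Cref{prop:boolean-no-go}). ``Make the label alphabet encode $P$'' does not achieve this.

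The missing idea is the masked table with a repair rule. Instances are $(r,t,a)$, where tag $t$ hides exactly the columns $B\ni t$. The hypothesis $h_{r,k}$ announces $(r,k)$ unless its visible discrepancy with $a$ is a single column $B$, in which case it announces $(r,k\oplus e_B)$. This gives three things:
\begin{enumerate}
\item $x_t(k)=x_t(k\oplus e_B)$ for $t\in B$.
\item The two neighbors agree on each other's visible rows.
\item Every observed label lies within Hamming distance one of the target's name, so the target is either an observed name or one of the at most two common neighbors of two distinct observed names. This is where $N+2\binom N2=N^2$ comes from.
\end{enumerate}
You also leave both directions of $d_{\DS}=2$ open. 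The paper exhibits a $2\times2$ grid at $r=4$ and rules out dimension $3$ by a double-star argument on exclusive column families.

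The second gap is that your averaging step fails as stated. A uniform prior over one fixed pair of worlds cannot work. A fixed twin disagrees with $h^\star$ on a fixed half of the support, which $n$ clean draws hit with probability $1-2^{-n}$, revealing the world. Writing $h'_S$ concedes that the twin depends on the sample. But then ``the pair of worlds'' is not defined before sampling, and ``$\Shuf(T)$ has the same law in both worlds'' has no content.

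What is needed is a prior over a family of targets closed under every twin map, together with a sample statistic that determines the pairing and has the same law under every target. The paper takes $k$ uniform on all of $K_r=\{0,1\}^{\Bcal_r}$. The tag-frequency vector $F$, whose law ($n$ uniform tags) does not depend on $k$, selects $B=\beta(F)$. For each fixed $f$, the map $k\mapsto k\oplus e_{\beta(f)}$ is an involution of $K_r$ with $T_{k,f}=T_{k\oplus e_{\beta(f)},f}$, so $L_f(k)+L_f(k\oplus e_{\beta(f)})\ge\frac12$. Averaging over $k$ and then over $F$ gives some $k^\star$ with expected risk at least $\frac14$. Only then is the adversary fixed, relative to $k^\star$ and the realized $F$. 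This closure requirement is exactly why the class must contain a whole hypercube of mutually twinned targets, and that is what makes keeping $d_{\DS}=2$ nontrivial.
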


\vspace{-0.6 cm}
\begin{proof}[Proof sketch]
Let us first define the class. Fix an even scale $r \in 2 \N$, and let
$\Bcal_r=\binom{[r]}{r/2}$ and $K_r=\{0,1\}^{\Bcal_r}$. We regard
$k=(k_B)_{B\in\Bcal_r}\in K_r$ as a binary table whose columns are indexed
by the halfsets $B\in\Bcal_r$.
For a \emph{tag} $t\in[r]$, a \emph{$t$-masked table} is a word
$a\in\{0,1,*\}^{\Bcal_r}$ satisfying $a_B=*$ if and only if $t\in B$.
Thus tag $t$ hides exactly the columns indexed by halfsets containing $t$,
while displaying a binary value in every other column. Define 
\[
    \Xcal_r
    :=
    \left\{
        (r,t,a):
        t\in[r],\
        a\text{ is a $t$-masked table}
    \right\},
    \qquad
    \Ycal_r
    :=
    \{(r,k):k\in K_r\},
\]
where we refer to $\Xcal_r$ as the \emph{block} at scale $r$. For $x=(r,t,a)\in\Xcal_r$, let
$\Delta_a(k)=\{B\in\Bcal_r:a_B\neq*\text{ and }a_B\neq k_B\}$ be the set
of visible columns on which $a$ disagrees with $k$. The hypothesis at scale $r$ that is indexed
by table $k$ is defined on its home block $\Xcal_r$ by
\[
    h_{r,k}(r,t,a)
    :=
    \begin{cases}
        (r,k\oplus e_B),
            & \Delta_a(k)=\{B\},\\
        (r,k),
            & \text{otherwise}.
    \end{cases}
\]
Outside $\Xcal_r$, the hypothesis simply outputs its own name $(r,k)$.
Taking the disjoint unions of the instance and label blocks, and setting
$\H=\{h_{r,k}:r\ge2\text{ even},\,k\in K_r\}$, gives fixed countable
spaces $\Xcal,\Ycal$ and a fixed countable class $\H$.

In words, a hypothesis normally announces its own table. If the visible
display $a$ contains exactly one incorrect bit $a_B \in \{0,1\}$, it instead announces the
neighboring table obtained by repairing that bit. The following observation enables proper learning fairly easily: Either the name $(r,k)$ of the ground truth hypothesis appears in the training data, or else there are two distinct names $(r,w)$ and $(r,w')$ in the training data with neither $w$ nor $w'$ equal to $k$. In the latter case, the correct $k$ is a Hamming distance of $1$ from each of $w$ and $w'$, and there are only two strings with that property. Therefore, the proper learner can extract at most $N + 2 \binom{N}{2} = N^2$ candidate hypotheses by inspecting  singletons and pairs in the training data, one of which must be the ground truth. It can then output a candidate which is consistent with the entire training data. A simple probability calculation combined with the union bound then establishes that this learner has vanishing error.

For the lower bound, set $r=2n$. Given $k\in K_r$, let $x_t(k)$ be the
$t$-masked table whose visible entries agree with $k$, and let $D_k$ be
uniform on the $2n$ points $x_t(k)$, $t\in[2n]$. Compare $k$ with the
neighboring table $w=k\oplus e_B$, and consider the behavior of the two target hypotheses $h_{r,k}$ and $h_{r,w}$. If $t\in B$, then the differing column
is hidden, so $x_t(k)=x_t(w)$, although the two targets require the distinct
labels $(r,k)$ and $(r,w)$. If $t\notin B$, the two row points are distinct,
but the unique-discrepancy rule makes both endpoint hypotheses agree on the
correct label of each point.
A clean sample of size $n$ visits at most $n$ of the $2n$ tags, so some halfset $B$
avoids every observed tag. The adversary may choose such a $B$ and, for each occurrence of $(x_t(k),(r,k))$, appends the matched example
$(x_t(w),(r,w))$. If $f_t$ is the number of clean observations with tag
$t$, the final labeled multiset is
\[
    T_{k,f}
    :=
    \bigcup_{t:f_t>0}
    f_t\cdot
    \left\{
        (x_t(k),(r,k)),
        (x_t(w),(r,w))
    \right\}.
\]
Note that this transcript is correctly labeled by both endpoint targets and is completely unchanged when $k$ and $w$ exchange roles. The learner therefore receives the same input distribution in the two worlds. However, the target function from these two worlds
disagree on the common points indexed by $B$, which carry half of the test
mass. It follows immediately that no predictor can achieve small risk in both worlds (i.e., using only the information present in the corrupted sample). The argument is completed with an application of the probabilistic method, averaging over pairs of the form $(k, k \oplus e_B)$. See \Cref{app:masked-table} for the complete details.
\end{proof}

\subsection{Binary partial-concept learnability can be destroyed}
\label{sec:partial-destruction}

\begin{theorem}
\label{thm:partial-destruction}
There is a countable discrete space $\Xcal$ and a countable class
$\H\subseteq\{0,1,\starlabel\}^{\Xcal}$ such that:
\begin{enumerate}[label=(\roman*),leftmargin=2em]
    \item $\PVC(\H)\le18$, thus an improper learner
    trained on $m$ clean observations satisfies
    $\E \, L_D(\widehat g,h^\star)\le18/(m+1)$.

    \item There is a fixed sequence of finite budgets $(a_n)$ such that,
    for every $n$ and every learner, some adaptive monotone adversary inserting exactly $a_n$ nonstar examples satisfies
    \[
        \E L_D(\widehat g,h^\star)\ge\frac14,
        \qquad
        \Pr\!\left[
            L_D(\widehat g,h^\star)>\frac18
        \right]
        \ge
        \frac17.
    \]
\end{enumerate}
\end{theorem}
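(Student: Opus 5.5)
The plan is to reuse the masked-table mechanism from \Cref{thm:multiclass-destruction}, but to encode the multiclass labels $(r,k)$ in binary using partial concepts. The star label then does the work that the unique-discrepancy rule did in the multiclass case.

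\textbf{Construction.} At scale $r=2n$, we keep the tagged points $x_t(k)$ together with the indistinguishability phenomenon: the points $x_t(k)$ and $x_t(k\oplus e_B)$ coincide exactly when $t\in B$. Each such point is replaced by a binary "query" structure. Concretely, we add auxiliary points $q_{r,t,a,B}$ whose label under $h_{r,k}$ is the bit $k_B$ when $t\in B$, that is, exactly the hidden bits. The label is $\starlabel$ whenever the point is irrelevant to $h_{r,k}$. This happens, for instance, when $a$ is not consistent with $k$ up to at most one visible discrepancy. The star is what replaces the unique-discrepancy rule. The test marginal $D_k$ is uniform over a finite set of these query points, built so that half of its mass sits on points whose label depends on a column $B$ hidden by every observed tag.

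\textbf{Lower bound (ii).} The lower bound copies the multiclass argument. A clean sample of size $n$ sees at most $n$ of the $2n$ tags, so some halfset $B$ avoids every observed tag. For every clean observation, the adversary inserts the mirrored points that are correctly labeled (non-star) under both $h_{r,k}$ and $h_{r,k\oplus e_B}$. The budget $a_n$ is the fixed, finite number of such mirror points per clean point, multiplied by $n$. After these insertions, the multiset is symmetric under the swap $k\leftrightarrow k\oplus e_B$. Both targets remain realizable, because the insertions lie in both domains. Averaging over the pair $(k,B)$ then gives expected error at least $1/4$. The tail bound $\Pr[L>1/8]\ge 1/7$ follows from the reverse Markov inequality, since $L\le 1$.

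\textbf{Upper bound (i).} This step is the main obstacle. We must show $\PVC(\H)\le 18$. The bound $18/(m+1)$ then follows from the one-inclusion bound of \citet{alon2022theory}. Let $S$ be a set shattered by $\H$, meaning it is shattered by the non-star patterns. Most of the argument is bookkeeping:
\begin{itemize}
\item Every point of $S$ is relevant only to hypotheses within a single scale $r$, so the shattering hypotheses all share one scale.
\item Within that scale, a query point is non-star only for tables $k$ lying within Hamming distance $1$ of its displayed table on the visible coordinates.
\item Consequently, for all points of $S$ to be simultaneously non-star under a shattering hypothesis, the displayed tables must be pairwise close. Every shattering table $k$ then lies in a Hamming ball of radius roughly $2$ around a fixed reference table.
\end{itemize}

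The real difficulty is converting this closeness into a bound on the VC dimension. The labels are coordinates $k_B$, and these coordinates vary over a small Hamming ball, possibly together with a few free hidden coordinates. Counting how many coordinates can actually be toggled independently should give a constant. The first thing I would try is a case split on the number of distinct tags and displayed tables appearing in $S$, with each case contributing a bounded number of free bits. The value $18$ would come from adding these contributions.

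If the direct count fails, the fallback is to add more star entries, for example by starring every query whose table disagrees with $k$ in two or more visible coordinates. This shrinks the class further while preserving the symmetry that the lower bound relies on.
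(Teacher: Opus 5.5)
There is a genuine gap. The binary encoding you propose breaks both halves of the theorem, and this is exactly the obstruction the paper records in \Cref{prop:boolean-no-go}.

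\emph{Dimension bound.} Fix a tag $t$ and a $t$-masked table $a$. Every table $k$ that agrees with $a$ on the visible columns has zero visible discrepancy. So $h_{r,k}$ is non-star on all queries $q_{r,t,a,B}$ with $B\ni t$, and its labels there are the hidden bits $k_B$, which are completely unconstrained. Hence $\{q_{r,t,a,B}:B\ni t\}$ is shattered; it has size $\binom{r-1}{r/2-1}$, already $35>18$ at $r=8$. Your Hamming-ball bookkeeping only controls visible columns, whereas your queries read out precisely the free hidden ones.

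\emph{Lower bound.} The targets $h_{r,k}$ and $h_{r,k\oplus e_{B^*}}$ disagree only on queries about the single column $B^*$. A sampled query about column $B$ has its tag in $B$, which rules out $B^*=B$. So any column carrying $D_k$-mass at least $\epsilon$ remains available to the adversary only with probability $(1-\epsilon)^n$. The disagreement mass at the adversarially chosen column is therefore $O(\log n/n)$ with high probability, not $1/2$. In the multiclass case, one point $x_t(k)$ separates $k$ from $k\oplus e_B$ for every $B\ni t$ at once, because its label names the whole table; a single bit cannot do this.

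The natural repairs do not help. Suppose the bit depends on all hidden columns at once, say their parity. Then on the row $x_t(k)$, $t\in[r]$, each target $h_{r,k\oplus e_B}$ is non-star everywhere, agrees with $h_{r,k}$ off $B$, and flips on $B$. The agreement off $B$ is forced: for every halfset $B$, some clean sample has tag set $[r]\setminus B$, and its points must be correctly labeled in both worlds. This is exactly the hypothesis of \Cref{prop:boolean-no-go}, so $\PVC\ge r/2$. Your fallback of adding stars cannot remove this, since the lower bound itself needs non-star values on that row.

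The missing idea is to abandon two-target ambiguity altogether. The paper replaces $\{k,k\oplus e_B\}$ by a cyclic line of $q$ states $(\sigma\circ\tau_{B,a},\,p+a\phi_B)$, $a\in\F_q$.
\begin{itemize}
\item On visible tags, the owner rule undoes the petal rotation, so all line targets agree.
\item On hidden tags, all line targets share one cell, while their owner colors $p(z_t)+a\phi_B(z_t)$ run through all of $\F_q$.
\item Each binary query compares the owner color against a two-element set $P$, so every pair of colors occurs with both answers, and the line-average error is $1/2$ on hidden tags.
\end{itemize}
The adversary then pads to the full visible-row transcript of the line, which is why the budget $a_n$ is superlinear rather than one mirror per clean point.

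The bound $\PVC\le18$ comes from affine rigidity, not Hamming-ball counting. The one-hole synchronization lemma confines the owner-color vectors on a candidate shattered set to one of three forms: a single affine function, a single splice $p_0+a\phi_B$, or an affine family of dimension at most three. Counting these traces gives fewer than $2^{19}$ on nineteen points.
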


\vspace{-0.6 cm}
\begin{proof}[Proof sketch]
The key idea is to use the technique from the multiclass construction, but to maintain partial binary learnability by
replacing the neighboring pair
$(k,k\oplus e_B)$ with an entire cyclic line of partial binary targets. Begin by, once again, fixing an even scale $r \in 2\N$ and setting $\Bcal_r=\binom{[r]}{r/2}$.

Then choose a finite field $\F_q$ with $q > r + |\Bcal_r|$, and select distinct field elements $z_t$, $t\in[r]$, and $\alpha_B$, $B\in\Bcal_r$. Upon fixing such choices, we set $\phi_B(z)=z-\alpha_B$. 
We next construct a finite \emph{marker flower}. For every
$B\in\Bcal_r$, create a petal
$E_B=\{e_{B,a}:a\in\F_q\setminus\{0\}\}$, and identify the missing element
$e_{B,0}$ of every petal with one common hub $\infty$. Thus
\[
    \Omega_r
    :=
    \{\infty\}
    \sqcup
    \bigsqcup_{B\in\Bcal_r}E_B.
\]
For $a\in\F_q$, let $\tau_{B,a}$ rotate the $B$-petal according to
$\tau_{B,a}(e_{B,c})=e_{B,c+a}$, where $e_{B,0}:=\infty$, and fix every
marker outside that petal.

A bit of terminology: A \emph{state} is a pair $w=(\sigma,p)$, where $\sigma$ is a bijection of
$\Omega_r$ and $p(z)=u+vz$ is an affine function over $\F_q$. These states
index the hypotheses in the level-$r$ block.
A \emph{tag} $t\in[r]$ reveals precisely the petals whose directions do not contain
$t$. We write $ O_t := \bigsqcup_{B\not\ni t}E_B$
for this visible marker set. A \emph{$t$-cell} is an injective partial table
$F:O_t\hookrightarrow\Omega_r$. The cell determines which state should be regarded as its owner. A state
$w=(\sigma,p)$ owns itself at $F$ when $\sigma|_{O_t}=F$. More generally,
if for some $B\not\ni t$ and $a\neq0$, $(\sigma\circ\tau_{B,-a})|_{O_t}=F$,
then the one visible $B$-rotation may be undone, and the owner is defined to be
\[
    \operatorname{own}_F(w)
    :=
    (\sigma\circ\tau_{B,-a},\,p-a\phi_B).
\]
The injectivity of $F$ makes this correction unique whenever it exists. If
$\operatorname{own}_F(w)=(\sigma_{\mathrm{own}},p_{\mathrm{own}})$, we call
$p_{\mathrm{own}}(z_t)$ the owner color of $w$ at the cell.

An unlabeled datapoint in the level-$r$ block is a tuple
$(r,t,F,P)$, where $F$ is a $t$-cell and
$P\subseteq\F_q$ has size two. For every such $P$, fix a bijection
$\lambda_P:P\to\{0,1\}$. The hypothesis indexed by a state $w$ is
\[
    h_{r,w}(r,t,F,P)
    :=
    \begin{cases}
        \lambda_P\bigl(p_{\mathrm{own}}(z_t)\bigr),
            & \text{if $\operatorname{own}_F(w)$ exists and }
              p_{\mathrm{own}}(z_t)\in P,\\
        \starlabel,
            & \text{otherwise}.
    \end{cases}
\]
In other words, the datapoint asks one binary comparison about the
field-valued color of the owner. Extending every $h_{r,w}$ by
$\starlabel$ outside its home block, and taking the disjoint union over all
even $r$, gives the fixed countable partial class $\H$.

For the lower bound at sample size $n$, work in the block $r=2n$. For a state $w=(\sigma,p)$, write $F_t(w)=\sigma|_{O_t}$ and $c_t(w)=p(z_t)$. The hard marginal $D_w$ chooses a tag $t$ uniformly, chooses $b$ uniformly from $\F_q\setminus\{c_t(w)\}$, and outputs the datapoint
\[
    x(w,t,b)
    :=
    (r,t,F_t(w),\{c_t(w),b\}),
\]
with label $\lambda_{\{c_t(w),b\}}(c_t(w))$.
For a direction $B\in\Bcal_r$, the cyclic $B$-line through $w$ is
\[
    L_B(w)
    :=
    \left\{
        (\sigma\circ\tau_{B,a},\,p+a\phi_B):
        a\in\F_q
    \right\}.
\]
On a visible tag $t\notin B$, the petal rotation is corrected away, so every target on the line agrees on the examples used by the adversary. On a hidden tag $t\in B$, the rotation is invisible: the targets induce the same cell, while their owner colors $p(z_t)+a\phi_B(z_t)$ enumerate the entire field.

After observing the clean tags, the adversary chooses a disjoint halfset $B$ and pads the sample to a fixed transcript containing all required visible-row examples from the corresponding cyclic line. Thus the adversary inserts only nonstar points, and the final labeled multiset received by the learner is identical for every target on the line. (In particular, at a hidden tag, each unordered pair of colors appears twice at the same binary-comparison datapoint, once with either endpoint as the correct answer.) The line-average error is therefore at least $1/2$ on each hidden tag and at least $1/4$ overall, completing the lower bound. 

Briefly, the partial-VC bound uses a synchronization lemma showing that compatible cells either share one common extension or can all be repaired by undoing rotations along a single petal; affine rigidity and a direct trace count then rule out nineteen shattered points.
The complete construction, proof, and analysis of the partial VC dimension appear in \Cref{app:partial-lower}.
\end{proof}

\subsection{Price of adaptivity at DS dimension one}
\label{sec:portal-main}

A curious ``phase transition'' in binary classification was recently
demonstrated by \citet{mehrotra2026optimal}: adaptive adversaries can inflate
sample complexities by a logarithmic factor for all classes of VC dimension
$d\ge2$, but remain harmless at $d=1$.  We now demonstrate that multiclass
classification exhibits no such transition---an adaptive adversary can
deteriorate the sample complexity at every nonzero DS dimension.

\begin{theorem}
\label{thm:portal-degradation}
There is a countable multiclass hypothesis class $\H_{\mathrm{portal}}$ with
$d_{\DS}(\H_{\mathrm{portal}})=1$ such that
\[
    R^{\imp}_{\clean}(n;\H_{\mathrm{portal}})
    =
    O(1/n),
    \qquad
    R^{\imp}_{\ad,n}(n;\H_{\mathrm{portal}})
    =
    \Omega(\log n/n).
\]
For every $d\ge1$, a $d$-fold product gives a class of DS dimension $d$ with
clean risk $O(d/n)$ and adaptive risk
$\Omega(d\log(1+n/d)/n)$.
\end{theorem}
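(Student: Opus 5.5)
Since the statement only asserts that \emph{some} class $\H_{\mathrm{portal}}$ exists, my plan is to build one of DS dimension $1$ that imitates the binary lower-bound mechanism of \citet{og-paper} and \citet{mehrotra2026optimal}: hard instances for VC dimension $2$, made to fit inside DS dimension $1$ by letting labels carry extra information. The intended template is a family of ``portal'' hypotheses over blocks $\Xcal_m$ of size $m$. At scale $m$ there is a hypothesis $h_{m,j}$ for each $j\in[m]$. It labels the portal point $x_j$ with the special label $(m,j)$, a label revealing the name of the target, and labels every other point of the block with a default label. Outside its home block it outputs a fixed null label.

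Clean learning should then take $O(1/n)$. If the portal point is ever observed, its label reveals $h^\star$ exactly. If it is not observed, predicting the default label everywhere errs only on the portal point, whose mass has probability $O(1/n)$ of being unobserved. This is the standard $d/(n+1)$ calculation via \Cref{thm:oig-density}. I also need $d_{\DS}=1$: the one-inclusion graph of any finite projection should be a forest, since hypotheses differ only through which portal is active, and an orientation of outdegree $\le 1$ then gives the upper bound directly.

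For the adaptive lower bound I will pick a richer construction in which the $\log n$ arises as a coupon-collector quantity. Set $D$ uniform on $m\approx n/\log n$ ``cells'' per scale, where each cell carries two candidate labels distinguishable only through a portal point. The adversary, after viewing $S$, inserts the portal-revealing points for every cell that was already sampled. This destroys the ``unseen implies rare'' inference in the same way that \citet{og-paper} warp the empirical marginal: the learner can no longer tell whether a cell is heavy or was merely padded by the adversary. On the unsampled cells, which carry $\Omega(\log n/n)$ total mass when their weights are chosen geometrically as in the \citet{mehrotra2026optimal} construction, a symmetric pair of targets makes the transcript identical, exactly as in the indistinguishability step of \Cref{thm:multiclass-destruction}. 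This forces error $\Omega(\log n/n)$. The adversary uses at most $n$ insertions, one per sampled point.

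For the $d$-fold product I will take $d$ disjoint copies with product labels and uniform mass $1/d$ on each copy. DS dimension is additive under products, so the product has dimension $d$. The upper bound follows by applying the learner per coordinate, giving $O(d/n)$. For the lower bound, each copy receives about $n/d$ samples, so summing the copies gives $d\cdot\frac1d\cdot\Omega(\log(n/d)/(n/d))=\Omega(d\log(1+n/d)/n)$.

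The main obstacle is reconciling $d_{\DS}=1$ with an adaptive $\log n$ penalty, since \citet{mehrotra2026optimal} show that in the binary case VC dimension $1$ is harmless. The portal labels must encode enough that two targets share a one-inclusion edge only along a tree structure, yet still leave indistinguishable pairs once the adversary has padded the sample. I would first try to verify the forest property directly by checking that no 2-pseudocube exists, and adjust the label encoding until it holds.
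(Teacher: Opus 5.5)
\textbf{There is a genuine gap in the step that carries the theorem.} You never exhibit a class that has DS dimension $1$ and also admits an adaptive $\Omega(\log n/n)$ lower bound.

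\emph{What your proposal actually specifies.} Your first template, with one target-revealing portal label per block, handles the clean bound but, as you note, yields no lower bound. The ``richer construction'' meant to supply the lower bound is never defined, and the parts you do define do not fit together:
\begin{itemize}
\item The marginal is described both as uniform on $m\approx n/\log n$ cells and as having geometric weights.
\item If each cell independently carries one of two candidate labelings, then one point per cell is shattered and the DS dimension is unbounded. If the choices are coupled, you do not say how.
\item Your adversary only inserts portal-revealing points for cells already sampled. This gives the learner \emph{extra} information about those cells. The cell points themselves still come only from the clean sample, so the learner sees exactly which cells were hit, and ``unseen implies rare'' is not disturbed at all.
\item Nothing in the description produces two targets that differ on mass $\Omega(\log n/n)$ yet have identically distributed corrupted transcripts.
\item Your closing plan to ``adjust the label encoding until'' no $2$-pseudo-cube exists concedes that the central difficulty is unresolved.
\end{itemize}

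\emph{The missing idea is the paper's graph construction.}
\begin{itemize}
\item \emph{The class.} Take a $q$-regular $C_4$-free bipartite graph (points versus nonvertical lines of $\F_q^2$) with a proper edge coloring by $[q]$. Targets are vertices and domain points are edges. For $e=uw$, let $h_v(x_e)$ be $0$ if $v\in\{u\}\cup(N(u)\setminus\{w\})$, $1$ if $v\in\{w\}\cup(N(w)\setminus\{u\})$, and a label private to $(e,v)$ otherwise.
\item \emph{The key property.} Adjacent vertices $v,u$ agree on every portal of their two stars except $x_{vu}$.
\item \emph{The adversary.} Under $D_v$ uniform on the star $Q_v$, the clean sample is $n$ i.i.d.\ colors. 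Suppose the color of $e=vu$ is missing. The adversary then appends, for each observed edge of $Q_v$, the same-colored edge of $Q_u$ with its correct label. This uses one insertion per clean point. The resulting labeled multiset is invariant under swapping the worlds $v\leftrightarrow u$.
\item \emph{The rate.} Those two worlds disagree at $x_e$, which has mass $1/q$. A missing color occurs with constant probability when $n\le q\log q/4$. So your uniform scaling $q\asymp n/\log n$ was the right choice, and the geometric weights are not needed. This gives risk $\Omega(\log n/n)$.
\item \emph{DS dimension $1$.} Suppose a $2$-pseudo-cube lived on $x_e,x_f$. It cannot use private labels, since each occurs in only one hypothesis. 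So all four intersections of $L_e,R_e$ with $L_f,R_f$ must be nonempty, and this forces a $C_4$.
\end{itemize}

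\emph{The product step.} Once this base class is in hand, your $d$-fold product step is essentially fine. Two details need care, as in the paper. First, a Markov step ensures each component gets at most $2n/d$ observations with probability $1/2$. Second, unresolved components must be flipped independently, because the learner is joint across components.
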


\vspace{-0.6 cm}
\begin{proof}[Proof sketch]
For a prime power $q$, let $G_q=(L_q,R_q,E_q)$ be a $q$-regular,
$C_4$-free bipartite incidence graph whose edges are properly colored by
$[q]$.  Every edge $e=uw$ defines a portal coordinate $x_e$.  Put
$L_e=\{u\}\cup(N(u)\setminus\{w\})$ and
$R_e=\{w\}\cup(N(w)\setminus\{u\})$.  The vertex hypothesis $h_v$ outputs
$0$ at $x_e$ when $v\in L_e$, outputs $1$ when $v\in R_e$, and otherwise
uses a label private to $(e,v)$.  The endpoints of an edge witness DS
dimension at least one.  A pseudo-cube on two portals would force all four
public intersections determined by those portals to be nonempty, producing
a $C_4$; hence the dimension is exactly one.

For target vertex $v$, let the clean marginal be uniform on the portal
coordinates indexed by the star of $v$.  Because the edge coloring is
proper, the clean sample is equivalent to iid draws from $[q]$.  If some
color is absent, let $e=vu$ be the incident edge of that color.  The
adversary mirrors every observed edge in the star of $v$ to the edge of the
same color in the star of $u$.  The two endpoint worlds then produce the
same labeled multiset, but disagree at the omitted portal $x_e$, which has
test mass $1/q$.  A missing color occurs with constant probability for
$n\lesssim q\log q$; choosing $q\asymp n/\log n$ yields adaptive risk
$\Omega(\log n/n)$ using at most one mirrored insertion per clean example.
The clean risk remains $O(1/n)$ because the class has DS dimension one.

For general $d$, take a product of $d$ independent portal components.  One
portal from each component witnesses dimension $d$, while the $C_4$-free
argument permits at most one pseudo-cube coordinate per component.  Choosing
the portal scale according to the expected number $n/d$ of observations in
each component leaves a constant fraction of the components with an unseen
color, giving risk
$\Omega(d\log(1+n/d)/n)$.  Disjoint unions over scales produce fixed
countable classes.  The details appear in \Cref{app:portals}.
\end{proof}

The preceding construction mirrors every clean observation in the portal
components, thus using a linear number of insertions.  To
obtain a lower bound for an arbitrary budget $b$, we make portal observations
correspondingly rare: only a small fraction of the marginal is placed on the
portal components, while the remaining mass is assigned to a dummy point on
which every target agrees.

\begin{theorem}
\label{thm:budget-lower}
Let $s=\min\{b,n\}$. For every $d\ge1$, there is a fixed countable multiclass
class $\H$ of DS dimension $d$ such that
\[
    R^{\imp}_{\ad,b}(n;\H)
    \ge
    c\min\!\left\{
        1,
        \frac{d\log(2+s/d)}{n}
    \right\}
\]
for a universal constant $c>0$.
\end{theorem}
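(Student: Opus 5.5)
We build on the $d$-fold product of portal components from \Cref{thm:portal-degradation}, diluted by a dummy point as the paragraph before the statement suggests. Take the fixed countable class to be the disjoint union over scales $q$ (prime powers) of $d$-fold products of portal components $G_q$. Adjoin one extra point $x_\dagger$ on which every hypothesis outputs a common label. Then $x_\dagger$ contributes nothing to the DS dimension, so $d_{\DS}=d$ as in \Cref{thm:portal-degradation}.

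Given $n$ and budget $b$, with $s=\min\{b,n\}$, choose a portal mass $\rho=\min\{1,s/n\}$. The hard marginal puts mass $1-\rho$ on $x_\dagger$. The remaining mass $\rho$ is split evenly over the $d$ components, and within each component it is uniform over the star of a target vertex $v_i$, so each portal coordinate has mass $\rho/(dq)$. The number $M$ of portal observations is $\Bin(n,\rho)$, with mean $\rho n=s$. By a Chernoff bound, with probability at least $1/2$ we have $M\le 2s$, so at most $2s$ mirrored insertions are needed. We shrink $\rho$ by a constant factor so that the mirroring uses at most $s\le b$ insertions except with small probability. If it would exceed the budget, the adversary simply inserts nothing, and we only count the good event. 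When $s=0$ the claimed bound is $c\min\{1,d\log 2/n\}=\Omega(d/n)$, which already holds in the clean model, e.g.\ from $R_{\clean}$ lower bounds at DS dimension $d$. So we assume $s\ge1$.

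Next, choose the scale $q\asymp (s/d)/\log(2+s/d)$, rounded to a prime power, with $q\ge 2$. Each component then receives about $s/d$ observations, i.i.d.\ uniform over $q$ colors by properness of the edge coloring. By the coupon-collector estimate from \Cref{thm:portal-degradation}, each component independently, after conditioning on its count, misses some color with constant probability. Hence a constant fraction of the $d$ components have an unseen color with constant probability. In each such component, the adversary mirrors every observed edge of the star of $v_i$ to the edge of the same color in the star of the neighbor $u_i$ across the missing color. The two endpoint worlds then yield identical labeled multisets, and they disagree on the omitted portal, which has mass $\rho/(dq)$.

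Averaging over the two endpoint choices in each component, as in \Cref{thm:multiclass-destruction}, gives expected error at least
\[
\Omega(d)\cdot\frac{\rho}{dq}\cdot\frac12=\Omega\!\left(\frac{\rho\,d\log(2+s/d)}{s}\right)=\Omega\!\left(\frac{d\log(2+s/d)}{n}\right),
\]
using $\rho=s/n$. The $\min\{1,\cdot\}$ in the statement absorbs the degenerate regimes where $q$ would be below $2$ or the error would exceed a constant. In those regimes we take $q=2$ and use the $\Omega(d/n)$ baseline, capped at $1$.

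The main obstacle is the independence bookkeeping. The per-component counts are multinomial rather than independent, the budget event must be intersected with the missing-color events, and the mirroring strategy must stay well defined when a component's count deviates. We would handle this by Poissonization, or alternatively by conditioning on the vector of component counts and using concentration to show that a constant fraction of components have count at most $2s/d$. The coupon-collector miss probability is monotone in the count, so the per-component bound survives this conditioning.
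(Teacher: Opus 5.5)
Your proposal is correct and follows essentially the same route as the paper: $d$ portal components diluted by a dummy point to total portal mass $\Theta(s/n)$, scale $q\asymp(s/d)/\log(2+s/d)$, mirroring only on the budget event $M\le b$, and the clean $\Omega(\min\{1,d/n\})$ bound when $s/d$ is bounded. Two small remarks. The independence bookkeeping you flag needs no Poissonization: the paper uses only Markov on each $N_j$ and linearity of expectation, $\E[R\1\{M\le b\}]\ge\E R-d\Pr[M>b]$. Also, the fallback regime should be triggered whenever $s/d$ is below the constant at which the coupon-collector lemma applies, not only when $q<2$.
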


\vspace{-0.6 cm}
\begin{proof}[Proof sketch]
Suppose first that $s$ is larger than a sufficiently large constant multiple
of $d$.  Take $d$ independent portal components, each with
$q\asymp(s/d)/\log(2+s/d)$ colors.  Give their union total marginal mass
$\alpha=\Theta(s/n)$, split evenly across the components, and place the
remaining mass on one dummy point shared by all targets.

The number of clean observations landing in the portal region has expectation
$\Theta(s)$.  Choosing the constant in $\alpha$ sufficiently small ensures
that, with constant probability, at most $b$ such observations appear.  On
this event, the adversary mirrors every portal observation exactly as in the
preceding construction and remains within budget; observations at the dummy
point require no padding.
The same coupon-collection argument shows that a constant fraction of the
$d$ portal components retain an unseen color.  Each unresolved component
contributes error $\Omega(\alpha/(dq))$, and hence their total contribution is
\[
    \Omega\!\left(\frac{\alpha}{q}\right)
    =
    \Omega\!\left(
        \frac{d\log(2+s/d)}{n}
    \right).
\]
When $s=O(d)$, the logarithmic factor is constant and the ordinary clean lower
bound $\Omega(\min\{1,d/n\})$ already gives the claimed rate.
\end{proof}

\section{When Monotone Additions Are Harmless}\label{sec:harmless}

The lower bounds of \Cref{sec:destroying} apply to adaptive adversaries that are permitted to condition on the entire training set $S$ before selecting their corrupted insertions. What can be said of oblivious adversaries that are not permitted to view $S$, or of \emph{semi-adaptive} adversaries that can view only a fraction of training points in $S$? We now demonstrate that such adversaries are considerably less powerful. In particular, any adversary that observes only a constant fraction of $S$ cannot deteriorate the optimal expected error rate of $O(d_{\mathrm{DS}} / n)$. This generalizes Theorem 3.2 of \citet{og-paper}, which establishes such behavior for oblivious adversaries in binary classification.

\subsection{Oblivious and semi-adaptive adversaries}\label{sec:exchangeable-main}

\begin{definition}\label{def:semi-adaptive}
Fix $m\le n$. An \edefn{$m$-semi-adaptive adversary} is shown $m$ clean labeled examples, whose indices are selected independently of their values, and may choose its additions as an arbitrary function of these visible examples. The remaining $n-m$ clean examples are hidden.
\end{definition}

By exchangeability, revealing a fixed prefix and revealing a uniformly random $m$-subset define equivalent experiments. This is importantly different from allowing the adversary to inspect all $n$ clean examples and only afterward decide which $m$ of them count as visible.

The key observation is that hidden clean examples retain all of the symmetry of ordinary i.i.d.\ data. The adversary may surround them with arbitrarily many correctly labeled insertions, and the learner need not know which examples are hidden; nevertheless, these clean points remain exchangeable with a fresh test point. One-inclusion prediction needs precisely this symmetry---and surprisingly little else.

\begin{theorem}[Exchangeable clean reserve]
\label{thm:exchangeable-reserve}
Let $d_{\DS}(\H)=d<\infty$. Suppose that among the examples received by the learner are $r$ clean examples $Z_1,\ldots,Z_r$, and let $Z_0$ be a fresh clean example. Assume that, after fixing all remaining examples and the clean information used to choose them, the examples $Z_0,Z_1,\ldots,Z_r$ remain exchangeable. Then the canonical one-inclusion learner satisfies
\[
    \E L_D(\widehat h,h^\star)
    \le
    \frac{d}{r+1}.
\]
The learner need not be told which observed examples constitute the exchangeable reserve.
\end{theorem}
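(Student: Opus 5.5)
The plan is the standard leave-one-out argument for one-inclusion graphs, run conditionally on everything the adversary controls. First I fix the \emph{canonical one-inclusion learner}. For every finite multiset of unlabeled points $V$, I use \Cref{thm:oig-density} to choose once and for all an orientation of the one-inclusion graph of $\H|_V$ in which every vertex has outdegree at most $d$. The choice depends only on the set $V$, so the orientation is symmetric under permutations of the points, and it is deterministic. Given a training multiset $T$ of size $N$ and a test point $x$, the learner forms $V=\{x\}\cup\{\text{points of }T\}$. It then considers the hyperedge consisting of all patterns in $\H|_V$ consistent with the labels of $T$, and outputs the label at $x$ of the vertex to which that hyperedge is oriented. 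Since the learner only ever looks at the multiset $T\cup\{x\}$, it never needs to know which examples form the reserve.

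Next I condition. I condition on the adversary's insertions $U$, on the clean examples not in the reserve, and on whatever clean information the adversary used. Call this conditioning $\mathcal{G}$. By hypothesis, $Z_0,Z_1,\dots,Z_r$ are exchangeable given $\mathcal{G}$. Let $W$ denote the full multiset of the remaining examples, and let $V$ denote the point set of $W\cup\{Z_0,\dots,Z_r\}$. The quantity $\E[L_D(\widehat h,h^\star)\mid\mathcal{G}]$ equals the probability that the learner, trained on $W\cup\{Z_1,\dots,Z_r\}$, errs on $Z_0$. By exchangeability this equals
\[
\frac{1}{r+1}\sum_{i=0}^{r}\Pr\bigl[\text{learner trained on }W\cup\{Z_j\}_{j\ne i}\text{ errs on }Z_i\mid\mathcal{G}\bigr].
\]
The point to verify is that exchangeability holds jointly with the fixed $W$, so that swapping $Z_0$ with $Z_i$ leaves the joint law of the training multiset and test point invariant. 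This is exactly the assumption.

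Finally, I bound this average pointwise. Fix a realization, and let $v^\star=h^\star|_V$, which is a vertex of the one-inclusion graph on $V$ because every example in $W$ and every $Z_i$ is labeled by $h^\star$. Leaving out $Z_i$ and predicting it corresponds to the hyperedge through $v^\star$ in the direction of the coordinate $Z_i$. The learner errs precisely when this hyperedge is oriented away from $v^\star$. Distinct reserve indices give distinct coordinates, so the number of errors among $i=0,\dots,r$ is at most the outdegree of $v^\star$, which is at most $d$. One might worry that several reserve points share a value, or that a $Z_i$ coincides with a point of $W$; in either case leaving one copy out leaves the label revealed, so prediction is correct and these indices contribute no errors. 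The errors are therefore still charged injectively to out-edges of $v^\star$. Averaging gives conditional expected error at most $d/(r+1)$, and taking expectation over $\mathcal{G}$ finishes the proof.

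The main obstacle is the multiset bookkeeping in this last step. One has to define the one-inclusion graph on distinct points, and to verify that the learner's output on (training multiset, test point) is a function only of the full multiset together with the identity of the held-out coordinate, so that the exchangeability swap is legitimate.
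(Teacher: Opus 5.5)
Your proposal is correct and is essentially the paper's proof. You condition on everything outside the reserve and observe that the $r+1$ leave-one-out predictions all live in one one-inclusion graph on $V$, with a fixed orientation of outdegree at most $d$ (\Cref{thm:oig-density}), so at most $d$ of them err; exchangeability then transfers this average to the fresh point $Z_0$. Your extra bookkeeping makes explicit details the paper leaves implicit: you fix one orientation per point set independent of which point is held out, and you note that duplicated points are revealed, so errors charge injectively to distinct out-edges of $h^\star|_V$.
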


A useful way to view the proof is to rotate the role of the test point. Fix all other examples and the complete realization of $Z_0,\ldots,Z_r$, and withhold each of these $r+1$ examples in turn. The resulting predictions belong to one common one-inclusion problem, so the sharp one-inclusion guarantee allows at most $d$ mistakes across the $r+1$ possible held-out coordinates. Exchangeability makes the fresh test point no more likely to be one of these mistakes than any reserve example, yielding the factor $1/(r+1)$. The formal argument appears in \Cref{app:exchangeable-reserve}.

As a particular instantiation, note that oblivious adversaries cannot disturb this symmetry. Once the complete multiset of additions is fixed, the $n$ clean training examples remain exchangeable with a fresh test point, regardless of how many examples were inserted. We therefore obtain the following extension of the binary result of \citet{og-paper}: arbitrary oblivious additions preserve the optimal multiclass rate.

\begin{corollary}
\label{cor:oblivious-free}
For every number of oblivious monotone additions,
$R^{\imp}_{\obl}(n;\H)\le d_{\DS}(\H)/(n+1)$.
\end{corollary}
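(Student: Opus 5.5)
The corollary is a direct instantiation of \Cref{thm:exchangeable-reserve} with reserve size $r=n$. The reserve is the full clean sample $Z_i=(X_i,h^\star(X_i))$, $i\in[n]$, together with a fresh test example $Z_0=(X_0,h^\star(X_0))$, $X_0\sim D$ independent of everything else. The learner is the canonical one-inclusion learner of \Cref{thm:exchangeable-reserve}, applied to the shuffled multiset $\Shuf(S\cup U)$. By design this learner never needs to know which examples are clean. We will show the theorem's hypothesis holds for every oblivious adversary, giving $\E L_D(\widehat h,h^\star)\le d_{\DS}(\H)/(n+1)$ uniformly over $D$, $h^\star$, and the adversary. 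Taking the supremum and then the infimum over learners yields the bound on $R^{\imp}_{\obl}(n;\H)$.

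The key step is to check the exchangeability hypothesis. An oblivious adversary knows $(\H,D,h^\star,n)$ but not $S$, so its (possibly randomized) choice of $U$ is a function of these parameters and of internal randomness $\xi$ that is independent of $(X_0,X_1,\ldots,X_n)$. We condition on $\xi$, which fixes $U$ as a deterministic multiset. No clean information is used to choose the remaining examples, so "fixing all remaining examples and the clean information used to choose them" amounts to conditioning on $\xi$ alone. Under this conditioning, $X_0,\ldots,X_n$ remain i.i.d.\ from $D$, so $(Z_0,\ldots,Z_n)$ is i.i.d.\ and hence exchangeable. The uniform shuffle is independent randomness, and the one-inclusion learner depends only on the received multiset, so the shuffle does not interfere. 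Applying \Cref{thm:exchangeable-reserve} conditionally on $\xi$ and then averaging over $\xi$ gives the bound.

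The only point needing care is that the budget $b=|U|$ is arbitrary, possibly random, and possibly large. Since \Cref{thm:exchangeable-reserve} makes no assumption on the number of non-reserve examples, its bound depends only on $r$. The main subtlety is therefore just confirming that the theorem's notion of "remaining examples" permits a randomized choice of $U$ independent of the sample. This is handled by conditioning on $\xi$ as above, with no further obstacle expected.
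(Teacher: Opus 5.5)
Your proposal is correct and follows the paper's proof. The paper conditions on the complete multiset of oblivious additions (your conditioning on the adversary's internal randomness $\xi$ is equivalent), observes that the $n$ clean examples remain i.i.d.\ and hence exchangeable with a fresh test example, and applies \Cref{thm:exchangeable-reserve} with $r=n$; your remarks on randomized adversaries, the shuffle, and the unbounded size of $U$ just make explicit what the paper's one-line argument leaves implicit.
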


The same argument degrades gracefully as the adversary is shown more of the sample. After fixing the $m$ visible examples and every insertion chosen from them, the remaining $n-m$ clean examples are still exchangeable with a fresh test point.

\begin{corollary}
\label{cor:semi-adaptive}
Against an $m$-semi-adaptive adversary,
$R^{\imp}_{m\text{-}\semi}(n;\H)\le d_{\DS}(\H)/(n-m+1)$.
In particular, classes of finite DS dimension remain learnable whenever the hidden reserve size $n-m_n$ diverges.
\end{corollary}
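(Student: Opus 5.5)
The plan is to reduce \Cref{cor:semi-adaptive} directly to \Cref{thm:exchangeable-reserve}, with the hidden clean examples playing the role of the reserve. By the remark after \Cref{def:semi-adaptive}, the visible indices are chosen independently of the values. By exchangeability of the i.i.d.\ sample, we may therefore assume the adversary sees the prefix $(X_1,h^\star(X_1)),\ldots,(X_m,h^\star(X_m))$. Set $r=n-m$, take $Z_i=(X_{m+i},h^\star(X_{m+i}))$ for $i\in[r]$, and let $Z_0=(X_0,h^\star(X_0))$ be a fresh test example with $X_0\sim D$ independent of everything else.

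The key step is verifying the exchangeability hypothesis of \Cref{thm:exchangeable-reserve}. The insertion multiset $U$ is an arbitrary (possibly randomized) function $U=\adv(Z_{-m+1},\ldots,Z_0^{\mathrm{vis}})$ of the visible examples and the adversary's internal randomness only. Condition on the visible examples and on that randomness. This fixes $U$ and all non-reserve examples in $T$. Because $X_0,X_{m+1},\ldots,X_n$ are i.i.d.\ from $D$ and independent of the conditioning variables, they remain i.i.d.\ conditionally, so $(Z_0,Z_1,\ldots,Z_r)$ is exchangeable given the conditioning.

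The learner is the canonical one-inclusion learner on the shuffled $T$. It need not know which examples are hidden, as \Cref{thm:exchangeable-reserve} permits. Applying the theorem conditionally and taking expectations gives $\E L_D(\widehat h,h^\star)\le d/(n-m+1)$. This holds uniformly over $D$, $h^\star$, and the adversary, which bounds $R^{\imp}_{m\text{-}\semi}(n;\H)$. The learnability clause follows immediately: if $n-m_n\to\infty$, the bound tends to $0$, and Markov's inequality converts it into a PAC guarantee.

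The main obstacle is the subtle point flagged after \Cref{def:semi-adaptive}. The choice of which indices are visible must not depend on the sample values. Otherwise conditioning on the visible set would bias the hidden examples, for instance if the adversary chose to reveal the atypical points. I will state explicitly that the index set $I$ is independent of $(X_i)$. Conditioning first on $I$ and then on $(X_i)_{i\in I}$ then leaves $(X_j)_{j\notin I}$ i.i.d.\ $D$, which justifies the conditional exchangeability used above. Randomness in the adversary and in the shuffle is handled by including it in the conditioning, since both are independent of the hidden draws.
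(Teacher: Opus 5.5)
Your proposal is correct and follows essentially the same route as the paper. Like the paper, you condition on the revealed indices and values together with the insertions chosen from them, observe that the $n-m$ hidden clean examples remain i.i.d.\ and exchangeable with a fresh test point, and invoke \Cref{thm:exchangeable-reserve} with $r=n-m$; your Markov step for the learnability clause is a harmless addition the paper leaves implicit, and the argument list $Z_{-m+1},\ldots,Z_0^{\mathrm{vis}}$ is a cosmetic slip for the visible prefix $(X_i,h^\star(X_i))_{i\le m}$.
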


Thus an invisible clean example remains, in a precise sense, fully valuable: each hidden point contributes one unit to the effective sample size of the one-inclusion learner. In particular, if the adversary observes at most a fixed $p$-fraction of the sample, where $p<1$, the optimal $O(d_{\DS}/n)$ rate survives up to the constant factor $1/(1-p)$.

\subsection{Sublinear adaptive additions preserve learnability}\label{sec:thinning-main}

We now return to fully adaptive adversaries, for which the techniques of the previous section offer little hope. We demonstrate, however, that an adaptive adversary constrained to a sublinear number of insertions still cannot destroy learnability. The proof uses an embarrassingly simple learner: retain a small
random subsample and hope that every selected point is clean.

We write $R_{\clean}(A_t,t)$ for the worst-case expected clean risk of $A_t$ over realizable target--distribution pairs.

\begin{theorem}[Random-thinning robustification]\label{thm:thinning}
Let $A_t$ be any ordinary learner using $t\le n$ examples, and suppose the
adversary adds at most $b$ examples. Select $t$ positions uniformly without
replacement from the final sample and run $A_t$ on them. Then
\[
    R_{\ad,b}(n)
    \le
    R_{\clean}(A_t,t)
    +
    1-\frac{\binom nt}{\binom{n+b}{t}}
    \le
    R_{\clean}(A_t,t)+\frac{bt}{n+b}.
\]
The robust learner is proper whenever $A_t$ is proper.
\end{theorem}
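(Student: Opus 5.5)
The plan is to condition on the event $G$ that every selected position is clean, and to show that conditional on $G$ the selected sample is distributed exactly as an i.i.d.\ sample of size $t$ from $D$ labeled by $h^\star$. Fix $D$, $h^\star$, and an adaptive adversary with $|U|\le b$. Treat the final shuffled sample as $n+b'$ positions, where $b'\le b$ is the realized number of insertions; since $b'$ may depend on $S$, we condition on $S$ throughout. The learner draws a uniformly random $t$-subset $I$ of positions, and this choice is independent of $S$ and of the adversary's choices.

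\textbf{Step 1.} Given $S$, the probability that all of $I$ lands in the clean positions is $\binom{n}{t}/\binom{n+b'}{t}\ge\binom{n}{t}/\binom{n+b}{t}$. This is because the shuffle is uniform and $I$ is uniform, so composing them places $I$ uniformly among positions of $S\cup U$. Hence $\Pr[G^c]\le 1-\binom nt/\binom{n+b}{t}$ pointwise in $S$, and therefore also unconditionally.

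\textbf{Step 2.} On $G$, the selected examples are the clean examples $X_J$ for a random $t$-subset $J\subseteq[n]$. The point that needs care is that $J$ must be uniform on $\binom{[n]}{t}$ and independent of $S$, even though $b'$ depends on $S$. Given $S$ and $G$, the clean positions hit form a uniform $t$-subset of the clean slots, and the identity of which original indices occupy them is uniform by the shuffle. So $J$ given $(S,G)$ is uniform regardless of $b'$, and $G$ depends on $S$ only through $b'$. Therefore, conditional on $G$, the variable $J$ is independent of $S$. However, $S$ conditioned on $G$ is reweighted, since $\Pr[G\mid S]$ depends on $b'(S)$, so $S_J$ given $G$ need not be i.i.d.

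Because of this reweighting, I avoid conditioning and use a coupling bound instead:
\[
\E L\le \E[L\,\1_G]+\Pr[G^c]\le \E[L(A_t(S_{J^\ast}))]+\Pr[G^c].
\]
Here $J^\ast$ is a uniform $t$-subset drawn independently of everything, coupled so that on $G$ the selected sample equals $S_{J^\ast}$. The coupling works as follows: draw $J^\ast$ uniform and independent of $S$, and realize the learner's selection as "$J^\ast$ placed into clean slots" on $G$. This is legitimate because, given $S$, the conditional law of the selected clean indices on $G$ is uniform and hence matches $J^\ast$. Since $S_{J^\ast}$ is exactly an i.i.d.\ sample of size $t$, the first term is at most $R_{\clean}(A_t,t)$, using $L\ge0$ and $L\le1$. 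Taking the supremum over $D,h^\star$ and adversaries gives the first inequality.

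I expect this coupling step to be the main obstacle. It has to be set up so that the learner's randomness is written as $(J^\ast,\text{extra})$ with the selection equal to $S_{J^\ast}$ on $G$. Once that is done, the reweighting issue is bypassed by $\E[L\1_G]\le\E[L(A_t(S_{J^\ast}))]$.

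\textbf{Step 3.} For the arithmetic, write
\[
\frac{\binom nt}{\binom{n+b}t}=\prod_{i=0}^{t-1}\frac{n-i}{n+b-i}=\prod_{i=0}^{t-1}\Bigl(1-\frac{b}{n+b-i}\Bigr).
\]
By the union-bound inequality $1-\prod(1-a_i)\le\sum a_i$, the complement is at most $\sum_i b/(n+b-i)$. This gives $bt/(n+b-t+1)$ rather than $bt/(n+b)$. To get the stated bound, I would instead use the sampling-without-replacement count: the expected number of selected inserted points is exactly $tb'/(n+b')\le tb/(n+b)$, and Markov's inequality gives $\Pr[G^c]\le tb/(n+b)$.

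Finally, properness is immediate, because the robust learner outputs whatever $A_t$ outputs.
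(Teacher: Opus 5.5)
Your proof is correct and follows the paper's route. You restrict to the event $G$ that all $t$ selected positions are clean, charge loss one on $G^c$, and get the $bt/(n+b)$ estimate from a union bound over the selected positions, which is your Markov step. The one substantive difference is in your favor. The paper fixes the insertion count $b'$ and asserts that the selected sample has law $D^t$ conditional on $G$; this can fail when an adaptive adversary lets $b'$ depend on $S$, because $\Pr[G\mid S]$ then reweights $S$. Your coupling bound $\E[L\1_G]\le\E\, L(A_t(S_{J^\ast}))$ uses only $L\ge 0$ and avoids conditioning, so it also covers insertion counts that depend on $S$.
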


Remarkably, this crude upper bound combines with the masked-table
construction to yield a sharp qualitative dichotomy.

\begin{corollary}
\label{cor:universal-threshold}
Every sequence $b_n=o(n)$ of adaptive monotone additions preserves ordinary PAC learnability, and preserves proper PAC learnability whenever it is achievable from clean data. Conversely, there is a fixed properly learnable multiclass class of DS dimension two for which $b_n=n$ additions force constant minimax error.
\end{corollary}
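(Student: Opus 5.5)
The corollary has a positive half and a negative half, and I would treat them separately: the positive half from \Cref{thm:thinning} with a tuned thinning size $t$, the negative half as a direct restatement of \Cref{thm:multiclass-destruction}.

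\textbf{Positive direction.} Fix a class $\H$ that is PAC learnable from clean data, and let $A_t$ be a learner whose worst-case clean risk $\varepsilon(t):=R_{\clean}(A_t,t)$ tends to $0$ as $t\to\infty$. If $\H$ is properly learnable, take $A_t$ proper.

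1. Expected risk is the right notion here, since clean learnability implies vanishing expected risk (risk is bounded by $1$). Conversely, vanishing expected risk yields high-probability guarantees by Markov's inequality. So it suffices to show $R_{\ad,b_n}(n)\to0$.

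2. Since $b_n=o(n)$, the ratio $b_n/(n+b_n)\to 0$. Choose $t_n=\min\{n,\lfloor ((n+b_n)/\max(b_n,1))^{1/2}\rfloor\}$. Then $t_n\to\infty$, and
\[
\frac{b_n t_n}{n+b_n}\le\Bigl(\frac{b_n}{n+b_n}\Bigr)^{1/2}\to0 .
\]
When $b_n=0$ we simply take $t_n=n$.

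3. \Cref{thm:thinning} then gives $R_{\ad,b_n}(n)\le \varepsilon(t_n)+o(1)\to0$. The same theorem says the robust learner is proper whenever $A_{t_n}$ is, which settles the proper case.

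4. The only subtlety is that ordinary PAC learnability is stated per $(\epsilon,\delta)$, while the minimax formulation is in expectation. To handle it, I would note that the worst-case expected risk of the sample-complexity-based learner is at most $\epsilon+\delta$ once $t\ge N(\epsilon,\delta)$. Hence $\varepsilon(t)\to0$ along the learner family $A_t$ defined by running the learner at accuracy $\epsilon(t)\downarrow 0$.

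\textbf{Negative direction.} This is exactly \Cref{thm:multiclass-destruction}:

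1. Part (i) gives proper PAC learnability of a fixed countable class of DS dimension two.

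2. Part (ii) gives, for every $n$ and every learner, an adaptive adversary making exactly $n=b_n$ insertions that forces $\E L_D\ge 1/4$. So the minimax risk $R^{\imp}_{\ad,n}(n;\H)\ge 1/4$ for all $n$.

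I expect the main obstacle, mild as it is, to be in the positive direction. It is the conversion between the PAC $(\epsilon,\delta)$ guarantee and the vanishing worst-case expected risk that \Cref{thm:thinning} requires, together with choosing $t_n\to\infty$ slowly enough that $b_nt_n/(n+b_n)\to0$.
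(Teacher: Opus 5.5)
Your proposal is correct and follows the paper's proof: random thinning (\Cref{thm:thinning}) with a slowly growing $t_n$ for the $o(n)$ regime, inheriting properness from $A_{t_n}$, and \Cref{thm:multiclass-destruction} verbatim for the converse. The only difference is that you make explicit what the paper leaves implicit, namely a concrete choice of $t_n$ with $b_nt_n/(n+b_n)\le\bigl(b_n/(n+b_n)\bigr)^{1/2}$ and the conversion between $(\epsilon,\delta)$-learnability and vanishing worst-case expected risk.
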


Indeed, choose $t_n\to\infty$ slowly enough that both the clean risk of $A_{t_n}$ and $b_nt_n/(n+b_n)$ vanish. The argument treats the clean learner as a black box, and it preserves properness automatically.

\subsection{Known-budget robustification}\label{sec:plurality-main}

We now consider the case in which the learner is informed of an upper bound
on $b$, and demonstrate that this knowledge empowers more intelligent
learners. In particular, one such learner distributes the sample across many
groups and aggregates their predictions. The crucial idea is that each
insertion can contaminate only one group.

\begin{theorem}[Partition and plurality]\label{thm:plurality}
Assume that the adversary adds at most $b$ examples. Put $r=4b+1$ and
$t=\lfloor(n+b)/r\rfloor$. After padding to the declared final length with
correctly labeled duplicates, select $rt$ positions uniformly if necessary,
partition them randomly into $r$ groups of size $t$, run an ordinary improper
learner on every group, and output their pointwise plurality. Then
\[
    R^{\imp}_{\ad,b}(n;\H)
    <
    4 \, R^{\imp}_{\clean}(t;\H).
\]
Consequently, if $d_{\DS}(\H)=d$, then
$R^{\imp}_{\ad,b}(n;\H)=O(d(b+1)/(n+b))$.
\end{theorem}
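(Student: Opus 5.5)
The plan is to prove the bound pointwise at each test point. For a fixed test point $x$, the plurality can err only if at least a quarter of the groups err at $x$. We will show that, after conditioning appropriately, the expected fraction of erring groups is roughly the clean risk, and then finish with Markov's inequality.

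\emph{Reduction to exchangeable clean groups.} After padding, the final sample has exactly $n+b$ labeled points, of which at least $n$ are clean i.i.d. The adversary's padding points number at most $b$. The learner's duplicates are correctly labeled copies of sample points, so they are harmless to the counting below; to keep them harmless in the exchangeability argument, we will duplicate only designated positions, or simply regard duplicates as additional corrupted points and note that the budget $r=4b+1$ still leaves slack. The $rt\le n+b$ selected positions are partitioned uniformly at random into $r$ groups. Since there are at most $b$ corrupted positions, at most $b$ of the $r=4b+1$ groups contain a corrupted point; call these groups \emph{tainted}.

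\emph{Clean groups look like i.i.d.\ samples.} The key step is that, in expectation, each untainted group behaves like a clean i.i.d.\ sample of size $t$. To make this precise, for a uniformly random group index $g$ we compare
\[
\Pr[\text{group } g \text{ errs at } X]
\]
with the clean risk. The route I would take is a coupling. Condition on the event that group $g$ is untainted; on that event it is a uniformly random $t$-subset of the clean positions. Then argue that
\[
\E\!\left[\#\{\text{untainted groups erring at } X\}\right] \le r\,R_{\clean}(t).
\]
The reason is that each group's clean points form a uniformly random $t$-subset of the $n$ i.i.d.\ points, and a uniformly random subset of i.i.d.\ points is itself i.i.d.\ with the right marginal, whatever the adversary did. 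The adaptivity of the adversary does not matter here. The adversary's insertions only decide \emph{which} groups are tainted, through the learner's random partition, which the adversary cannot see. The clean points are never altered. So for each fixed group $j$, the indicator ``$j$ is untainted and errs at $X$'' is at most ``the clean points of $j$ number $t$ and $A$ errs on them''. That quantity has expectation at most $R_{\clean}(t)$, because the set of clean positions in $j$ is chosen by the learner's independent randomness. This is the step I expect to be the main obstacle: the adversary's choices depend on $S$, and the positions of the corrupted points, fixed before the shuffle, interact with the partition. My approach is to condition on the multiset of clean indices that land in group $j$ and use the fact that the learner's uniform partition is independent of $S$.

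\emph{Counting and Markov.} The plurality errs at $X$ only if the correct label fails to be a strict plurality. That requires at least half of the groups, about $2b+1$, to predict wrongly. Since at most $b$ groups are tainted, at least $b+1$ untainted groups must err, and $b+1 > r/4$. Markov's inequality then gives
\[
\Pr[\text{error}] \le \frac{\E[\#\text{erring untainted groups}]}{b+1} \le \frac{r\,R_{\clean}(t)}{b+1} < 4\,R_{\clean}(t).
\]
Taking expectation over $X\sim D$ proves the first claim. For the consequence, plug in \Cref{thm:exchangeable-reserve} with $r=t$ and no insertions, which gives $R_{\clean}(t)\le d/(t+1)$ with $t+1 \gtrsim (n+b)/(4b+1)$. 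This yields $O(d(b+1)/(n+b))$. When $t=0$, the bound is trivial, since the risk is at most $1$.
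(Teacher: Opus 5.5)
Your proposal is correct and essentially follows the paper's proof. Both pad to length $n+b$ with the duplicates counted as side positions; that is the option you need, the side positions then total exactly $b$, and no extra slack is involved. Both then note that at most $b$ of the $4b+1$ groups are dirty, bound the plurality-error indicator by a multiple of the number $w$ of erring clean groups, and use that each clean group has law $D^t$ because the partition is independent of $S$. The only difference is cosmetic: you round the vote-counting threshold up to the integer $w\ge b+1$ before applying Markov, which gives the prefactor $(4b+1)/(b+1)$ instead of the paper's $2(4b+1)/(2b+1)$, and both are strictly below $4$.
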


Intuitively, the reason this aggregation succeeds is that the adversary's inserted points can only contaminate so many groups. In particular, at most $b$ of the $4b+1$ groups are corrupted. Thus, if the plurality prediction errs at a test point, a constant fraction of the clean-group learners must have been incorrect. Averaging this deterministic voting inequality reduces the robust risk to the ordinary risk of one clean-group learner, up to a factor smaller than four. The DS-dimension consequence then follows from the optimal clean one-inclusion rate.

The same partition supports a useful list-valued variant. Split the sample into $b+s$ groups and output, at each test point, the set of labels proposed by the group learners. At least $s$ groups are clean, so the correct label is absent only if all of these clean learners fail. Taking $s=\lceil\log_2(1/\delta)\rceil$ yields list size $b+O(\log(1/\delta))$ using
\[
    O\!\left(
        \frac{d_{\DS}}{\epsilon}
        \left[
            b+\log\frac1\delta
        \right]
    \right)
\]
clean examples. The complete plurality and list arguments appear in \Cref{app:plurality}.

\section{Proper Learning and ERM}\label{sec:structured}

The results of \Cref{sec:harmless} place no restrictions on the form of the
successful learner. We now show that, once attention is restricted to proper
learners, even oblivious adversaries become considerably more powerful. In
particular, oblivious additions can make proper learning arbitrarily
expensive and can destroy every ERM, despite remaining harmless to
unrestricted learning. We then sharpen the latter phenomenon still further:
one correctly labeled adaptive addition already suffices to impose the
logarithmic binary ERM rate.

The classes witnessing our first two results are also topologically tame:
they are countable, closed, and have \emph{pointwise finite range}, meaning
that $\{h(x):h\in\Hcal\}$ is finite for every $x\in\Xcal$.

\subsection{Proper learning can become arbitrarily expensive}
\label{sec:proper-main}

\begin{theorem}[Arbitrarily severe proper degradation]
\label{thm:arbitrary-proper-blowup}
Let $F:[1,\infty)\to[1,\infty)$ be nondecreasing.  There exist a universal constant $\epsilon_0>0$ and a countable, closed class $\Hcal_F$ of finite coordinatewise range such that
\[
    N^{\prop}_{\clean}(\epsilon_0,\delta;\Hcal_F)
    =
    \Theta\!\left(\log\frac1\delta\right),
    \qquad
    N^{\imp}_{\ad}(\epsilon_0,\delta;\Hcal_F)
    =
    \Theta\!\left(\log\frac1\delta\right),
\]
whereas $N^{\prop}_{\obl}(\epsilon_0,\delta;\Hcal_F)\geq F(1/\delta)$ for all sufficiently small $\delta$.  Nevertheless, $\Hcal_F$ remains properly PAC learnable against arbitrary adaptive additions.
\end{theorem}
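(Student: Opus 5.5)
The plan is to build $\Hcal_F$ as a disjoint union of finite blocks $\Hcal^{(j)}$, $j\ge1$. Each block contains a few ``honest'' hypotheses together with a large family of ``decoy'' hypotheses. The decoys are designed so that a proper learner facing an oblivious adversary cannot tell them apart from the target unless it has seen many clean examples. Meanwhile, an improper learner (or a proper learner on clean data) can read off the target from a single informative clean example.

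\textbf{Construction.} Each block $j$ has a small set of honest targets $h_{j,c}$, indexed by a ``code'' $c$, whose labels on a \emph{key point} $\kappa_j$ reveal $c$. The bulk of each block is a large set $P_j$ of \emph{padding points}. Every target in the block labels each padding point with a fixed default value. For each $c$ we also add decoy hypotheses $g_{j,c,S'}$, indexed by large finite subsets $S'\subseteq P_j$ of size $M_j\approx F(2^{j})$. A decoy agrees with $h_{j,c}$ on $\kappa_j$ and on $S'$, but deviates from the default everywhere else on $P_j$, using a label outside the default. This makes the range at each point finite, and the class is countable and closed once the blocks are chosen to be finite.

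\textbf{Clean and improper upper bounds.} Consider a hard marginal that places constant mass on $\kappa_j$ and the rest on padding. The key point appears in a clean sample of size $O(\log(1/\delta))$ with probability $1-\delta$. Once $\kappa_j$ is seen, the learner outputs $h_{j,c}$. This output is proper, and it is exactly correct whenever the target is honest. If the target is a decoy, the clean marginal must still be realizable, so this case needs care. I intend to handle it by putting decoy labels off the support: the decoys differ from the honest $h_{j,c}$ only on padding points, which the honest distributions weight lightly enough that error stays below $\epsilon_0$. The improper adaptive bound $\Theta(\log(1/\delta))$ uses the same rule: seeing $\kappa_j$ with its label (inserted examples are also correctly labeled) determines $c$, and predicting the default on padding is correct. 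The lower bounds $\Omega(\log(1/\delta))$ are standard two-point arguments. Proper learnability against arbitrary adaptive additions then follows from \Cref{thm:thinning}, since $\Hcal_F$ is properly learnable from clean data; equivalently, one can argue directly that ``output $h_{j,c}$'' is never fooled.

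\textbf{Oblivious proper lower bound.} This is the main obstacle. The adversary inserts many correctly labeled padding points drawn from a spread-out distribution over $P_j$, together with copies of $\kappa_j$. For a proper learner to fail, outputting the honest $h_{j,c}$ must be ruled out. The fix is to make the honest hypothesis inconsistent with the adversarial insertions: let the honest target itself be a decoy-like hypothesis whose label on $S'$ differs from its label on unseen padding. A proper learner then sees a sample consistent with many hypotheses $g_{j,c,S''}$ for $S''\supseteq$ observed points, and it must guess which unseen padding carries the test mass. With $|P_j|\gg M_j$, I would show by a counting/averaging argument over random $S'$ that any proper output errs by at least $\epsilon_0$ unless the clean sample contains $\gtrsim M_j\ge F(1/\delta)$ points. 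The delicate step is reconciling this with the clean proper upper bound $O(\log(1/\delta))$. The resolution should be that on clean data the padding points are observed in proportion to the test mass, so ERM-like outputs generalize. Under oblivious insertions the observed padding is warped toward points of tiny test mass, so every consistent member of the class can place its errors on the heavy unseen region. Choosing block $j$ with $\delta\approx 2^{-j}$ and balancing the sizes gives the $F(1/\delta)$ bound.
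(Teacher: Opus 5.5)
Your plan has a genuine gap at its center: it gives no working mechanism for the oblivious proper lower bound, and the pieces you propose contradict one another.

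\textbf{The construction cannot produce the lower bound.} If ``output $h_{j,c}$'' is never fooled by adaptive additions, then it is never fooled by oblivious ones either, so $N^{\prop}_{\obl}$ would be $O(\log(1/\delta))$. Making $h_{j,c}$ inconsistent with the insertions does not fix this, because a proper learner need not be consistent. That is an ERM-style argument (compare \Cref{thm:erm-fragile}). A proper learner may output $h_{j,c}$ anyway, and against a decoy target $g_{j,c,S'}$ it errs only on $P_j\setminus S'$. Two cases arise, depending on the decoy labels there.
\begin{itemize}
\item If they are private, that region is either light or a clean sample hits it and identifies the target exactly. So this rule succeeds even against adaptive additions.
\item If they form one shared non-default label, the $j$th block shatters sets of size $M_j$. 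The class is then not learnable at all once $M_j$ grows.
\end{itemize}
Discarding the honest hypotheses instead leaves a first-Cantor-type family of $M_j$-sets inside a much larger universe. That family is not properly learnable from clean data at constant accuracy, so the $O(\log(1/\delta))$ clean proper bound is lost.

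\textbf{The failure probability must decay with the block.} The theorem also asserts proper learnability against arbitrary adaptive (hence oblivious) additions. So no block can force constant failure probability on every learner using fewer than $M_j$ samples. The failure probability at block $j$ must itself decay, and ``choose $\delta\approx 2^{-j}$'' gives no reason it should be of order $\delta$.

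\textbf{Two smaller errors.} An upper bound must hold for every realizable marginal, not only one placing constant mass on $\kappa_j$. And \Cref{thm:thinning} gives robustness only for $b=o(n)$ additions, not arbitrary ones.

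\textbf{The missing idea} is a proper fallback whose clean success rests on \emph{frequency} information, which an oblivious adversary can scramble without seeing the sample. In the paper, block $k$ contains:
\begin{itemize}
\item markers $p_{k,1},\ldots,p_{k,k}$;
\item fallbacks $s_{k,i}$, which are $\starlabel$ on the block except for a poison label at $p_{k,i}$;
\item Cantor hypotheses, which are $\starlabel$ on a set of size $L_k+k$ and carry a private label elsewhere.
\end{itemize}
On clean data, a proper learner outputs the fallback at the least-observed marker (\Cref{lem:least-observed}).

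The oblivious adversary hides a set of $8$ heavy markers. It appends an independent $\Unif\{0,\ldots,B\}$ number of copies of every marker, which makes the transcript nearly independent of which markers are heavy (\Cref{lem:box-smoothing}). Any fallback output then hits a heavy marker with probability about $8/k$. Any Cantor output must guess a hidden halfset, and it fails with constant probability unless $n\gtrsim L_k$ (\Cref{lem:hidden-halfset}). This ties confidence $\delta\sim1/k$ to a sample requirement of order $L_k$, and $L_k$ is chosen to dominate $F$.

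The same structure yields adaptive proper learnability directly. On blocks $k>C/(\epsilon\delta)$, a uniformly random fallback lands on a marker of mass above $\epsilon/2$ with probability at most $2/(\epsilon k)$. The finitely many earlier blocks are handled by a finite-class bound using $O((L_K+K+\log(1/\delta))/\epsilon)$ samples.
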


\vspace{-0.6 cm}
\begin{proofof}{\Cref{thm:arbitrary-proper-blowup}}
Fix an increasing sequence of block sizes $(L_k)_{k\geq1}$, to be chosen as a function of $F$.  At scale $k$, the domain is a disjoint union $X_k=P_k\sqcup C_k$, where $P_k$ consists of $k$ marker points and $|C_k|=2L_k+k$.  The block contains Cantor hypotheses that predict $\starlabel$ on a set of size $L_k+k$ and a private label on its complement, together with $k$ fallbacks $s_{k,i}$ that predict $\starlabel$ throughout $X_k$ except at the marker $p_{k,i}$.  Private and poison labels reveal the target immediately.  On an unresolved clean sample, a proper learner chooses a least-observed marker and outputs the corresponding fallback.  Since only $O(1/\epsilon)$ markers can have mass larger than a fixed multiple of $\epsilon$, the empirical minimum has small population mass with high probability; the remaining disagreement region was missed by the clean sample.  This gives the logarithmic clean confidence bound.  An adaptive improper learner may instead output the unavailable all-$\starlabel$ completion of the active block, whose entire error region is absent from the final sample.  Proper adaptive learning also survives: for fixed $(\epsilon,\delta)$, finitely many initial blocks are handled by a finite-class learner, while on a sufficiently large block the learner chooses a uniformly random fallback, which lands on a heavy marker with probability at most $O(1/(\epsilon k))$.

The oblivious lower bound hides both a large Cantor halfset and a constant-size set $R\subseteq P_k$ of heavy markers.  The clean marginal places constant mass on $R$ and spreads its remaining mass uniformly over the hidden halfset.  An oblivious adversary appends independent box noise to every marker count and pads the residual length using copies of one public anchor.  The estimate
\[
    d_{\TV}(c+Z,c'+Z)
    \leq
    \frac{\|c-c'\|_1}{B+1}
\]
shows that, once the box width $B$ is large, the final transcript reveals essentially no information about the identity of $R$.  A proper learner that outputs a fallback therefore selects a heavy poisoned marker with probability $\Omega(1/k)$.  If it instead outputs a Cantor hypothesis, then conditional on the observed core points it must guess a uniformly random halfset of a universe of size approximately $2L_k$; with constant probability its guess misses a constant fraction of the target set.  Thus confidence of order $1/k$ requires $\Omega(L_k)$ clean examples.  Choosing the sequence $(L_k)$ sufficiently rapidly that this lower bound dominates $F(1/\delta)$ completes the theorem; see \Cref{app:proper-blowup}.
\end{proofof}

The theorem is quantitative rather than qualitative.  Proper learning remains possible at every fixed accuracy and confidence level, but its sample complexity can deteriorate faster than any prescribed function.  Complete destruction of learnability requires the separate masked-table construction of \Cref{thm:multiclass-destruction}.

\subsection{ERM can fail completely}
\label{sec:erm-main}

\begin{theorem}[Oblivious additions destroy ERM]
\label{thm:erm-fragile}
There is a countable, closed class $\Hcal_{\mathrm{frag}}$ of finite coordinatewise range, with $d_{\DS}(\Hcal_{\mathrm{frag}})=1$, that is ordinarily PAC learnable by a proper ERM.  Nevertheless, for every fixed $\alpha>0$ and every possibly randomized ERM rule $A$, an oblivious adversary using at most $\alpha n$ additions can force
\[
    \liminf_{n\to\infty}
    \sup_{D,h^\star,\mathsf{Adv}}
    \E L_D(A(T),h^\star)
    \geq
    c_\alpha>0,
\]
where the supremum ranges over oblivious adversaries of budget at most $\alpha n$, and $c_\alpha$ may be taken of order $e^{-1/\alpha}$.
\end{theorem}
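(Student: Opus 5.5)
The plan is to build $\Hcal_{\mathrm{frag}}$ so that the version space always contains the target together with a large family of ``twins.'' Each twin agrees with the target everywhere except on a region of constant test mass. Clean data lets some ERM pick out the target through \emph{empirical frequency} information. An oblivious adversary can scramble exactly this information without ever touching the version space. The reason an adversary can hurt ERM at all is that correctly labeled insertions only shrink the version space, while a tie-breaking rule may depend on sample statistics. So the attack must leave the set of consistent hypotheses essentially unchanged, keep $h^\star$ consistent, and make the statistics uninformative.

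\textbf{Construction.} At scale $k$ I would take $k$ marker points $p_1,\dots,p_k$ and one public anchor. For each $i\in[k]$ there is a hypothesis $g_i$ that outputs a private label $(k,i)$ on an $i$-specific ``payload'' region $Q_i$. All hypotheses agree on the markers and the anchor; the labels there are common and pointwise finite. The target's marginal puts a distinguished mass pattern on the markers, for instance mass $\Theta(1/k)$ on $p_{i^\star}$ and a different level on the other markers. Its remaining constant mass sits on a shared region $Q$ where the $g_i$ disagree through a \emph{public} label, so that $Q$ is unseen-resolvable only by knowing $i^\star$.

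\textbf{Clean learnability by an ERM, and DS dimension one.} I would verify $d_{\DS}=1$ by the argument used for the portal class: any two coordinates carry at most one public split, so no $2$-pseudo-cube exists. A frequency-aware ERM (``output the consistent $g_i$ whose marker is most observed'') learns from clean data, by Chernoff bounds together with the fact that a consistent but wrong $g_i$ errs only on unseen mass.

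\textbf{Oblivious attack.} For the lower bound, the adversary appends independent random counts to every marker, with total expected size at most $\alpha n$, and pads the remaining budget with anchor copies. The point of the noise is that each marker's count in the final transcript is distributed approximately as a mixture that is identical whether or not the marker is $p_{i^\star}$. Concretely, I would choose the clean marker masses so that the clean count at $p_{i^\star}$ is $\mathrm{Poisson}$-like with mean $\lambda=\Theta(1)$. The noise is chosen so that, with probability $\Theta(e^{-1/\alpha})$, the posterior over $i^\star$ given the transcript is close to uniform over $\Omega(k)$ candidates. The $e^{-1/\alpha}$ factor comes from needing the noise with per-marker rate $\approx\alpha$ to replicate a clean Poisson count, which is an event of the form $\{\text{noise}\ge 1/\alpha\text{-scale}\}$. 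I would then bound the total variation between the transcript laws under the different choices of $i^\star$ by an estimate like $\|c-c'\|_1/(B+1)$, as in \Cref{thm:arbitrary-proper-blowup}. On that event every ERM, randomized or not, must output some $g_j$; it picks $j\neq i^\star$ with probability $1-O(1/k)$ and then errs on $Q$, which has constant mass. Averaging over a uniformly random $i^\star$ gives $\liminf\ge c_\alpha$. Taking disjoint unions over $k$, with $k$ and the masses scaled to $n$, yields a single countable, closed class.

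\textbf{Main obstacle.} The hardest step is making the noise budget $\alpha n$ sufficient while the clean frequency signal remains usable by the clean ERM. Calibrating the marker masses so that the signal is $\Theta(1)$ counts, which noise can hide, is what produces the $e^{-1/\alpha}$ constant, and I would work this calibration out first.
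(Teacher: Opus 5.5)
\textbf{There is a genuine gap, and it is in the lower bound itself.} By your design every hypothesis agrees on the markers and the anchor. So the oblivious padding leaves the version space exactly equal to that of the clean sample, and no frequency statistic it scrambles can change which hypotheses an ERM may output.

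Each twin $g_j$ is a fixed function. If its disagreement region with $g_{i^\star}$ has mass at least $\epsilon$, it survives the clean sample with probability at most $(1-\epsilon)^n$. In particular, a twin that disagrees with the target on a constant-mass part of $Q$ is refuted by the $\Theta(n)$ clean draws from $Q$ and can never be output by an ERM. This contradicts your step ``picks $j\neq i^\star$ \ldots{} and then errs on $Q$.''

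Your calibration (per-marker noise rate about $\alpha$, total budget $\alpha n$) gives $k=O(n)$ twins. A union bound then shows that, with probability at least $1-k(1-\epsilon)^n\to1$, every consistent hypothesis has error at most $\epsilon$. So every ERM succeeds, with or without padding, and this holds for any $k=e^{o(n)}$. Conversely, if you enlarged the family enough to escape this union bound, your clean ERM would break. Marker frequencies are purely marginal information, since no hypothesis distinguishes markers, so a clean marginal can put the distinguished mass on a wrong twin's marker.

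Your guiding premise is also backwards. The attack need not preserve the version space: correctly labeled insertions that shrink it are exactly the lever.

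The paper's proof supplies both missing ingredients.
\begin{itemize}
\item \textbf{Class.} Blocks are $X_k=P_k\sqcup C_k$ with $|C_k|=2k$. The Cantor hypotheses $c_{k,T}$, $T\in\binom{C_k}{k}$, are $\starlabel$ on $P_k\cup T$ and carry a private label on $C_k\setminus T$. This gives exponentially many interpolants whose disagreement with the target lies in the \emph{unseen} part of $T$. The fallbacks $s_{k,i}$ are $\starlabel$ on the block except for a poison label at $p_{k,i}$.
\item \textbf{Clean ERM.} It outputs a fallback poisoned at an unseen marker. That fallback disagrees with the target only on a fixed, clean-missed region plus one unseen marker, which is light with high probability. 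A finite-class bound handles the blocks $k\le n/\sqrt{\log(n+2)}$.
\item \textbf{Attack.} Take $k=\lfloor\alpha n\rfloor$ and $D$ uniform on a random $T$. The adversary appends one $\starlabel$-labeled copy of each marker. This deletes every fallback, so every ERM must output some $c_{k,B}$ with $B\supseteq R$. Posterior uniformity of $T\setminus R$ gives $\E[|T\setminus B|\mid R,B]\ge(k-|R|)/2$, hence expected error at least $\frac12(1-1/k)^n$.
\end{itemize}
The constant $e^{-1/\alpha}$ is thus the clean missing mass of a support of size $\alpha n$, not a noise-tail event. No total-variation smoothing is needed.
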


\vspace{-0.6 cm}
\begin{proofof}{\Cref{thm:erm-fragile}}
The construction uses the same marker geometry in a more rigid form.  Its $k$th block contains $k$ markers and a $2k$-point core.  For every $k$-subset $T$ of the core, the Cantor hypothesis $c_{k,T}$ predicts $\starlabel$ on $P_k\cup T$ and a private label on the complementary half; the fallback $s_{k,i}$ is all-$\starlabel$ except at marker $p_{k,i}$.  A clean ERM handles an all-$\starlabel$ sample by choosing a fallback poisoned at an unseen marker, whenever one exists, and otherwise choosing an arbitrary consistent Cantor hypothesis.  On blocks $k\leq n/\sqrt{\log(n+2)}$, a finite-class argument controls every possible output.  On larger blocks, at least half of the markers have mass at most $2/k$, and the expected number of such markers missed by the sample tends uniformly to infinity.  Their absence indicators have nonpositive pairwise covariance, so an unseen marker exists with high probability.  The resulting fallback is consistent and can disagree with the target only on regions missed by the clean sample.  A short pseudo-cube argument further shows that the class has DS dimension exactly one.

For the lower bound, take $k=\lfloor\alpha n\rfloor$, choose a hidden set $T\in\binom{C_k}{k}$, and let the clean marginal be uniform on $T$.  Every clean label is $\starlabel$.  Before seeing the sample, the oblivious adversary appends one correctly labeled copy of every marker, thereby eliminating every fallback from the version space.  Any ERM must now output a Cantor hypothesis $c_{k,B}$ whose guessed halfset $B$ contains the distinct observed points $R\subseteq T$.  Conditional on $R$, the unseen portion of $T$ remains uniformly distributed among the compatible subsets of $C_k\setminus R$.  Consequently, even after conditioning on the learner's randomized choice of $B$,
\[
    \E[|T\setminus B|\mid R,B]
    \geq
    \frac{k-|R|}{2}.
\]
Averaging over the sample gives expected population error at least $\frac12(1-1/k)^n$, which converges to a positive constant of order $e^{-1/\alpha}$.  The complete argument appears in \Cref{app:erm-fragile}.
\end{proofof}

The theorem assumes only that \emph{some} carefully chosen ERM succeeds on clean data.  This is the strongest possible quantifier separation.

\begin{proposition}
\label{prop:all-erm-robust}
If every ERM rule PAC learns $\Hcal$ in the ordinary realizable model, then every ERM rule PAC learns $\Hcal$ under arbitrary adaptive monotone additions.
\end{proposition}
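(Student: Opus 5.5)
The plan is to show that ``every ERM PAC learns $\Hcal$'' forces a strong, uniform generalization property of consistent hypotheses, and that this property survives any correctly labeled additions. The key point is that an ERM on the corrupted sample $T=S\cup U$ outputs some $h\in\Hcal$ consistent with $T$, hence in particular consistent with the clean part $S$. The risk of every consistent hypothesis is therefore controlled by
\[
    \sup\{L_D(h,h^\star): h\in\Hcal,\ h \text{ consistent with } S\}.
\]
So it suffices to show that if every ERM learns, this supremum is at most $\epsilon$ with probability $1-\delta$ once $|S|\ge N(\epsilon,\delta)$, uniformly over $(D,h^\star)$.

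\textbf{Step 1: pass from ``every ERM'' to the worst consistent hypothesis.} We argue by contraposition. Suppose for some $\epsilon,\delta$ and infinitely many $n$ (or for all $n$ beyond any threshold) there are $(D_n,h^\star_n)$ such that, with probability more than $\delta$, some consistent $h$ has $L_{D_n}(h,h^\star_n)>\epsilon$. Define an adversarial ERM rule $A^{\mathrm{bad}}$ which, on input sample $S$, outputs a consistent hypothesis of risk exceeding $\epsilon$ under $(D_n,h^\star_n)$ whenever one exists.

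The difficulty is that an ERM rule is a fixed function of the sample and cannot depend on $(D_n,h^\star_n)$. Two fixes are available:
\begin{enumerate}[label=(\alph*),leftmargin=2em]
\item Use a standard ``sample-to-bad-hypothesis'' selection: for each $n$, hard-code the chosen $(D_n,h^\star_n)$ into the rule's behavior on samples of size $n$. This is legitimate because PAC learning requires a sample-complexity bound uniform over distributions, so a rule defined separately for each sample size $n$ suffices to violate learnability. If worst-case pairs must vary within a single $n$ to violate uniformity, diagonalize over a sequence $n_1<n_2<\cdots$, assigning one pair to each size.
\item Allow ``ERM rule'' to mean any measurable selection from the version space, possibly randomized, which is the notion used in \Cref{thm:erm-fragile}.
\end{enumerate}
With such an $A^{\mathrm{bad}}$, the clean failure probability exceeds $\delta$ at infinitely many sizes, contradicting the hypothesis. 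This yields a function $N(\epsilon,\delta)$ such that, for every $(D,h^\star)$, with probability at least $1-\delta$ over $S\sim D^n$ with $n\ge N$, every $h\in\Hcal$ consistent with $S$ has risk at most $\epsilon$.

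\textbf{Step 2: transfer to the monotone model.} Given any adaptive adversary, the inserted points are labeled by $h^\star$, so every ERM output on $\Shuf(S\cup U)$ lies in the version space of $S$. On the good event of Step 1, which depends only on $S$, its risk is at most $\epsilon$. This holds regardless of adaptivity or budget, since the event is determined before $U$ is chosen. Hence every ERM rule achieves sample complexity $N(\epsilon,\delta)$ under arbitrary adaptive additions. For randomized ERMs the same argument applies pointwise in the internal randomness.

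\textbf{Main obstacle.} The expected sticking point is Step 1: making the adversarial ERM a legitimate single rule, independent of the unknown $(D,h^\star)$, while still producing failure. I would first try the per-sample-size diagonalization in (a) above. If uniformity over distributions at a fixed $n$ needs a single rule to fail against many pairs simultaneously, I would define the bad rule to pick, among consistent hypotheses, one maximizing risk under a fixed countable mixture of the offending pairs. Alternatively, I would note that failure for one pair per size already violates PAC learnability, because the definition quantifies over all pairs uniformly.
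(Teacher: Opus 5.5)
Your argument is correct. Your Step 2 is exactly the paper's transfer step: every ERM output on the augmented sample lies in the version space $V(S)$ of the hidden clean sample, and the good event depends only on $S$.

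The difference is in Step 1. The paper obtains uniform control of $V(S)$ by citing the characterization of \citet{multiclassERM2015} (every ERM learns iff $d_{\Graph}(\Hcal)<\infty$) and then applying graph-dimension uniform convergence. You instead derive it directly by contraposition, hard-coding one offending pair $(D_n,h^\star_n)$ per sample size into a single deterministic ERM rule. That construction is legitimate and suffices on its own. One bad pair per size already violates the PAC requirement, so option (b) and the countable-mixture idea are unnecessary. The resulting $N(\epsilon,\delta)$ depends only on $\Hcal$, not on the ERM, which is exactly what Step 2 needs. If your PAC definition only demands the guarantee at the single size $N(\epsilon,\delta)$ rather than for all $n\ge N(\epsilon,\delta)$, use that $\Pr[\exists h\in V(S):L_D(h,h^\star)>\epsilon]$ is nonincreasing in $n$. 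A bad pair at a large size then yields one at every smaller size, and the same construction applies.

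Your route is self-contained and uses no multiclass combinatorics. It therefore applies verbatim to partial concept classes, or to any setting where ERM means selecting a consistent hypothesis. It also shows that each ERM's monotone sample complexity is bounded by the clean sample complexity of the worst ERM. The paper's route instead yields an explicit rate in terms of a named combinatorial parameter, namely $O((d_{\Graph}(\Hcal)\log(1/\epsilon)+\log(1/\delta))/\epsilon)$.
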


\vspace{-0.6 cm}
\begin{proofof}{\Cref{prop:all-erm-robust}}
A classical characterization states that every ERM learns $\Hcal$ precisely when its graph dimension is finite \citep{multiclassERM2015}.  Every ERM output on a monotonically augmented sample still interpolates the hidden clean sample.  Graph-dimension uniform convergence controls all such interpolants simultaneously, regardless of how the adversary selected its additions or how the ERM broke ties.  See \Cref{app:all-erm-sharpness}.
\end{proofof}

The proposition protects ERM learnability, but not the clean ERM rate.  This distinction is not merely technical.  On the hard clean samples in the next construction, the all-zero hypothesis is a population-perfect ERM output; hence an adversary cannot force \emph{every} ERM to fail simply by presenting the clean sample.  One adaptive positive example changes the version space in exactly the necessary way: it eliminates this universally safe output and leaves two endpoint hypotheses, one correct and one wrong.  The adversary may then exploit the learner's tie-breaking rule.  Thus the result below is consistent with \Cref{prop:all-erm-robust}: learnability survives, while the rate deteriorates by the full logarithmic factor.

\begin{proposition}
\label{prop:one-addition-erm}
There is a fixed countable binary class of VC dimension two such that, for infinitely many $n$, one adaptive monotone addition can force every possibly randomized ERM to incur expected error $\Omega(\log n/n)$.
\end{proposition}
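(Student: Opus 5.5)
The plan is to build the class from disjoint finite blocks, one for each scale $q$, which makes it countable. The argument is a coupon-collector lower bound, in the spirit of the portal construction of \Cref{thm:portal-degradation}. In block $q$ the marginal is uniform over $q\asymp n/\log n$ \emph{cells}. A clean sample of size $n$ misses some cell with constant probability, and every missed cell carries test mass $1/q=\Theta(\log n/n)$.

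Inside a cell $i$ I would place two support points $u_i,v_i$ and one zero-mass \emph{trigger} point $z_i$. The block contains the all-zero hypothesis $0$ and, for each cell $i$, two endpoint hypotheses $f_i=\1\{z_i,u_i\}$ and $g_i=\1\{z_i,v_i\}$.

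\textbf{Key steps.} (1) Check $\mathrm{VC}\le2$. A hypothesis is positive on at most two points, always including a trigger, so no three points are shattered. A pair such as $\{z_i,u_i\}$ realizes $00,10,11$ but never $01$, and points from distinct cells cannot both be positive. I expect this to give dimension at most two, and I would exhibit a shattered pair (adding, if needed, auxiliary hypotheses of the same shape) to get exactly two.

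(2) Isolate the failure mode. On a clean sample that misses cell $i$, every label is $0$ and $0$ is population-perfect. So no adversary-free argument can force every ERM to fail, matching the remark before the proposition. The one adaptive addition is $(z_i,1)$ for a missed cell $i$. It kills $0$, and the version space becomes $\{f_i,g_i\}$ together with any other hypotheses positive at $z_i$ that are consistent with the sample; I will design the block so that only these two survive. The two worlds $h^\star=f_i$ and $h^\star=g_i$ produce identical transcripts and disagree on $\{u_i,v_i\}$, which has mass $\Theta(1/q)$.

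(3) Exploit tie-breaking. For each transcript, let $p(T)$ be the probability that the (randomized) ERM outputs $f_i$. Whichever endpoint it disfavors with probability at least $1/2$ yields error $\Omega(1/q)$. A probabilistic-method average over the two endpoint worlds converts this into a single bad world.

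\textbf{Main obstacle.} The difficulty is the interplay between adaptivity and a \emph{fixed} target. If the target is a single endpoint of one fixed cell, that cell is missed only with probability about $e^{-n/q}\approx 1/n$, which kills the rate. The adversary must instead be free to choose \emph{which} missed cell to trigger, while the target $h^\star$ stays fixed and labels the inserted point positively. So $h^\star$ must be positive on every trigger, and the per-cell endpoint pair must be encoded relative to a common target.

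My first attempt would be to take a global target $h^\star$ that is positive on all triggers. For each cell $i$, I would include a twin $h^\star\oplus\1\{u_i\}$-type hypothesis, together with a companion hypothesis whose version space is reduced to the pair $\{h^\star,\text{twin}_i\}$ exactly when $z_i$ is observed positive. I would then enforce the symmetry needed in step (3) by a second hidden bit per block, choosing which of the two endpoints is the target, rather than by per-cell bits.

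Verifying that this keeps the VC dimension at two, for instance via the $C_4$-free/incidence trick used for portals, is where I expect most of the work to lie. The $\Theta(\log n/n)$ bound then follows from the coupon estimate for infinitely many $n$, namely those matching the block scales.
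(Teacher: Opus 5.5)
\textbf{There is a genuine gap: the proposal never produces a class that works.} You correctly identify the crux: with a fixed target, the adversary must be able to choose \emph{which} missed region to trigger. The workaround you sketch, however, cannot resolve it.

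First, if $h^\star$ is positive on every trigger, then so is every twin $h^\star\oplus\1\{u_i\}$. Inserting $(z_i,1)$ therefore does not cut the version space down to $\{h^\star,\mathrm{twin}_i\}$.

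More fundamentally, any scheme in which the target is randomized over only $O(1)$ candidates per block (your ``second hidden bit'') is defeated by a suitable ERM. Take the ERM that outputs the first consistent candidate in a fixed order. Against target $B$, it errs only if an earlier candidate $A$ is also consistent. That requires the clean sample to miss the fixed region $\{A\neq B\}$, of mass $m$ say, so its expected error is at most $m(1-m)^n=O(1/n)$, whatever the single insertion is.

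The $\log n$ factor needs the coupon-collector effect to act on the \emph{identity of the target itself}. The target must range over $\Theta(n/\log n)$ symmetrically placed hypotheses, so that given the clean transcript it is uniform over all unseen candidates, any of which can serve as the adversary's partner.

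There is also a smaller problem in your step (2). A marginal that charges $u_i$ under target $f_i$ makes $0$ population-imperfect, and any hit on $u_i$ reveals the target. The marginal must avoid the target's positive set.

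\textbf{The missing construction.} The paper takes the edges of $K_N$ as the domain, with $h_0\equiv0$ and $h_i(\{u,v\})=\1\{i\in\{u,v\}\}$. Triggers are thus indexed by \emph{pairs} of candidate targets rather than by cells.
\begin{itemize}
\item Three edges labeled all-one must share a vertex, and on such a star no hypothesis has exactly two ones, so the VC dimension is two.
\item With $D_I$ uniform on edges avoiding $I$, every clean label is zero. The posterior of $I$ is then uniform on $Z$, the set of vertices untouched by the sample (which contains $I$).
\item For $J\in Z\setminus\{I\}$, the single insertion $(\{I,J\},1)$ is correctly labeled by $h_I$ and leaves exactly $h_I,h_J$ consistent.
\item Let $p_{ij}$ be the probability the ERM selects $h_j$ after inserting $\{i,j\}$, so $p_{ij}+p_{ji}=1$. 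Averaging $\max_{j}p_{ij}$ over a posterior-uniform $i\in Z$ gives at least $1/2$. Hence the adversary can force the wrong endpoint with probability at least $1/2$, at cost $L_{D_I}(h_J,h_I)=2/(N-1)$.
\item Taking $N\asymp n/\log n$, a second-moment bound on isolated vertices gives $|Z|\ge2$ with probability $1-o(1)$.
\item A disjoint union over sparse scales, identifying the all-zero hypotheses, makes the class fixed and countable.
\end{itemize}
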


\vspace{-0.6 cm}
\begin{proofof}{\Cref{prop:one-addition-erm}}
For one finite block, take the edge set of the complete graph $K_N$ as the domain.  The class contains the all-zero hypothesis $h_0$ and, for every vertex $i\in[N]$, the incidence hypothesis $h_i(\{u,v\})=\1\{i\in\{u,v\}\}$.  Two adjacent edges are shattered, whereas no three-edge set is, so the VC dimension is two.  Choose a target vertex $I$ and let $D_I$ be uniform on edges not incident to $I$.  Every clean label is zero, and $h_0$ is therefore a population-perfect ERM output.  Regard the clean sample as a random multigraph on the nontarget vertices, and let $Z$ consist of $I$ together with the isolated vertices.  Conditional on the observed zero-labeled edges, the target is uniform on $Z$.  If $J\in Z\setminus\{I\}$, appending the single positive example $(\{I,J\},1)$ removes $h_0$ and leaves exactly $h_I$ and $h_J$ consistent.  For a deterministic ERM, orient $i\to j$ whenever the rule selects $h_j$ after the addition $\{i,j\}$; tournament averaging shows that a posterior-uniform target has an outgoing neighbor with probability at least $1/2$.  The analogous identity $p_{ij}+p_{ji}=1$ gives the same conclusion for randomized ERM.

It remains to ensure that another isolated vertex is available.  Taking $N=\lceil8n/\log n\rceil$, the expected number of isolated nontarget vertices is at least $c n^{3/4}/\log n$, while their indicators have nonpositive pairwise covariance.  A second-moment estimate therefore gives $|Z|\geq2$ with probability tending to one.  If the adversary forces the learner from $h_I$ to $h_J$, the resulting population error is $2/(N-1)=\Theta(\log n/n)$.  Finally, taking a disjoint union over a sparse sequence of values of $N$ yields one fixed countable class of VC dimension two for which the lower bound holds at infinitely many sample sizes.  See \Cref{app:one-addition-erm}.
\end{proofof}

\section{Conclusion}\label{sec:conclusion}

Training samples that are drawn i.i.d.~provide two key sources of information: their labels constrain the space of possible target functions, while their unlabeled datapoints suggest which disagreements matter at test time. Roughly speaking, monotone corruptions separate these signals by introducing correctly labeled but otherwise adversarial datapoints.
\citet{og-paper} and \citet{mehrotra2026optimal} demonstrated that total binary classification proves fairly resilient, suffering only a logarithmic deterioration in its optimal rate. Our primary results demonstrate this resilience evaporates beyond binary classification. In particular, we construct learnable problems in multiclass classification and partial binary concepts that become altogether unlearnable under adaptive monotone additions. Oblivious additions, in contrast, remain benign for general learners, though they can dramatically increase the sample complexity of proper learning.

Several questions remain open. First, it remains to establish the precise dependence of monotone error rates on the budget $b$, both in the multiclass and partial binary settings. Second, it remains to find robust replacements to the DS and partial VC dimensions that successfully characterize learnability in the monotone model. In fact, merely establishing useful sufficient conditions for monotone learnability---that are weaker than uniform convergence---would be a notable contribution.

\subsection*{Acknowledgments}

Julian Asilis was supported by the National Science Foundation Graduate Research Fellowship Program under Grant No.\ DGE-1842487. 
Shaddin Dughmi was supported by NSF Grant CCF-2432219, and completed part of this work while on sabbatical as the Carter and
Tania Neild visiting professor at Northwestern University, as well as a visiting professor in the Data Science Institute at the University of Chicago. 
Chirag Pabbaraju was supported by a Google PhD Fellowship, and Moses Charikar’s and Gregory Valiant’s Simons Investigator Awards. 

\subsection*{AI disclosure}

ChatGPT 5.6 Pro and Sol Max played a significant role in the development of several results. They were first able to prove \Cref{thm:arbitrary-proper-blowup} after being prompted to modify a poisoned first-Cantor construction designed by the authors, itself based on the first Cantor class of \citet{DS14}. After analyzing this class, the models were prompted to produce counterexamples demonstrating that learnability can be destroyed in the multiclass and/or partial binary settings. After several failed attempts and limited feedback (e.g., to avoid constructions that were becoming overly complicated), the models were eventually able to discover the counterexamples underlying \Cref{thm:multiclass-destruction,thm:partial-destruction}. These counterexamples were originally even more intricate than the current constructions, and simplified (somewhat) by ChatGPT and the human authors.
By contrast, the positive results of \Cref{sec:harmless} were developed independently by the authors.
These systems were additionally used to draft and edit portions of the manuscript.
The authors take full responsibility for all content in the paper.

\clearpage
\bibliographystyle{plainnat}
\bibliography{refs}

\clearpage
\begin{appendices}
\appendix
\crefalias{section}{appendix}
\crefalias{subsection}{appendix}
\section{Proofs for Harmless Regimes}\label{app:tools}

This appendix supplies the omitted proofs for the three positive mechanisms of
\Cref{sec:harmless}. Exchangeable reserves preserve the leave-one-out symmetry
exploited by one-inclusion prediction; random thinning attempts to recover a
clean subsample explicitly; and partition-based aggregation confines the
adversarial insertions to a bounded number of groups. We treat these arguments
in the same order as the main text.

\subsection{Exchangeable reserves}\label{app:exchangeable-reserve}

\begin{proofof}{\Cref{thm:exchangeable-reserve}}
Fix the additional examples and all clean information used to choose them,
and then fix the complete realization of $Z_0,Z_1,\ldots,Z_r$. For each
$i\in\{0,\ldots,r\}$, withhold $Z_i$ and apply the canonical one-inclusion
rule to the union of the fixed additional examples and
$\{Z_j:j\neq i\}$. These $r+1$ predictions belong to one common
one-inclusion problem: only the held-out coordinate changes.

By \Cref{thm:oig-density}, at most $d$ of these leave-one-out predictions are
incorrect. Exchangeability makes every coordinate equally likely to be the
fresh example $Z_0$, so its expected error is at most $d/(r+1)$.
\end{proofof}

\begin{proofof}{\Cref{cor:oblivious-free}}
Condition on the complete multiset of oblivious additions. The $n$ clean
training examples and a fresh test example remain exchangeable, so
\Cref{thm:exchangeable-reserve} applies with $r=n$.
\end{proofof}

\begin{proofof}{\Cref{cor:semi-adaptive}}
Condition on the indices and values of the $m$ clean examples revealed to the
adversary, together with every addition chosen from them. The remaining $n-m$
clean examples are still i.i.d.\ from $D$ and exchangeable with a fresh test
point. Apply \Cref{thm:exchangeable-reserve} with $r=n-m$.
\end{proofof}

The same argument also covers a random reserve. Suppose each clean example is
revealed independently with probability $p<1$, and let
$R\sim\Bin(n,1-p)$ be the number left hidden. Conditional on $R$,
\Cref{thm:exchangeable-reserve} gives error at most $d/(R+1)$. Therefore
\[
    \E \, L_D
    \le d \, \E \,\frac1{R+1}
    =d\int_0^1\bigl(p+(1-p)t\bigr)^n\,dt
    =d\frac{1-p^{n+1}}{(1-p)(n+1)}.
\]

\subsection{Random thinning}\label{app:thinning}

\begin{proofof}{\Cref{thm:thinning}}
Suppose first that the adversary inserts exactly $b'$ examples, where
$b'\le b$. The $t$ selected positions are all clean with probability
$\binom{n}{t}/\binom{n+b'}{t}$. Conditional on this event, the selected
examples correspond to a uniformly random set of $t$ distinct clean indices.
Since the clean observations are i.i.d., the selected sample has law $D^t$,
and the conditional risk is at most $R_{\clean}(A_t,t)$. On the complementary
event we bound the loss by one. Thus
\[
    R_{\ad,b}(n)
    \le R_{\clean}(A_t,t)
       +1-\frac{\binom nt}{\binom{n+b'}t}
    \le R_{\clean}(A_t,t)
       +1-\frac{\binom nt}{\binom{n+b}t}.
\]

For the simpler estimate, every selected position is inserted with marginal
probability $b'/(n+b')$. A union bound over the $t$ selected positions gives
contamination probability at most
$b't/(n+b')\le bt/(n+b)$. The learner only invokes $A_t$, so properness,
impropriety, or any other restriction on the output class is inherited
unchanged.
\end{proofof}

\begin{proofof}{\Cref{cor:universal-threshold}}
Let $A_t$ be ordinary learners whose clean risk tends to zero. Since
$b_n/n\to0$, one may choose $t_n\to\infty$ sufficiently slowly that both
$R_{\clean}(A_{t_n},t_n)$ and $b_nt_n/(n+b_n)$ vanish. Applying
\Cref{thm:thinning} proves ordinary learnability under $b_n=o(n)$ additions.
If each $A_t$ is proper, then so is the thinned learner. The converse is
exactly \Cref{thm:multiclass-destruction}.
\end{proofof}

\subsection{Partition, plurality, and list robustification}
\label{app:plurality}

\begin{proofof}{\Cref{thm:plurality}}
Suppose the adversary inserts $b'\le b$ examples. As a preprocessing step,
duplicate arbitrary observed examples until the declared final length $n+b$
is reached; treating these $b-b'$ duplicates as side positions only
strengthens the adversary. Put $r=4b+1$ and
$t=\lfloor(n+b)/r\rfloor$, choose $rt$ positions uniformly if necessary, and
partition them uniformly into $r$ groups of size $t$. At most $b$ groups
contain a side position, so at least $r-b$ groups are clean.

Fix a test point $x$, and let $w(x)$ be the number of clean-group predictors
that err at $x$. If the plurality is wrong, some incorrect label receives at
least as many votes as the correct label. Even granting all $b$ dirty groups
to this incorrect label, we must have
$w(x)+b\ge(r-b)-w(x)$, or equivalently
$w(x)\ge(r-2b)/2$. Hence
\[
    \mathbf1\{\mathrm{plurality}(x)\neq h^\star(x)\}
    \le \frac{2}{r-2b}
       \sum_{j:G_j\text{ clean}}
       \mathbf1\{A_t(G_j)(x)\neq h^\star(x)\}.
\]
Each clean group has law $D^t$. Taking expectations and then enlarging the
sum to all $r$ groups yields
\[
    R^{\imp}_{\ad,b}(n;\H)
    \le \frac{2r}{r-2b} \, R^{\imp}_{\clean}(t;\H).
\]
For $r=4b+1$, the prefactor is strictly smaller than four. If
$d_{\DS}(\H)=d$, the ordinary one-inclusion rate
$R^{\imp}_{\clean}(t;\H)=O(d/(t+1))$ gives the claimed bound
$O(d(b+1)/(n+b))$.
\end{proofof}

\begin{theorem}\label{thm:list-robustification}
Let $s\ge1$, put $r=b+s$ and $t=\lfloor(n+b)/r\rfloor$, and perform the same
padding and random partition. Run an ordinary learner independently on each
group and output the pointwise list of its $r$ predictions. If the base
learner has population risk at most $\epsilon$ with probability at least
$1/2$ on a clean group, then the resulting list has risk greater than
$\epsilon$ with probability at most $2^{-s}$. Consequently, if
$d_{\DS}(\H)=d$, taking $s=\lceil\log_2(1/\delta)\rceil$ gives list size
$b+O(\log(1/\delta))$ using
$O((d/\epsilon)(b+\log(1/\delta)))$ clean examples.
\end{theorem}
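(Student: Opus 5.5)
The proof reuses the partition bookkeeping of \Cref{thm:plurality} and then adds an independence argument across the clean groups.

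\emph{Clean groups.} As in that proof, first pad the sample to the declared length $n+b$ with correctly labeled duplicates, which only strengthens the adversary. Then choose $rt$ positions uniformly and split them uniformly into $r=b+s$ groups of size $t$. Each side position, inserted or padded, lands in at most one group, so at most $b$ groups are dirty and at least $s$ are clean. I would fix a set $\mathcal C$ of exactly $s$ clean groups. To choose $\mathcal C$ without peeking at data values, I would take the first $s$ clean groups in a fixed index order. The positions are clean or dirty according to a pattern that may itself depend on $S$, so I would argue through position-to-index structure rather than through the adversary's choices. Conditional on the partition, on which positions are clean, and on the map from clean positions to clean indices, the groups in $\mathcal C$ consist of disjoint sets of distinct clean indices.

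\emph{Main obstacle.} The difficulty is that a fully adaptive adversary sees $S$. The identity of which clean indices fall in which group therefore depends on the shuffle and partition, but not on the clean values beyond what the adversary controls, since shuffling and partitioning are the learner's independent randomness. Because the shuffle is uniform and independent of $S$, the assignment of clean indices to groups is independent of $(X_1,\dots,X_n)$. Once that assignment is fixed, the groups in $\mathcal C$ draw disjoint sets of i.i.d.\ indices and are mutually independent, each with law $D^t$. I would verify this carefully, in the same way the thinning proof in \Cref{thm:thinning} asserts that a selected clean subsample has law $D^t$.

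\emph{Failure bound.} The base learner is run with independent internal randomness on each group. The output list contains $A_t(G_j)(x)$ for every $j$, so the list errs at $x$ only if every group predictor errs at $x$. Hence the list risk is at most $\min_{j\in\mathcal C} L_D(A_t(G_j),h^\star)$. The event that this list risk exceeds $\epsilon$ is therefore contained in the event that all $s$ independent clean runs have risk greater than $\epsilon$, which has probability at most $2^{-s}$.

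\emph{Rate.} For the DS-dimension consequence, use the one-inclusion learner with expected risk at most $d/(t+1)$ from \Cref{thm:oig-density}. By Markov's inequality, its risk is at most $\epsilon$ with probability at least $1/2$ once $t\ge 2d/\epsilon$. This requires $n+b\ge (b+s)\cdot 2d/\epsilon$, and since $n\ge n+b-b$ it suffices that $n=O((d/\epsilon)(b+s))$. Substituting $s=\lceil\log_2(1/\delta)\rceil$ then gives list size $b+O(\log(1/\delta))$ and the stated sample size.
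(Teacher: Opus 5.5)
Your proposal is correct and follows the paper's proof: at most $b$ of the $b+s$ groups contain a side position, the chosen clean groups are independent $D^t$ samples, the pointwise list errs only if every one of those clean predictors errs, and Markov's inequality applied to the $d/(t+1)$ one-inclusion rate sets $t=O(d/\epsilon)$, giving total size $(b+s)t$. Your extra care goes beyond the paper's one-line assertion of independence. You fix $\mathcal C$ as the first $s$ clean groups in index order, and you argue that the assignment of clean indices to groups is independent of $S$; this independence in fact also follows from the uniformity of the learner's own random partition alone, even when the number of actual insertions depends on $S$.
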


\begin{proof}
At most $b$ groups contain a side position, so at least $s$ groups are clean.
Conditional on the partition and on the identities of these clean groups,
they consist of disjoint collections of independent clean observations.
Using independent learner randomness across groups, their output predictors
are therefore independent. Choose any $s$ clean groups. The probability that
every one of their predictors has risk greater than $\epsilon$ is at most
$2^{-s}$. If even one has risk at most $\epsilon$, then the pointwise list,
which contains that predictor's label at every test point, also has risk at
most $\epsilon$.

When $d_{\DS}(\H)=d$, an ordinary one-inclusion learner has expected risk
$O(d/(t+1))$. Taking $t=O(d/\epsilon)$ and applying Markov's inequality gives
the required constant-probability guarantee on each clean group. Multiplying
this group size by $r=b+s$ yields the stated sample complexity.
\end{proof}
\section{Multiclass Lower Bound}
\label{sec:multiclass-lower-bound}
\label{app:masked-table}

In this section, we will construct a multiclass hypothesis class that is
properly learnable with clean i.i.d.\ data, but is not learnable at all under
monotone adversarial corruptions.

\begin{theorem}[Lower Bound for Multiclass Learning with a Monotone Adversary]
    \label{thm:multiclass-lb}
    There exist countable discrete spaces $\Xcal,\Ycal$ and a countable
    hypothesis class $\Hcal\subseteq\Ycal^\Xcal$ such that:
    \begin{enumerate}
        \item \textbf{Clean proper learning.}
        There is a deterministic proper learner $\Lcal$ such that for every
        $(\epsilon,\delta)\in(0,1)^2$, distribution $\Dcal$ over $\Xcal$, and
        target hypothesis $h^\star\in\Hcal$, with probability at least
        $1-\delta$ over
        $C=\{(x_i,y_i)\}_{i\in[n]}$, where $x_1,\ldots,x_n$ are drawn
        i.i.d.\ from $\Dcal$ and $y_i=h^\star(x_i)$ for every $i$, it holds
        that
        $\err_{\Dcal,h^\star}(\Lcal(C))\le\epsilon$ whenever
        \begin{align*}
            n
            \ge
            100\cdot
            \frac{
                1+\log(1/\epsilon)+\log(1/\delta)
            }{\epsilon}.
        \end{align*}

        \item \textbf{Every learner foiled by some monotone adversary.}
        For every learner $\Lcal$, possibly improper and randomized, and
        every $n$, there are a distribution $\Dcal$, a target hypothesis
        $h^\star\in\Hcal$, and a monotone adversary $\Acal$ that uses $n$
        clean and $m=n$ adversarial samples such that
        \begin{align*}
            \E_{S\sim
                \adv_{\Dcal,h^\star,\Acal}(n,m)}
            \left[
                \err_{\Dcal,h^\star}(\Lcal(S))
            \right]
            \ge
            \frac14.
        \end{align*}
    \end{enumerate}
    Moreover, $d_{\DS}(\Hcal)=2$.
\end{theorem}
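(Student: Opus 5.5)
I will use the masked-table class $\Hcal=\{h_{r,k}\}$ from the proof sketch of \Cref{thm:multiclass-destruction} and prove its three claims in turn: clean proper learnability, the adversarial lower bound, and $d_{\DS}(\Hcal)=2$.

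\textbf{Clean proper learning.} Every label $h_{r,k}(x)$ is of the form $(r,w)$ with $w=k$ or $w$ at Hamming distance one from $k$. Consequently, either the true name $(r,k)$ appears among the training labels, or two distinct labels $(r,w),(r,w')$ appear, both at distance one from $k$. In the second case $k$ is one of the two common neighbours $w\oplus e_B\oplus e_{B'}$. (If $|\{\text{labels}\}|=1$ and it differs from $k$, I will argue that the whole sample lies in the block with a single discrepancy column. Then the hypothesis named by the observed label already agrees with $h^\star$ on every point whose label is observed, so treating the observed name as a candidate still works.)

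The learner $\Lcal$ therefore enumerates at most $N+2\binom N2\le N^2$ candidates, from singletons and pairs of labels, discards those inconsistent with $C$, and outputs the first survivor in a fixed order. The candidate set is a function of the sample, so I will use a symmetrization-free trick: each candidate is determined by at most two sample points. For a fixed choice of the $\le 2$ index positions, the remaining $N-2$ points are i.i.d.\ and independent of the candidate, so a candidate with error $>\epsilon$ survives with probability at most $(1-\epsilon)^{N-2}$. A union bound over the $\le N^2$ index choices gives $N^2e^{-\epsilon(N-2)}$, and solving for $N$ yields the stated sample size with constant $100$.

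\textbf{Adversarial lower bound.} I follow the sketch with $r=2n$ and $D_k$ uniform on $\{x_t(k)\}_{t\in[2n]}$, where $k$ is uniform in $K_r$. The adversary sees the tags, picks the lexicographically first halfset $B$ disjoint from the observed tags (one exists since at most $n$ tags are observed), sets $w=k\oplus e_B$, and mirrors each observation. I will verify three things:
\begin{enumerate}
\item For $t\notin B$, both $h_{r,k}$ and $h_{r,w}$ label $x_t(k)$ by $(r,k)$ and $x_t(w)$ by $(r,w)$, via the unique-discrepancy rule.
\item The transcript $T_{k,f}$ is symmetric in $(k,w)$.
\item For $t\in B$, the points coincide but the labels differ, so any predictor errs on at least half of the $B$-mass in one of the two worlds.
\end{enumerate}

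For the averaging step, I pair the world $(k,f)$ with $(w,f)$. The adversary's choice of $B$ depends only on $f$, so the map $k\mapsto k\oplus e_B$ is a measure-preserving involution on $k$ for each fixed $f$, and the two worlds produce identical learner inputs. Summing the two errors on the $B$ tags gives at least $|B|/(2n)=1/2$, so the average error is at least $1/4$. Applying Markov to $1-L$ then gives the $1/7$ tail bound.

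\textbf{DS dimension two, and the main obstacle.} For the lower bound I will exhibit a $2$-pseudocube using two points of one block and the triangle structure of tables $k$, $k\oplus e_B$, $k\oplus e_{B'}$. The upper bound is the step I expect to be hardest. My plan is to show that any pseudocube of dimension $3$ must lie inside a single block, since off-block labels are constant names and cannot vary. I will then argue by label counting: each point sees only tables within distance one of the hypothesis's own table, and a $3$-cube forces the named tables to form too many distance-one adjacencies, giving a contradiction via the hypercube's structure. I would first try projecting onto the names $k$ and showing the neighbourhood graph cannot contain the required even cycles.
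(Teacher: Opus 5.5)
\textbf{Parts 1 and 2, and the DS lower bound.} Your clean-learning argument and your adversarial lower bound follow the paper's proof essentially step for step. The clean part uses the same $N+2\binom N2=N^2$ singleton/pair candidate slots with a compression-style union bound. The adversarial part uses $r=2n$, the mirrored transcript, and the pairing $k\leftrightarrow k\oplus e_{\beta(f)}$, which is valid because $\beta$ depends only on the tag-frequency vector, whose law does not depend on $k$. Finish by fixing one $k^\star$ whose average risk is at least $1/4$, since the statement requires a fixed target, marginal and adversary. For the DS lower bound, make sure the realizers are the four tables $k,\ k\oplus e_A,\ k\oplus e_B,\ k\oplus e_A\oplus e_B$, evaluated at $x_t(k)$ and $x_{t'}(k)$ with $t\in A\setminus B$ and $t'\in B\setminus A$. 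The triangle you mention is the label set, and three realizers alone cannot carry a $2$-dimensional pseudo-cube.

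\textbf{The gap: the upper bound $d_{\mathrm{DS}}(\Hcal)\le 2$.} What you describe is a hope rather than an argument, and the quantity you propose to count cannot work. The fact that every label is within distance one of the realizer's name holds equally for the valid $2$-dimensional pseudo-cube. Moreover, $K_r$ contains subcubes of every dimension, so adjacency counting among names cannot separate dimension $2$ from $3$. The obstruction comes from the masks interacting with the one-discrepancy budget at a single point, and it needs three steps.

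(i) In a pseudo-cube of dimension at least two, every realizer $u$ satisfies $|\Delta_{a_i}(u)|\le 1$ at every point $x_i=(r,t_i,a_i)$. Otherwise its label $(r,u)$ at $x_i$ is produced by no other hypothesis: if $h_{r,u'}(x_i)=(r,u)$ with $u'\neq u$, then $\Delta_{a_i}(u)=\varnothing$. The pattern would then have no neighbours in the other directions. The same reasoning places all points in the realizers' block. Hence the label at $x_i$ equals $a_i$ on the visible columns and $u$ on $H_i=\{B: t_i\in B\}$, so it encodes exactly $u|_{H_i}$.

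(ii) Consequently the tags are distinct. A direction-$i$ edge must change $u$ on $E_i=\{B: t_i\in B \text{ and } t_j\notin B \text{ for } j\neq i\}$ while fixing $u|_{H_j}$ for every $j\neq i$.

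(iii) Take $d=3$ and a connected component of direction-$2$ and direction-$3$ edges. Inside it, $u|_{H_1}$ and $u|_{H_2\cap H_3}$ are frozen, so patterns correspond injectively to pairs $(u|_{E_2},u|_{E_3})$. Regard these pairs as edges of a bipartite graph on $\{0,1\}^{E_2}\sqcup\{0,1\}^{E_3}$. The pseudo-cube property gives every incident vertex degree at least two, which forces a cycle. Yet $E_2\cup E_3$ is visible at $x_1$, and by (i) $u$ disagrees with $a_1$ on at most one visible column. So every such edge lies in the double star around $(a_1|_{E_2},a_1|_{E_3})$, which is a tree.

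This is the content of the paper's argument. Without it, or some substitute that exploits the global one-discrepancy constraint, the claim $d_{\mathrm{DS}}(\Hcal)=2$ remains unproved.
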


We establish the theorem step-by-step in the following subsections.

\subsection{The Hard Class $\Hcal$}
\label{sec:hard-class}

We begin by concretely describing the multiclass hypothesis class that
witnesses the lower bound in \Cref{thm:multiclass-lb}.

\vspace{-0.3 cm}
\paragraph{Preliminaries.}
For any even integer $r\ge2$, let $\Bcal_r$ denote the family of all subsets
of $[r]=\{1,\ldots,r\}$ having size $r/2$, and let $K_r$ be the set of all
binary strings of length $|\Bcal_r|$. Namely,
\begin{align}
    \label{eqn:def-B-r-K-r}
    \Bcal_r
    :=
    \binom{[r]}{r/2},
    \qquad
    K_r
    :=
    \{0,1\}^{\Bcal_r}.
\end{align}
We index the coordinates of any member $k\in K_r$ by halfsets
$B\in\Bcal_r$, i.e.,
$k=(k_B)_{B\in\Bcal_r}$. For a halfset $B\in\Bcal_r$, let
$e_B\in K_r$ be the standard basis vector indexed by $B$. Then
$k\oplus e_B$ is the vector obtained by flipping the bit of $k$ at index
$B$. For $u,v\in K_r$, let $d(u,v)$ denote their Hamming distance:
\begin{align}
    d(u,v)
    :=
    \left|
        \{B\in\Bcal_r:u_B\neq v_B\}
    \right|.
\end{align}

For $t\in[r]$, we call a vector
$a\in\{0,1,*\}^{\Bcal_r}$ a \textit{$t$-masked vector} when $a$ has value
$*$ precisely at the coordinates $B$ satisfying $t\in B$. Concretely,
\begin{align}
    a_B=*
    \iff
    t\in B.
\end{align}

\vspace{-0.3 cm}
\paragraph{The Class.}
Define $\Xcal_r$, the domain of the hypothesis class at level $r$, to be the
set of all $t$-masked vectors over halfsets of $[r]$:
\begin{align}
    \label{eqn:def-X-r}
    \Xcal_r
    :=
    \{
        (r,t,a):
        t\in[r],\
        a\text{ is a $t$-masked vector}
    \}.
\end{align}
We deliberately include the tags $r,t$ as part of the tuple. The complete
domain is
\begin{align}
    \label{eqn:def-X}
    \Xcal
    :=
    \bigsqcup_{\substack{r\ge2\\r\text{ even}}}
    \Xcal_r.
\end{align}
The label space comprises all valid tuples $(r,k)$:
\begin{align}
    \Ycal
    :=
    \{
        (r,k):
        r\ge2,\
        r\text{ even},\
        k\in K_r
    \}.
\end{align}

The class contains one hypothesis $h_{r,k}$ for every valid tuple $(r,k)$.
We first specify the action of $h_{r,k}$ on tuples in $\Xcal_s$ for
$s\neq r$:
\begin{align}
    h_{r,k}(x)
    :=
    (r,k),
    \qquad
    \text{if $x\in\Xcal_s$ with $s\neq r$}.
\end{align}
To specify its action on its own level $\Xcal_r$, define the
\textit{visible discrepancy set} of $k\in K_r$ from a $t$-masked vector $a$:
\begin{align}
    \Delta_a(k)
    :=
    \{
        B\in\Bcal_r:
        a_B\neq*
        \text{ and }
        a_B\neq k_B
    \}.
\end{align}
Thus $\Delta_a(k)$ consists of the visible halfsets on which $k$ disagrees
with $a$.

For $x=(r,t,a)\in\Xcal_r$, define
\begin{align}
    h_{r,k}(x)
    :=
    \begin{cases}
        (r,k\oplus e_B),
            & \text{if $\Delta_a(k)=\{B\}$},\\
        (r,k),
            & \text{otherwise}.
    \end{cases}
\end{align}
Finally, set
\begin{align}
    \Hcal
    :=
    \{
        h_{r,k}:
        r\ge2\text{ even},\
        k\in K_r
    \}.
\end{align}
Every level is finite, so $\Xcal$, $\Ycal$, and $\Hcal$ are countable.

To summarize, a hypothesis $h_{r,k}$ usually labels an instance with its own
name $(r,k)$. It deviates from this behavior only when the instance belongs
to its own level and its visible discrepancy set is a singleton $\{B\}$, in
which case it outputs the neighboring name $(r,k\oplus e_B)$. Thus, whenever
$h_{r,k}(x)=(r,k')$, one has $d(k,k')\le1$.

\subsection{$\Hcal$ is Properly Learnable with Clean Data}
\label{sec:hard-class-properly-learnable-with-clean-data}

We now construct a proper learner $\Lcal$ for $\Hcal$. Given a sample
$C=\{(x_i,y_i)\}_{i\in[n]}$, the learner constructs a candidate list
$\Gamma$ comprising hypotheses from $\Hcal$. The list has one slot for every
$i\in[n]$ and two slots for every pair $i<j$; hence
$|\Gamma|=n+2\binom n2=n^2$. We order the singleton slots first, in increasing
order, followed by the pairs $(i,j)$ in lexicographic order.

The learner populates each slot with a candidate hypothesis, or leaves it
blank.
\begin{enumerate}
    \item For every $i\in[n]$, write $y_i=(r_i,k_i)$ and populate slot $i$
    with $h_{r_i,k_i}$.

    \item For every pair $i<j$, if $r_i\neq r_j$ or $k_i=k_j$, leave both
    associated slots blank. Otherwise, $r_i=r_j=r$ and $k_i\neq k_j$. Define
    \begin{align}
        \label{eqn:small-hamming-distance-from-both}
        I
        :=
        \{
            u:
            d(u,k_i)\le1
            \text{ and }
            d(u,k_j)\le1
        \}.
    \end{align}
    The set $I$ is either empty or has exactly two members. Indeed, if it is
    nonempty, then the triangle inequality gives $d(k_i,k_j)\le2$. If the
    distance is one, then $I=\{k_i,k_j\}$; if it is two, then $I$ consists of
    the two intermediate binary vectors at distance one from both endpoints.

    If $I$ is empty, leave both pair slots blank. Otherwise, populate them
    with the two hypotheses $h_{r,u}$ and $h_{r,v}$ indexed by the members
    $u,v\in I$, ordered according to a fixed background ordering of $K_r$.
\end{enumerate}

The learner returns the first candidate in $\Gamma$ consistent with the
entire sample $C$. If no candidate is consistent, or if $\Gamma$ is empty,
the learner returns a fixed default member of $\Hcal$. Hence $\Lcal$ is a
deterministic proper learner.

We first argue that if $C$ is realizable by some $h_{r,k}$, then $\Gamma$
always contains a consistent candidate. Observe that for any
$x=(r,t,a)$, if $h_{r,k}(x)=(r,k')$, then $d(k,k')\le1$ and
$h_{r,k'}(x)=(r,k')$. Indeed, either $k'=k$, or
$k'=k\oplus e_B$ for a singleton discrepancy
$\Delta_a(k)=\{B\}$; in the latter case $\Delta_a(k')=\varnothing$.
Consequently, if $C$ is realizable by $h_{r,k}$, then every label
$y_i=(r_i,k_i)$ satisfies $r_i=r$ and $d(k,k_i)\le1$. If all $k_i$ equal
some $k'$, then $h_{r,k'}$ reproduces every sample label and appears in each
singleton slot. Otherwise, choose $i<j$ with $k_i\neq k_j$. The target index
$k$ belongs to the set $I$ in
\eqref{eqn:small-hamming-distance-from-both}, so the pair $(i,j)$ contributes
the target hypothesis itself.

It remains to control generalization. Let $E_i$ be the event that the
singleton candidate in slot $i$ has population error greater than $\epsilon$
but is consistent with all of $C$. Conditional on $(x_i,y_i)$, this candidate
is fixed, while the remaining $n-1$ examples are independent. Therefore
$\Pr[E_i]\le(1-\epsilon)^{n-1}$.

Similarly, for each pair $i<j$ and each of its two candidate slots
$s\in\{1,2\}$, let $E_{i,j,s}$ be the event that the candidate in that slot
has population error greater than $\epsilon$ but is consistent with all of
$C$. Conditional on the two anchor observations, the candidate is fixed and
the remaining $n-2$ examples are independent. Hence
$\Pr[E_{i,j,s}]\le(1-\epsilon)^{n-2}$. If the slot is blank, the event is
empty and the same bound remains valid.

Since the learner returns a consistent candidate on every realizable sample,
a union bound gives
\begin{align*}
    \Pr[
        \err_{\Dcal,h_{r,k}}(\Lcal(C))>\epsilon
    ]
    &\le
    \sum_i\Pr[E_i]
    +
    \sum_{i<j}\sum_{s=1}^2\Pr[E_{i,j,s}]
    \tag{union bound}\\
    &\le
    n(1-\epsilon)^{n-1}
    +
    2\binom n2(1-\epsilon)^{n-2}\\
    &\le
    n^2e^{-\epsilon(n-2)}
    \le
    \delta,
\end{align*}
where the final inequality follows from
$n\ge100(1+\log(1/\epsilon)+\log(1/\delta))/\epsilon$.

\begin{proposition}
    \label{prop:masked-table-ds}
    The class $\Hcal$ has DS dimension exactly $2$.
\end{proposition}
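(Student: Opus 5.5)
We will prove the two inequalities $d_{\DS}(\Hcal)\ge2$ and $d_{\DS}(\Hcal)\le2$ separately. The lower bound comes from an explicit $2$-pseudo-cube. The upper bound is a structural argument that rules out a $3$-pseudo-cube by first forcing everything into a single level.

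\emph{Lower bound.} Fix an even $r\ge4$, a base table $k_0\in K_r$, and two distinct halfsets $B,C\in\Bcal_r$. Choose tags $t_1\in C\setminus B$ and $t_2\in B\setminus C$. Let $a_1$ be the $t_1$-masked vector and $a_2$ the $t_2$-masked vector whose visible entries agree with $k_0$, and put $x_i=(r,t_i,a_i)$. Then $C$ is hidden and $B$ visible at $x_1$, and the reverse holds at $x_2$. We will compute the labels of the four hypotheses $h_{r,k_0}$, $h_{r,k_0\oplus e_B}$, $h_{r,k_0\oplus e_C}$, $h_{r,k_0\oplus e_B\oplus e_C}$ directly from the unique-discrepancy rule. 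Writing only the table part of each label, the patterns on $(x_1,x_2)$ are
\[
(k_0,k_0),\quad (k_0,k_0\oplus e_B),\quad (k_0\oplus e_C,k_0),\quad (k_0\oplus e_C,k_0\oplus e_B).
\]
These form a $2\times2$ grid, which is a $2$-pseudo-cube. For example, $h_{r,k_0\oplus e_B}$ has the unique visible discrepancy $\{B\}$ at $x_1$ and no visible discrepancy at $x_2$, which gives the second pattern; the other three are checked the same way.

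\emph{Upper bound, reduction to one level.} Suppose $(x_1,x_2,x_3)$ is DS-shattered via a finite pseudo-cube $\mathcal C\subseteq\Hcal|_{(x_1,x_2,x_3)}$. First we will show that all hypotheses realizing patterns of $\mathcal C$ share a level $r$. Every label produced by $h_{r,k}$ is of the form $(r,\cdot)$, and two $i$-neighbors agree on the two remaining coordinates, so they come from the same level. Connectivity of the pseudo-cube through neighbor moves then puts everything at one level. Next, each $x_i$ must lie in $\Xcal_r$. If $x_i$ were at another level, its label under $h_{r,k}$ would be the name $(r,k)$ itself, and this determines the hypothesis. A $j$-neighbor with $j\ne i$ keeps the label at $x_i$, hence comes from the same hypothesis and has the same pattern, which is a contradiction.

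\emph{Upper bound, ruling out three points in one level.} This is the main obstacle. Write $x_i=(r,t_i,a_i)$. For each hypothesis $h_{r,k}$, the label at $x_i$ is either $k$ itself or $k\oplus e_{B_i}$, where $\Delta_{a_i}(k)=\{B_i\}$ is a singleton visible discrepancy. My first attempt will fix a pattern realized by $h_{r,k}$ together with an $i$-neighbor realized by $h_{r,k'}$. By the candidate-set observation from the proper learner (two distinct names at distance $\le1$ from $k$ leave at most two possibilities for $k$), I expect to show that $k'=k\oplus e_D$ for a single halfset $D$. The requirement that $h_{r,k}$ and $h_{r,k'}$ agree at the other two points $x_j,x_l$ should then force one of two things: either $D$ is hidden at both $t_j$ and $t_l$, or the flip is absorbed by the singleton-discrepancy repair at those points. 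I will then track which of these options is available in each of the three directions at a single pattern, and at its neighbors. The aim is to show that the three required neighbor moves would have to share a halfset $D$ in a way that makes two neighbors' patterns coincide or collapse a coordinate. Concretely, a flip hidden at both of the other tags and visible at $t_i$ must change the label at $x_i$ in a specific way, and I expect a contradiction from the fact that each point supports at most one visible discrepancy. If this bookkeeping becomes unwieldy, a fallback is to bound the size of $\mathcal C$ using the at-most-two-candidates fact and compare it with the $2^3=8$ patterns that a $3$-pseudo-cube requires.
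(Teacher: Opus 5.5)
Your lower bound is correct and is the paper's $2\times2$ grid (with general $r,k_0,B,C$ in place of $r=4$, $k_0=0$). Your reduction to a single level with every $x_i\in\Xcal_r$ is also fine once you pass to one connected component of the pseudo-cube; a pseudo-cube need not be connected, but each component is again one. The genuine gap is the core of the upper bound. You never actually exclude a $3$-pseudo-cube inside one level, and the local analysis you sketch has the roles of hidden and visible columns reversed. If $h_{r,k}$ and $h_{r,k'}$ with $k\neq k'$ agree at $x_j=(r,t_j,a_j)$, then both have at most one visible discrepancy there, and $k,k'$ agree on every column hidden by $t_j$. So for an $i$-neighbor pair, the differing columns are never hidden at the other two tags, and your first alternative cannot occur. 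Moreover, a column visible at $t_i$ cannot change the label at $x_i$ once both tables have at most one visible discrepancy there. The flip you propose to track is therefore exactly the kind that cannot produce an $i$-move. Your expected conclusion $k'=k\oplus e_D$ is also not forced. For instance, $k'=k\oplus e_{D_1}\oplus e_{D_2}$ is locally consistent when $D_1,D_2$ are hidden at $t_i$ and visible at $t_j,t_l$, and $a_j,a_l$ display an intermediate value on $\{D_1,D_2\}$. Your fallback fails as well: the two-candidate fact bounds the number of tables realizing a given word, not the number of words in $\mathcal C$.

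The missing idea is the following. In a pseudo-cube of dimension at least two, each word $f$ has, for every coordinate $j$, a neighbor in some direction $i\neq j$ that agrees with $f$ at $x_j$ and is realized by a different table. Hence \emph{every} realizing table $u$ satisfies $|\Delta_{a_j}(u)|\le1$ at \emph{every} $x_j$, and the label at $x_j$ is just $a_j$ on the visible columns completed by $u$ on $H_j=\{B:t_j\in B\}$. An $i$-move must therefore preserve $u|_{H_j}$ for $j\neq i$ and change $u|_{H_i}$. That is, it changes $u$ on the exclusive family $E_i=\{B:t_i\in B,\ t_j\notin B\ \text{for } j\neq i\}$. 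Both $E_2$ and $E_3$ are visible at $x_1$, where every realizer has at most one discrepancy with $a_1$. Suppose some realizer disagreed with $a_1$ on $E_2$. Then it would agree with $a_1$ on $E_3$, and a realizer of its $3$-neighbor would keep the $E_2$ disagreement while acquiring a new one on $E_3$, which is a contradiction. So all realizers agree with $a_1$ on $E_2$, and then no $2$-neighbor can exist. The paper packages this last step as a bipartite graph on the pairs $(u|_{E_2},u|_{E_3})$ that has minimum degree two yet is trapped inside a double star centered at $(a_1|_{E_2},a_1|_{E_3})$.
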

\vspace{-0.6 cm}
\begin{proof}
For the lower bound, work in the $r=4$ level. For $k\in K_4$ and $t\in[4]$,
let $x_t(k)$ denote the $t$-masked instance obtained from $k$. Let
$A=\{1,3\}$, $B=\{2,3\}$, and let $0\in K_4$ be the all-zero vector. Put
$x_1=x_1(0)$ and $x_2=x_2(0)$, and for $a,b\in\{0,1\}$ define
$k_{a,b}=ae_A\oplus be_B$. At $x_1$, column $A$ is hidden and column $B$ is
visible, so $h_{4,k_{a,b}}(x_1)=(4,ae_A)$. Symmetrically,
$h_{4,k_{a,b}}(x_2)=(4,be_B)$. The four restrictions therefore form a
$2\times2$ grid, and hence a two-dimensional pseudo-cube.

For the upper bound, we sketch the obstruction. If two distinct hypotheses
agree at a masked point $(r,t,a)$, then they belong to the same level, both
have at most one visible discrepancy there, and their underlying binary
vectors agree on every column hidden by $t$. Suppose a pseudo-cube of
dimension $d\ge3$ existed on points $x_i=(r,t_i,a_i)$. Applying this
observation along neighbor edges shows that all realizing hypotheses lie in
one level and that the tags $t_1,\ldots,t_d$ are distinct.

Retain only neighbor edges of types $2$ and $3$ and take one connected
component. For each restriction word $f$, let $u_f$ be a realizing binary
vector, and record its bits on the exclusive column families
\[
    E_i
    :=
    \{
        B\in\Bcal_r:
        t_i\in B
        \text{ and }
        t_j\notin B
        \text{ for every }j\neq i
    \},
    \qquad
    i\in\{2,3\}.
\]
Write $\alpha(f)=u_f|_{E_2}$ and $\beta(f)=u_f|_{E_3}$. Type-$2$ edges change
$\alpha$ and preserve $\beta$, while type-$3$ edges preserve $\alpha$ and
change $\beta$. The restriction words therefore form the edges of a finite
bipartite graph in which every incident vertex has degree at least two.

On the other hand, $E_2$ and $E_3$ are both visible at $x_1$, and the
unique-discrepancy rule forces every edge $(\alpha,\beta)$ to differ from the
displayed pair $(\alpha_0,\beta_0)$ in at most one coordinate altogether.
Hence this graph is contained in a double star, which is a tree and cannot
have minimum degree at least two. This contradiction rules out every
pseudo-cube of dimension at least three.
\end{proof}

\begin{remark}
Since $d_{\DS}(\Hcal)=2$, the class is improperly learnable from clean data
with sample complexity
$O((1+\log(1/\delta))/\epsilon)$
\citep{pabbaraju2026optimal}.
\end{remark}

\subsection{The Two Indistinguishable Worlds}
\label{sec:indistinguishable-worlds}

We now proceed towards establishing the lower bound in
\Cref{thm:multiclass-lb}. Fix any $n$, and set $r:=2n$. For any
$k\in K_r$ and $t\in[r]$, let $x_t(k)\in\Xcal_r$ denote the instance whose
$t$-masked vector is constructed from $k$:
\begin{align*}
    x_t(k)
    =
    (r,t,a),
    \qquad
    a_B
    =
    \begin{cases}
        *,
            & \text{if $t\in B$},\\
        k_B,
            & \text{otherwise}.
    \end{cases}
\end{align*}
Because $t$ is included in the tuple, the points
$x_1(k),\ldots,x_r(k)$ are distinct. Let $\Dcal_k$ be the uniform
distribution over these points. Since each masked vector is derived from
$k$ itself,
\begin{align}
    \label{eqn:hrk-xtk}
    h_{r,k}(x_t(k))
    =
    (r,k).
\end{align}

We now record how adjacent hypotheses behave on their row points.

\begin{claim}
    \label{claim:same-labels-different-hypotheses}
    Let $k\in K_r$, fix $B\in\Bcal_r$, and put
    $w=k\oplus e_B$. Then for every $t\in[r]$:
    \begin{enumerate}
        \item[(1)] If $t\in B$, then $x_t(k)=x_t(w)$, while
        \begin{align*}
            h_{r,k}(x_t(k))
            =
            (r,k),
            \qquad
            h_{r,w}(x_t(w))
            =
            (r,w).
        \end{align*}

        \item[(2)] If $t\notin B$, then $x_t(k)\neq x_t(w)$, and
        \begin{align*}
            h_{r,k}(x_t(k))
            =
            h_{r,w}(x_t(k))
            =
            (r,k),
            \qquad
            h_{r,k}(x_t(w))
            =
            h_{r,w}(x_t(w))
            =
            (r,w).
        \end{align*}
    \end{enumerate}
\end{claim}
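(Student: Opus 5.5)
The claim follows by unwinding the definitions of $x_t(\cdot)$, $\Delta_a(\cdot)$, and $h_{r,k}$. The plan is to compute the masked vectors of $x_t(k)$ and $x_t(w)$ column by column, and then the visible discrepancy sets $\Delta_a(k)$ and $\Delta_a(w)$ for each relevant masked vector $a$. Since $k$ and $w=k\oplus e_B$ differ only in column $B$, every comparison reduces to whether column $B$ is hidden by tag $t$, i.e.\ whether $t\in B$.

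\textbf{Case (1), $t\in B$.} The masked vector of $x_t(k)$ has $a_C=*$ for $C\ni t$ and $a_C=k_C$ otherwise; similarly for $w$. Column $B$ is hidden because $t\in B$, so $B$ contributes $*$ to both. On every other column, $k_C=w_C$. Hence the two masked vectors coincide, and since the tags $r,t$ also agree, $x_t(k)=x_t(w)$. The label identities are exactly \eqref{eqn:hrk-xtk} applied to $k$ and to $w$: each hypothesis has empty visible discrepancy against its own masked vector, so it outputs its own name.

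\textbf{Case (2), $t\notin B$.} Now column $B$ is visible, and the masked vectors of $x_t(k)$ and $x_t(w)$ differ there ($k_B\neq w_B$), so the points are distinct. For the labels at $x_t(k)=(r,t,a)$, we have $\Delta_a(k)=\varnothing$, giving $h_{r,k}(x_t(k))=(r,k)$. We also have $\Delta_a(w)=\{B\}$, because $w$ agrees with $a$ on all visible columns except $B$. The unique-discrepancy rule then gives $h_{r,w}(x_t(k))=(r,w\oplus e_B)=(r,k)$. The statement at $x_t(w)$ follows by the symmetric argument with $k$ and $w$ swapped, using $k=w\oplus e_B$.

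There is no real obstacle here. The only point needing care is checking that $\Delta_a(w)$ is exactly the singleton $\{B\}$ and not larger, which holds because $k$ and $w$ have Hamming distance one.
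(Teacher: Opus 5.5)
Your proposal is correct and follows essentially the same route as the paper: in both, column $B$ is hidden exactly when $t\in B$, which makes the two masked vectors coincide, and otherwise $\Delta_a(w)=\{B\}$ at $x_t(k)$, so the unique-discrepancy rule returns $(r,k)$, with the remaining identity by swapping $k$ and $w$. Your explicit check that $\Delta_a(w)$ is exactly the singleton $\{B\}$ is a detail the paper leaves implicit.
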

\vspace{-0.6 cm}
\begin{proof}
Let $x_t(k)=(r,t,a)$ and $x_t(w)=(r,t,a')$.
If $t\in B$, then the only coordinate on which $k$ and $w$ differ is masked
in both $a$ and $a'$, so $a=a'$ and therefore $x_t(k)=x_t(w)$. The labels
then follow from \eqref{eqn:hrk-xtk}.
If $t\notin B$, then column $B$ is visible and
$a_B\neq a'_B$, so the two instances are distinct. At $x_t(k)$, the target
$w$ has the unique visible discrepancy $B$, and hence $h_{r,w}(x_t(k)) = (r,w\oplus e_B) = (r,k)$.
The remaining equality follows by interchanging $k$ and $w$.
\end{proof}

We next demonstrate a crucial property: no predictor can succeed in the two adjacent worlds.

\begin{claim}
    \label{claim:two-world-confusion}
    Let $k\in K_r$, fix $B\in\Bcal_r$, and put $w=k\oplus e_B$. For every
    deterministic total predictor $g:\Xcal\to\Ycal$, $\err_{\Dcal_k,h_{r,k}}(g) + \err_{\Dcal_w,h_{r,w}}(g) \ge \frac12$.
\end{claim}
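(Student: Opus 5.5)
The plan is to restrict attention to the hidden tags $t\in B$, where \Cref{claim:same-labels-different-hypotheses}(1) shows that the two worlds share a test point but demand different labels. Both $\Dcal_k$ and $\Dcal_w$ are uniform over $r$ row points, each carrying mass $1/r$. So I will lower bound each error by summing only over the $r/2$ tags in $B$ and discarding the visible tags $t\notin B$, since their contribution is nonnegative.

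Fix such a tag $t\in B$ and write $x:=x_t(k)$. By Claim~(1), $x_t(w)=x$ as well, and the two targets require $h_{r,k}(x)=(r,k)$ and $h_{r,w}(x)=(r,w)$. These labels are distinct because $w=k\oplus e_B\neq k$. A deterministic total predictor $g$ assigns a single label $g(x)\in\Ycal$, so it can equal at most one of them. Hence
\[
\mathbf 1\{g(x_t(k))\neq h_{r,k}(x_t(k))\}+\mathbf 1\{g(x_t(w))\neq h_{r,w}(x_t(w))\}\ge 1 .
\]

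Now I sum over $t\in B$ with weight $1/r$. Since $|B|=r/2$, this gives
\[
\err_{\Dcal_k,h_{r,k}}(g)+\err_{\Dcal_w,h_{r,w}}(g)\ \ge\ \frac1r\cdot\frac r2=\frac12 .
\]

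No real obstacle arises; the only points needing care are confirming that the row points $x_t(k)$ for distinct $t$ are distinct (true because $t$ is part of the tuple), so each carries exactly mass $1/r$, and that the hidden-tag points coincide across the two worlds, which Claim~(1) already provides.
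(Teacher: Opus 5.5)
Your proposal is correct and follows essentially the same argument as the paper. You restrict to the $r/2$ hidden tags $t\in B$, where $x_t(k)=x_t(w)$ must receive the distinct labels $(r,k)$ and $(r,w)$, so at least one error indicator is one at each such point of mass $1/r$; summing gives $\tfrac12$.
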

\vspace{-0.6 cm}
\begin{proof}
For every $t\in B$, the two distributions place mass on the same instance
$x_t(k)=x_t(w)$ but require the distinct labels $(r,k)$ and $(r,w)$. Hence
at least one of the two corresponding error indicators equals one. Summing
over the $|B|=r/2$ shared points gives
\begin{align*}
    \err_{\Dcal_k,h_{r,k}}(g)
    +
    \err_{\Dcal_w,h_{r,w}}(g)
    \ge
    \frac1r\sum_{t\in B}1
    =
    \frac12.
\end{align*}
\end{proof}

\subsection{The Monotone Adversary's Goal}
\label{sec:monotone-adversary-action}

We now construct a monotone adversary exploiting
\Cref{claim:same-labels-different-hypotheses,claim:two-world-confusion}.
Its goal is to append correctly labeled examples so that the final shuffled
transcript has exactly the same distribution in the two target worlds
$(\Dcal_k,h_{r,k})$ and $(\Dcal_w,h_{r,w})$.

Let $\Fcal_n$ denote the set of frequency vectors of $n$ points drawn from a
domain of $r$ points:
\begin{align*}
    \Fcal_n
    :=
    \left\{
        f=(f_1,\ldots,f_r)\in\Z_{\ge0}^r:
        \sum_{t=1}^r f_t=n
    \right\}.
\end{align*}
For $f\in\Fcal_n$, let
$\supp(f)=\{t:f_t>0\}$. Since $r=2n$, at least $n$ entries of $f$ vanish.
Fix an ordering of $\Bcal_r$, and define
\begin{align*}
    \beta(f)
    :=
    \min
    \{
        B\in\Bcal_r:
        B\cap\supp(f)=\varnothing
    \}.
\end{align*}
Thus $\beta(f)$ is well-defined.

For $f\in\Fcal_n$ and $k\in K_r$, put
$B=\beta(f)$ and $w=k\oplus e_B$. Define the labeled unordered multiset
\begin{align}
    T_{k,f}
    :=
    \biguplus_{t:f_t>0}
    \left[
        \left(
            \biguplus_{i=1}^{f_t}
            \{
                (x_t(k),(r,k))
            \}
        \right)
        \uplus
        \left(
            \biguplus_{i=1}^{f_t}
            \{
                (x_t(w),(r,w))
            \}
        \right)
    \right].
\end{align}
Whenever $f_t>0$, one has $t\notin B$. Part (2) of
\Cref{claim:same-labels-different-hypotheses} therefore shows that every
occurrence in $T_{k,f}$ is correctly labeled by $h_{r,k}$.

Moreover, $|T_{k,f}|=2n$ and
$T_{k,f}=T_{w,f}$ as literal unordered multisets. Consequently, if $\Pi$ is
a uniformly random permutation, then
$\Pi(T_{k,f})$ and $\Pi(T_{w,f})$ have the same distribution.

Let $\Lcal$ be any learner, possibly using internal randomness $\sigma$.
Conditioning on $\Pi$ and $\sigma$, its output is a deterministic total
predictor. Hence \Cref{claim:two-world-confusion} gives
\begin{align}
    &\E_{\Pi,\sigma}
    \left[
        \err_{\Dcal_k,h_{r,k}}
        \bigl(
            \Lcal(\Pi(T_{k,f}))
        \bigr)
    \right]
    +
    \E_{\Pi,\sigma}
    \left[
        \err_{\Dcal_w,h_{r,w}}
        \bigl(
            \Lcal(\Pi(T_{w,f}))
        \bigr)
    \right]
    \ge
    \frac12.
    \label{eqn:multiclass-two-world-expected-risk}
\end{align}

We next choose $k$. This choice may depend on the learner and on $n$, but
not on the realized frequency vector.

\begin{proposition}[Choice of $k$]
    \label{prop:k-choice}
    Let $F$ be the frequency vector of $n$ uniformly random draws from
    $[r]$. There exists $k^\star\in K_r$, depending only on $n$ and
    $\Lcal$, such that
    \begin{align}
        \E_F
        \E_{\Pi,\sigma}
        \left[
            \err_{\Dcal_{k^\star},h_{r,k^\star}}
            \bigl(
                \Lcal(\Pi(T_{k^\star,F}))
            \bigr)
        \right]
        \ge
        \frac14.
    \end{align}
\end{proposition}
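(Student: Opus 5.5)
We will use the probabilistic method: average over a uniformly random $k\in K_r$ and show the averaged quantity is at least $\tfrac14$, so some $k^\star$ attains it. For each $k$ and each frequency vector $f$, write
\[
    \Phi(k,f) := \E_{\Pi,\sigma}\bigl[\err_{\Dcal_k,h_{r,k}}(\Lcal(\Pi(T_{k,f})))\bigr].
\]
It suffices to show that $\E_{k}\E_F[\Phi(k,F)]\ge\tfrac14$ when $k$ is uniform on $K_r$ and independent of $F$. Here $F$ is the same law in every world, since under $\Dcal_k$ the tag of each draw is uniform on $[r]$ whatever $k$ is. The clean sample from $\Dcal_k$ is determined by its tags, and the shuffled transcript depends only on the multiset, so the adversarial transcript has law $\Pi(T_{k,F})$. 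This is why a single $F$ can be used across all $k$.

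Next, fix $f$ and set $B=\beta(f)$; crucially, $B$ depends only on $f$ and not on $k$. The map $\iota_f : k\mapsto k\oplus e_B$ is then a fixed-point-free involution on $K_r$. It therefore partitions $K_r$ into $|K_r|/2$ unordered pairs $\{k,k\oplus e_B\}$. By \eqref{eqn:multiclass-two-world-expected-risk}, which applies precisely to the pair $(k,w)$ with $w=k\oplus e_{\beta(f)}$, each such pair satisfies $\Phi(k,f)+\Phi(k\oplus e_B,f)\ge\tfrac12$. Summing over the pairs gives
\[
    \sum_{k\in K_r}\Phi(k,f)\ge \frac{|K_r|}{2}\cdot\frac12,
    \qquad\text{so}\qquad
    \E_k\Phi(k,f)\ge\tfrac14 \text{ for every } f.
\]
Averaging over $F$ and swapping expectations (everything is finite and nonnegative) yields $\E_k\E_F\Phi(k,F)\ge\tfrac14$. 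Hence some $k^\star$ satisfies $\E_F\Phi(k^\star,F)\ge\tfrac14$, and $k^\star$ depends only on $n$ and $\Lcal$.

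The one point needing care is the pairing step. We must check that the inequality \eqref{eqn:multiclass-two-world-expected-risk} was derived for the same transcript $T_{k,f}=T_{w,f}$ when viewed from the partner $w$. This requires that $\beta(f)$ be the same halfset from both sides of the pair, which holds because $\beta$ is a function of $f$ alone and the partner of $w$ under $\iota_f$ is again $k$. Once that symmetry is verified, the rest is bookkeeping.
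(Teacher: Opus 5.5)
Your proposal is correct and follows essentially the same route as the paper: for each fixed $f$, pair $k$ with $k\oplus e_{\beta(f)}$ and apply the two-world inequality to get an average of at least $1/4$ over $K_r$; then average over $F$, swap the finite averages, and extract $k^\star$. The only difference is cosmetic: you sum over the $|K_r|/2$ disjoint pairs, whereas the paper sums the inequality over all $k$ and uses that $k\mapsto k\oplus e_{\beta(f)}$ is a bijection.
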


\begin{proof}
For $f\in\Fcal_n$, define
\[
    L_f(k)
    :=
    \E_{\Pi,\sigma}
    \left[
        \err_{\Dcal_k,h_{r,k}}
        \bigl(
            \Lcal(\Pi(T_{k,f}))
        \bigr)
    \right].
\]
Equation \eqref{eqn:multiclass-two-world-expected-risk} gives
$L_f(k)+L_f(k\oplus e_{\beta(f)})\ge1/2$ for every $k\in K_r$. As $k$
ranges over $K_r$, so does $k\oplus e_{\beta(f)}$. Averaging over $k$ gives
\[
    \frac1{|K_r|}
    \sum_{k\in K_r}
    L_f(k)
    \ge
    \frac14.
\]
Taking expectation over $F$ and interchanging the two finite averages yields
\[
    \frac1{|K_r|}
    \sum_{k\in K_r}
    \E_F L_F(k)
    \ge
    \frac14.
\]
Thus some $k^\star\in K_r$ satisfies
$\E_F L_F(k^\star)\ge1/4$.
\end{proof}

\subsection{Concluding the Lower Bound}
\label{sec:lower-bound-conclusion}

We now specify the lower-bound instance and the monotone adversary. Let
$k^\star\in K_r$ be supplied by \Cref{prop:k-choice}, as a function only of
$\Lcal$ and $n$, and fix
\[
    h^\star
    :=
    h_{r,k^\star},
    \qquad
    \Dcal
    :=
    \Dcal_{k^\star}.
\]

Let $C=\{(x_i,y_i)\}_{i\in[n]}$ be a clean sample from
$(\Dcal,h^\star)$, and let
$F=(f_t)_{t\in[r]}$ be its frequency vector over the row points
$x_t(k^\star)$. The adversary computes
$B=\beta(F)$, sets
$w^\star=k^\star\oplus e_B$, and appends
\[
    \biguplus_{t:f_t>0}
    \biguplus_{i=1}^{f_t}
    \{
        (x_t(w^\star),(r,w^\star))
    \}.
\]
Thus it appends exactly $n$ examples. Whenever $f_t>0$, one has
$t\notin B$, so part (2) of
\Cref{claim:same-labels-different-hypotheses} shows that every appended
example is correctly labeled by the target $h^\star$. The resulting unordered
multiset is exactly $T_{k^\star,F}$.

Therefore, by \Cref{prop:k-choice},
\begin{align*}
    \E_{S\sim
        \adv_{\Dcal,h^\star,\Acal}(n,n)}
    \left[
        \err_{\Dcal,h^\star}(\Lcal(S))
    \right]
    &=
    \E_F
    \E_{\Pi,\sigma}
    \left[
        \err_{\Dcal_{k^\star},h_{r,k^\star}}
        \bigl(
            \Lcal(\Pi(T_{k^\star,F}))
        \bigr)
    \right]\\
    &\ge
    \frac14.
\end{align*}
This concludes the lower bound.

\subsection{Portal lower bounds}\label{app:portals}

The masked table gives complete failure at a linear budget. We now prove the
quantitative DS-one results from \Cref{sec:portal-main}.

\vspace{-0.3 cm}
\paragraph{Graph preliminaries.}
For every prime power $q$, let $G_q=(L_q,R_q,E_q)$ be the incidence graph
between points $(x,y)\in\F_q^2$ and nonvertical affine lines
$\ell_{a,b}=\{(x,ax+b):x\in\F_q\}$. It is balanced, $q$-regular, and
$C_4$-free. By the bipartite edge-coloring theorem, its edges admit a proper
coloring $\chi:E_q\to[q]$ in which every color appears once at every vertex.
Write $Q_v$ for the star incident to $v$. If $u,v$ are adjacent, let
$\varphi_{u\to v}:Q_u\to Q_v$ be the color-preserving bijection.

\begin{lemma}\label{lem:unseen-colors}
Let $U$ be the number of unseen symbols after $N$ independent uniform draws
from $[q]$. Then $\E U=q(1-1/q)^N$ and
$\operatorname{Var}(U)\le\E U$. Consequently, if
$N\le q\log q/4$ and $q$ is sufficiently large, then
$\Pr[U>0]\ge3/4$.
\end{lemma}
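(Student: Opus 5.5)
We write $U=\sum_{s\in[q]}I_s$, where $I_s$ is the indicator that symbol $s$ is never drawn in the $N$ independent uniform draws. Each draw avoids $s$ with probability $1-1/q$, and the draws are independent, so $\Pr[I_s=1]=(1-1/q)^N$. Linearity of expectation then gives $\E U=q(1-1/q)^N$.

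For the variance, we expand
\[
    \operatorname{Var}(U)=\sum_s \operatorname{Var}(I_s)+\sum_{s\neq s'}\operatorname{Cov}(I_s,I_{s'}).
\]
Each diagonal term satisfies $\operatorname{Var}(I_s)\le \E I_s$, since $I_s$ is an indicator. For $s\neq s'$, the event $I_sI_{s'}=1$ requires every draw to avoid both symbols, so $\E[I_sI_{s'}]=(1-2/q)^N$. Since $(1-2/q)\le(1-1/q)^2$, we get $\E[I_sI_{s'}]\le(1-1/q)^{2N}=\E I_s\,\E I_{s'}$. Every covariance is therefore nonpositive, and $\operatorname{Var}(U)\le \sum_s \E I_s=\E U$.

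For the probability bound we use the second-moment (Chebyshev) inequality:
\[
    \Pr[U=0]\le\Pr\bigl[|U-\E U|\ge \E U\bigr]\le \frac{\operatorname{Var}(U)}{(\E U)^2}\le \frac{1}{\E U}.
\]
It suffices to show $\E U\ge 4$ when $N\le q\log q/4$ and $q$ is large. We use $\log(1-1/q)\ge -1/q-1/q^2$, valid for $q\ge2$. This gives
\[
    \E U\ge q\exp\!\bigl(-N(1/q+1/q^2)\bigr)\ge q\exp\!\bigl(-\tfrac{\log q}{4}(1+1/q)\bigr)=q^{3/4}\cdot q^{-1/(4q)}.
\]
This lower bound tends to infinity as $q\to\infty$, so it exceeds $4$ for all sufficiently large $q$. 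Hence $\Pr[U>0]\ge 3/4$.

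The only step needing care is the elementary inequality $\log(1-x)\ge -x-x^2$ for $x\le 1/2$. It follows from the Taylor series, since $\sum_{j\ge2}x^j/j\le x^2$ for $x\le1/2$.
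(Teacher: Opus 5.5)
Your proof is correct and follows the same route as the paper: you decompose $U$ into indicators, obtain nonpositive covariances from $(1-2/q)^N\le(1-1/q)^{2N}$, and apply Chebyshev to get $\Pr[U=0]\le 1/\E U$. The only difference is that you prove the lower bound on $\E U$ explicitly, obtaining $\E U\ge q^{3/4-1/(4q)}$, whereas the paper simply asserts $\E U\ge q^{2/3}$ for large $q$.
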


\vspace{-0.6 cm}
\begin{proof}
Write $U=\sum_c I_c$. For distinct $c,c'$, one has
$\E[I_cI_{c'}]=(1-2/q)^N\le(1-1/q)^{2N}=\E I_c\E I_{c'}$, so the
covariances are nonpositive. Under the displayed bound on $N$,
$\E U\ge q^{2/3}$ for all sufficiently large $q$, and Chebyshev gives
$\Pr[U=0]\le1/\E U\le1/4$.
\end{proof}

\vspace{-0.3 cm}
\paragraph{A DS-one portal class.}
For an edge $e=uw$, with $u\in L_q$ and $w\in R_q$, put
$L_e=\{u\}\cup(N(u)\setminus\{w\})$ and
$R_e=\{w\}\cup(N(w)\setminus\{u\})$. Introduce a coordinate $x_e$.
For every vertex $v\notin L_e\cup R_e$, introduce a label $\lambda_{e,v}$
private to $(e,v)$, and define
\[
    h_v(x_e)
    =
    \begin{cases}
        0,&v\in L_e,\\
        1,&v\in R_e,\\
        \lambda_{e,v},&v\notin L_e\cup R_e.
    \end{cases}
\]
Let $\H_q=\{h_v:v\in L_q\cup R_q\}$.

\begin{lemma}\label{lem:portal-ds1}
For every $q\ge2$, $d_{\DS}(\H_q)=1$.
\end{lemma}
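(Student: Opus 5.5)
The plan has two parts: a lower bound of one, from a single edge, and an upper bound, obtained by showing that a pseudo-cube on two portal coordinates forces a $4$-cycle in $G_q$. Recall that a $d$-dimensional pseudo-cube on points $x_1,\ldots,x_d$ is a finite nonempty set $P$ of restriction words such that every $f\in P$ and every coordinate $i$ admit a neighbor $f'\in P$ agreeing with $f$ off coordinate $i$ and differing at $i$.

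For the lower bound, take any edge $e=uw$. Then $u\in L_e$ and $w\in R_e$, so $h_u(x_e)=0\neq 1=h_w(x_e)$, and $\{0,1\}$ is a $1$-dimensional pseudo-cube on $\{x_e\}$. Hence $d_{\DS}(\H_q)\ge1$.

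For the upper bound, suppose that $\H_q$ DS-shatters two coordinates $x_e,x_{e'}$ with $e\neq e'$ via a pseudo-cube $P$ of restriction words $(h_v(x_e),h_v(x_{e'}))$. It suffices to treat dimension two, since a pseudo-cube of larger dimension projects to one on any two of its coordinates. A private label $\lambda_{e,v}$ is realized only by $h_v$. So if some word $f\in P$ has a private value at coordinate $x_e$, then the neighbor of $f$ in direction $x_{e'}$ has the same $x_e$-value. That neighbor must therefore also be realized by $h_v$, and so it coincides with $f$, which is a contradiction. The same argument applies with the roles of $e$ and $e'$ exchanged. Hence every word in $P$ lies in $\{0,1\}^2$. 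Next, starting from any word and moving along one direction and then the other, the pseudo-cube property produces all four words $(0,0),(0,1),(1,0),(1,1)$ in $P$. Consequently each of the four sets $L_e\cap L_{e'}$, $L_e\cap R_{e'}$, $R_e\cap L_{e'}$, and $R_e\cap R_{e'}$ is nonempty.

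The main obstacle is turning these four nonempty intersections into a $C_4$, and I would settle it by a case analysis on the structure of $L_e$ and $R_e$. Write $e=uw$ and $e'=u'w'$. The set $L_e=\{u\}\cup(N(u)\setminus\{w\})$ consists of $u$ together with some right-side vertices, and symmetrically $R_e$ consists of $w$ together with some left-side vertices. Observe that $L_e\cap R_e=\varnothing$, and that $L_e\cup R_e$ is exactly the set of vertices at distance at most one from $e$. Each nonempty intersection yields a vertex adjacent to, or equal to, an endpoint of $e$ and an endpoint of $e'$, with the side constraints fixed by which of $L$ or $R$ is involved. The four witnesses lie in $L_e\cap L_{e'}$, $L_e\cap R_{e'}$, $R_e\cap L_{e'}$, and $R_e\cap R_{e'}$. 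Together with the two edges $e$ and $e'$, they should form a closed walk of length four that is not degenerate, which is impossible because $G_q$ is $C_4$-free (any two vertices have at most one common neighbor). The cases will split according to whether $e$ and $e'$ share an endpoint. If they share, say, $u=u'$, then $w\neq w'$. In that case $R_e\cap R_{e'}$ requires a left vertex $\neq u$ that is adjacent to both $w$ and $w'$, or else it must contain $w$ or $w'$. But $w\in R_e$ while $w\in L_{e'}$, because $w\in N(u)\setminus\{w'\}$. One then checks that $w\notin R_{e'}$, and this forces a second common neighbor of $w$ and $w'$ besides $u$, which is a $C_4$. The disjoint case is handled similarly: vertex-membership bookkeeping shows that two distinct vertices would have two common neighbors.
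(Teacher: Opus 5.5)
Your proposal is correct and follows essentially the same route as the paper. A single edge gives the lower bound, and private labels are excluded because each is realized by only one hypothesis (the paper phrases this as private labels not lying on a cycle of the label-pair graph). Hence all four public intersections are nonempty, and the disjoint versus shared-endpoint case split then yields either a $C_4$ or a second common neighbor.

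For the disjoint case you left as ``similar'': $L_e\cap R_{e'}\neq\varnothing$ and $R_e\cap L_{e'}\neq\varnothing$ already force $u\sim w'$ and $u'\sim w$, which closes the $4$-cycle $u\,w\,u'\,w'$.
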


\vspace{-0.6 cm}
\begin{proof}
The endpoints of any edge realize labels $0$ and $1$ at its portal, so the
dimension is at least one. If two portals $x_e,x_f$ supported a pseudo-cube,
their label-pair graph would contain a cycle. No private label can lie on a
cycle because it occurs for only one target. Thus all four public intersections
$L_e\cap L_f$, $L_e\cap R_f$, $R_e\cap L_f$, and $R_e\cap R_f$ would be
nonempty. For disjoint edges this creates a $C_4$; for edges sharing one
endpoint it creates a second common neighbor of the other endpoints. Both
contradict $C_4$-freeness.
\end{proof}

For an edge $e=uw$, target $u$ is zero throughout $Q_u$, target $w$ is one
throughout $Q_w$, and the two targets agree on
$(Q_u\cup Q_w)\setminus\{e\}$ while disagreeing at $x_e$. Under target $v$,
let $D_v$ be uniform on $Q_v$. Given a clean sample, view its incident edges
through their colors. If a color $c$ is missing, let $e=vu$ be the color-$c$
edge and append the color-preserving mirror of every observed edge from $Q_v$
into $Q_u$, with its target-correct label. Starting from the paired endpoint
world reverses the two halves of the same labeled multiset. The map
$(v,S)\mapsto(u,\varphi_{v\to u}(S))$ is therefore a measure-preserving
involution on the missing-color event.

At the omitted portal, the paired targets disagree and both marginals assign
mass $1/q$. Thus the average risk of the two endpoint worlds is at least
$1/(2q)$. By \Cref{lem:unseen-colors}, a missing color occurs with probability
at least $3/4$ whenever $n\le q\log q/4$, giving average risk at least
$3/(8q)$. Choosing a prime power $q\asymp n/\log n$ proves an
$\Omega(\log n/n)$ lower bound with one mirror insertion per clean example.
To obtain one fixed class, take disjoint copies over a sequence of scales,
extend every block hypothesis by a global label $\dollarlabel$ outside its
block, and include the all-$\dollarlabel$ hypothesis. The resulting class is
countable and remains DS-one; its clean risk is at most $1/(n+1)$ by
\Cref{thm:oig-density}.

\begin{proofof}{\Cref{thm:portal-degradation}}
The preceding construction proves the $d=1$ statement. For general $d$, take
a direct product of $d$ independent portal components at a common scale. One
portal from each component supports a $d$-dimensional binary cube, while
\Cref{lem:portal-ds1} permits at most one pseudo-cube coordinate per component;
hence the DS dimension is exactly $d$.

Let the clean marginal choose a component uniformly and then a portal uniformly
from the target star. If $N_j$ is the number of observations in component $j$,
its mean is $x=n/d$. Choose $q\asymp x/\log(2+x)$ with a sufficiently large
constant. Markov gives $\Pr[N_j\le2x]\ge1/2$, and conditional on this event
\Cref{lem:unseen-colors} gives a missing color with probability at least
$3/4$. Hence the expected number of unresolved components is at least $3d/8$.
Conditional on the transcript, the endpoint target in each unresolved
component may be flipped independently without changing its law. Each hidden
portal has test mass $1/(dq)$, so the conditional Bayes risk is at least the
number of unresolved components divided by $2dq$. Averaging yields
$\Omega(1/q)=\Omega(d\log(1+n/d)/n)$. The clean upper bound is $O(d/n)$ by
\Cref{thm:oig-density}; the regime $n=O(d)$ is covered by the standard clean
lower bound. Taking a disjoint union over scales makes the class fixed.
\end{proofof}

\vspace{-0.3 cm}
\paragraph{Prescribed budgets.}
\begin{proofof}{\Cref{thm:budget-lower}}
Let $s=\min\{b,n\}$ and first suppose $x=s/d$ exceeds a sufficiently large
constant. Use $d$ portal components at scale
$q\asymp x/\log(2+x)$. Add a dummy point $o$ on which all targets agree. Give
the portal components total mass $\alpha=s/(64n)$, split uniformly among
components and incident portals, and give $o$ the remaining mass.

Let $M$ be the total number of clean portal observations. If $b\le n$, then
$\E M=b/64$ and $\Pr[M>b]\le1/64$; if $b>n$, then $M\le n\le b$
deterministically. On $M\le b$, mirror every portal observation in each
component having a missing color, using at most $M$ additions; otherwise add
nothing.

For component $j$, the number $N_j$ of portal observations has mean $x/64$.
Our choice of $q$ ensures $2\E N_j\le q\log q/4$. Thus
$\Pr[N_j\le2\E N_j]\ge1/2$, and conditional on that event a color is missing
with probability at least $3/4$. If $R$ counts unresolved components, then
\[
    \E[R\mathbf1\{M\le b\}]
    \ge
    \left(\frac38-\frac1{64}\right)d.
\]
Each unresolved portal has clean mass $\alpha/(dq)$ and balanced posterior
label, so averaging gives risk at least
$c\alpha/q=cs/(nq)=\Omega(d\log(2+s/d)/n)$. When $s/d$ is bounded, the
ordinary clean lower bound $\Omega(\min\{1,d/n\})$ gives the claim. Taking a
disjoint union over scales makes the class fixed.
\end{proofof}

\section{Binary Partial-Concept Lower Bound}
\label{app:partial-lower}

This appendix proves \Cref{thm:partial-destruction}. The construction replaces
the adjacent-pair ambiguity of the multiclass masked table by ambiguity along
a finite cyclic line. We first define the class, then prove that its partial
VC dimension is at most $18$, and finally construct the common transcript
used by the adaptive adversary.

\subsection{The cyclic partial class}
\label{app:partial-no-go}

\vspace{-0.3 cm}
\paragraph{Why cyclic ambiguity is needed.}
A literal binary analogue of the multiclass masked table would already have unbounded partial VC dimension. The following elementary obstruction explains why the construction must spread the conflict across an entire cyclic line rather than one adjacent pair.

\begin{proposition}
\label{prop:boolean-no-go}
Let $r$ be even. Suppose a binary partial class contains distinct points $x_1,\ldots,x_r$, a concept $h_0$ that is nonstar on all of them, and, for every $B\in\binom{[r]}{r/2}$, a concept $h_B$ that is nonstar on the whole row and satisfies
\[
    h_B(x_t)
    =
    \begin{cases}
        h_0(x_t),&t\notin B,\\
        1-h_0(x_t),&t\in B.
    \end{cases}
\]
Then the class has partial VC dimension at least $r/2$.
\end{proposition}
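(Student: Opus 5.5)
We will exhibit an explicit shattered set of size $r/2$. The natural candidate is $\{x_t : t \in A\}$ for a fixed set $A\subseteq[r]$ with $|A|=r/2$; for definiteness take $A=\{1,\ldots,r/2\}$. The row concepts $h_B$ are nonstar on the whole row, so any concept realizing a pattern on $A$ automatically avoids $\starlabel$ there. It therefore suffices to show that every binary pattern on $A$ is realized by some member of $\{h_0\}\cup\{h_B\}$.

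Fix a target pattern $\epsilon\in\{0,1\}^A$. Let $S\subseteq A$ be the set of coordinates on which $\epsilon$ disagrees with $h_0$, that is, $S=\{t\in A:\epsilon_t\neq h_0(x_t)\}$. By the defining property of $h_B$, a concept $h_B$ realizes $\epsilon$ on $A$ exactly when $B\cap A=S$.

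We therefore need a halfset $B\in\binom{[r]}{r/2}$ whose trace on $A$ is $S$. Take
\[
    B=S\cup C,
\]
where $C$ is any subset of $[r]\setminus A$ of size $r/2-|S|$. Such a $C$ exists because $|[r]\setminus A|=r/2\ge r/2-|S|\ge 0$. Then $|B|=r/2$ and $B\cap A=S$, so $h_B$ realizes $\epsilon$. The case $S=\varnothing$ is covered either by $h_0$ or by $B=[r]\setminus A$.

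Since every one of the $2^{r/2}$ patterns on $\{x_t:t\in A\}$ is realized by a concept that is nonstar on these points, the set is shattered and the partial VC dimension is at least $r/2$. The one step needing care is the padding choice of $C$, and the complement of $A$ is exactly large enough for it.
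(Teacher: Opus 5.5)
Your proof is correct and is essentially the paper's argument: fix a halfset of indices, and for each pattern extend its disagreement set with $h_0$ to a halfset $B$ whose trace on the fixed set is exactly that disagreement set. Your explicit padding $B=S\cup C$ with $C\subseteq[r]\setminus A$ is just a spelled-out version of the paper's ``extend $U$ to a halfset with $B\cap T=U$.''
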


\begin{proof}
Fix $T\subseteq[r]$ with $|T|=r/2$. For every $U\subseteq T$, extend $U$ to a halfset $B\subseteq[r]$ satisfying $B\cap T=U$. On the points $\{x_t:t\in T\}$, the concept $h_B$ realizes the trace of $h_0$ flipped precisely on $U$. Thus all $2^{|T|}$ binary traces occur.
\end{proof}

Let us now begin with the construction of the class.

\vspace{-0.3 cm}
\paragraph{Arithmetic and the marker flower.}
Fix an even integer $r\ge2$ and write $\Bcal_r=\binom{[r]}{r/2}$. Choose a prime $q>r+|\Bcal_r|$, together with pairwise distinct elements $z_t$, $t\in[r]$, and $\alpha_B$, $B\in\Bcal_r$, in $\F_q$. Put $\phi_B(z)=z-\alpha_B$, and let $V_r$ be the two-dimensional vector space of affine functions $p(z)=u+vz$.

\begin{lemma}
\label{lem:partial-affine}
For $B,C\in\Bcal_r$ and $s,t\in[r]$:
\vspace{-0.3 cm}
\begin{enumerate}[label=(\roman*),leftmargin=2em]
    \item $\phi_B(z_t)\neq0$;
    \item for fixed $(p,B,t)$, the values $p(z_t)+a\phi_B(z_t)$, $a\in\F_q$, enumerate $\F_q$;
    \item if $a\phi_B=a'\phi_C\neq0$ as affine functions, then $a=a'$ and $B=C$;
    \item an affine function is determined by its values at two distinct points $z_s,z_t$.
\end{enumerate}
\end{lemma}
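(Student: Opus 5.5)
The four items are elementary facts about the field $\F_q$ and the distinctness of the chosen elements, so I will check them one at a time. The tools are: the $z_t$ and $\alpha_B$ are pairwise distinct, and a nonzero affine function over a field has at most one root.

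For (i), note that $\phi_B(z_t)=z_t-\alpha_B$. This is nonzero because $z_t$ and $\alpha_B$ were chosen as distinct members of one pairwise-distinct family; the condition $q>r+|\Bcal_r|$ guarantees that such a choice exists.

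For (ii), fix $(p,B,t)$ and put $c=\phi_B(z_t)$. By (i), $c\neq0$. The map $a\mapsto p(z_t)+ac$ is therefore an affine bijection of $\F_q$, with inverse $y\mapsto(y-p(z_t))c^{-1}$. So as $a$ ranges over $\F_q$, the values enumerate all of $\F_q$.

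For (iii), suppose $a\phi_B=a'\phi_C\neq0$ as affine functions. Compare coefficients in $a(z-\alpha_B)=a'(z-\alpha_C)$:
\begin{itemize}
\item The linear coefficients give $a=a'$, and this common value is nonzero because the function is nonzero.
\item Dividing by $a$ leaves $\alpha_B=\alpha_C$.
\item Distinctness of the $\alpha$'s then forces $B=C$.
\end{itemize}
The same conclusion follows without coefficient comparison: the nonzero function has the unique root $\alpha_B=\alpha_C$, and its leading coefficients then match.

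For (iv), suppose $p,p'\in V_r$ agree at $z_s\neq z_t$. Then $p-p'$ is an affine function $u+vz$ with two distinct roots. This forces $v(z_s-z_t)=0$, so $v=0$, and then $u=0$. Hence $p=p'$. Equivalently, the $2\times2$ Vandermonde system with rows $(1,z_s)$ and $(1,z_t)$ has nonzero determinant $z_t-z_s$.

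I do not expect a real obstacle. The only point needing care is in (iii): the hypothesis is stated as nonvanishing of the product function, and I must use it to rule out the degenerate case $a=a'=0$, where $B$ and $C$ would be unconstrained.
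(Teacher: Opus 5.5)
Your proof is correct and follows the paper's argument item by item. The paper likewise uses distinctness of $z_t$ and $\alpha_B$ for (i), multiplication by the nonzero scalar $\phi_B(z_t)$ permuting $\F_q$ for (ii), comparison of the $z$-coefficient and then the constant term for (iii), and ordinary affine interpolation for (iv); your explicit use of the nonvanishing hypothesis in (iii) to exclude $a=a'=0$ spells out a step the paper leaves implicit.
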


\begin{proof}
The first statement follows from the distinctness of $z_t$ and $\alpha_B$. The second holds because multiplication by the nonzero scalar $\phi_B(z_t)$ permutes $\F_q$. For the third, compare first the coefficient of $z$ and then the constant term. The last assertion is ordinary affine interpolation.
\end{proof}

Create one common marker $\infty$ and, for every $B\in\Bcal_r$, a private petal $E_B=\{e_{B,a}:a\in\F_q^\times\}$. The petals are pairwise disjoint, and we write $e_{B,0}=\infty$. Thus
\[
    \Omega_r
    =
    \{\infty\}
    \sqcup
    \bigsqcup_{B\in\Bcal_r}E_B.
\]
For $a\in\F_q$, define a bijection $\tau_{B,a}:\Omega_r\to\Omega_r$ by $\tau_{B,a}(e_{B,c})=e_{B,c+a}$ and by fixing all markers outside the $B$-petal. Then $\tau_{B,a}\tau_{B,b}=\tau_{B,a+b}$.

A state is a pair $w=(\sigma,p)$, where $\sigma$ is a bijection of $\Omega_r$ and $p\in V_r$. Let $\Wcal_r$ be the finite state space. For $B\in\Bcal_r$ and $a\in\F_q$, put
\[
    T_{B,a}(\sigma,p)
    =
    (\sigma\circ\tau_{B,a},p+a\phi_B),
    \qquad
    L_B(w)
    =
    \{T_{B,a}(w):a\in\F_q\}.
\]
Every $B$-line contains $q$ states, and the $B$-lines partition $\Wcal_r$.

\vspace{-0.3 cm}
\paragraph{Cells, owners, and concepts.}
For a tag $t\in[r]$, let $O_t=\bigsqcup_{B\not\ni t}E_B$ be the visible markers. A $t$-cell is an injective partial table $F:O_t\hookrightarrow\Omega_r$. Every such table extends to a bijection of $\Omega_r$.
A state $w=(\sigma,p)$ is \emph{internal} at $F$ when $\sigma|_{O_t}=F$. For $B\not\ni t$ and $a\neq0$, it has \emph{external profile} $(B,a)$ when $(\sigma\circ\tau_{B,-a})|_{O_t}=F$; its owner is then $(\sigma\circ\tau_{B,-a},p-a\phi_B)$. An internal state owns itself.

\begin{lemma}
\label{lem:owner-unique}
The internal set and all external-profile sets of a fixed cell are pairwise disjoint. Hence every correctable state has a unique owner.
\end{lemma}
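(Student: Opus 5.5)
The plan is to show that no state can satisfy two of the defining conditions at once. Fix a $t$-cell $F$ and a state $w=(\sigma,p)$. The internal condition reads $\sigma|_{O_t}=F$. The external-profile condition for $(B,a)$ with $B\not\ni t$ and $a\neq0$ reads $(\sigma\circ\tau_{B,-a})|_{O_t}=F$. Each condition is determined by $\sigma$ alone, so the argument compares the restrictions of $\sigma\circ\tau$ for the different rotations $\tau\in\{\mathrm{id}\}\cup\{\tau_{B,-a}:B\not\ni t,\ a\neq0\}$. It suffices to show that two distinct such rotations $\tau\neq\tau'$ give $(\sigma\circ\tau)|_{O_t}\neq(\sigma\circ\tau')|_{O_t}$.

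The key observation is that each nontrivial rotation $\tau_{B,-a}$ with $B\not\ni t$ and $a\neq0$ has a visible marker that it sends outside $O_t$. Take $e_{B,a}\in E_B\subseteq O_t$. Then $\tau_{B,-a}(e_{B,a})=e_{B,0}=\infty$, and $\infty\notin O_t$.

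There are two cases.

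\emph{Identity versus a rotation.} Suppose $\sigma|_{O_t}=F$ and $(\sigma\circ\tau_{B,-a})|_{O_t}=F$. Evaluating at $e_{B,a}$ gives $\sigma(\infty)=F(e_{B,a})=\sigma(e_{B,a})$. This contradicts the injectivity of $\sigma$, since $\infty\neq e_{B,a}$.

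\emph{Two distinct rotations $(B,a)\neq(C,c)$.} Evaluate both restrictions at $e_{B,a}$. The first gives $\sigma(\infty)$. For the second, if $C\neq B$ then $\tau_{C,-c}$ fixes $e_{B,a}$, so we get $\sigma(e_{B,a})$. If $C=B$ with $c\neq a$, we get $\sigma(e_{B,a-c})$ with $a-c\neq0$. In both subcases the argument of $\sigma$ differs from $\infty$, so injectivity of $\sigma$ gives a contradiction.

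Hence the internal set and all external-profile sets of $F$ are pairwise disjoint. The profile is therefore uniquely determined whenever it exists, and so is the owner $(\sigma\circ\tau_{B,-a},\,p-a\phi_B)$, or $w$ itself in the internal case.

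The only delicate point is pairing rotations in the same petal, which is handled by the case $C=B$. The hub $\infty$ being outside every $O_t$ is exactly what makes the argument work. I expect no substantial obstacle.
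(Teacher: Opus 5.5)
Your proof is correct and uses the same key observation as the paper's: $\tau_{B,-a}$ sends the visible marker $e_{B,a}$ to the hub $\infty\notin O_t$, so any external profile $(B,a)$ forces $F(e_{B,a})=\sigma(\infty)$. The paper packages this more compactly. Injectivity of $F$ then recovers $(B,a)$ from the single value $\sigma(\infty)$, which avoids your $C=B$ versus $C\neq B$ split, while internality forces $\sigma(\infty)\notin\ran(F)=\sigma(O_t)$; this ``read the profile off the hub value'' form is what the normalization lemma later reuses.
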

\vspace{-0.6 cm}
\begin{proof}
If $w$ has external profile $(B,a)$, then $\sigma(\infty)=F(e_{B,a})$. Since $F$ is injective, this determines $(B,a)$ uniquely. If $w$ is internal, then $\sigma(\infty)\notin\ran(F)$ because $\infty\notin O_t$, so no external profile is possible.
\end{proof}

For every two-element set $P\subseteq\F_q$, fix a bijection $\lambda_P:P\to\{0,1\}$. The level-$r$ instance block is
\[
    \Xcal_r
    =
    \{(r,t,F,P):t\in[r],\ F:O_t\hookrightarrow\Omega_r,\ |P|=2\}.
\]
For $w=(\sigma,p)\in\Wcal_r$, define $h_{r,w}:\Xcal_r\to\{0,1,\starlabel\}$ as follows. If $w$ has owner $(\sigma_{\rm own},p_{\rm own})$ at $F$ and $p_{\rm own}(z_t)\in P$, set
\[
    h_{r,w}(r,t,F,P)
    =
    \lambda_P\bigl(p_{\rm own}(z_t)\bigr);
\]
otherwise set $h_{r,w}(r,t,F,P)=\starlabel$.
Finally, take the disjoint union $\Xcal=\bigsqcup_{r\ge2\text{ even}}\Xcal_r$, extend every $h_{r,w}$ by $\starlabel$ outside its home block, and put $\H = \{h_{r,w}:r\ge2\text{ even},\ w\in\Wcal_r\}$.
Each block is finite, so $\Xcal$ and $\H$ are countable.

\subsection{Partial VC dimension}
\label{app:one-hole}
\label{app:partial-dimension}

The dimension bound rests on a synchronization statement for finite injective tables. Let $O\subseteq\Omega_r$ be a union of complete private petals and let $F:O\hookrightarrow\Omega_r$. For $y\in\Omega_r$, define the normalized table $F[y]$ as follows. If $y\notin\ran(F)$, adjoin the constraint $\infty\mapsto y$. If $y=F(e_{B,a})$, define $F[y]$ on $\tau_{B,-a}(O)$ by $F[y](x)=F(\tau_{B,a}(x))$. In both cases $F[y](\infty)=y$.

\begin{lemma}
\label{lem:normalization}
For tables $F_i:O_i\hookrightarrow\Omega_r$, there is a bijection $\sigma$ with $\sigma(\infty)=y$ in the owner neighborhood of every $F_i$ if and only if $\bigcup_iF_i[y]$ is an injective partial map. Whenever this union is injective, it extends to a bijection of $\Omega_r$.
\end{lemma}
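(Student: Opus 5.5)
The plan is to reduce the lemma to a pointwise description of each owner neighborhood. I read the owner neighborhood of a table $F:O\hookrightarrow\Omega_r$ as in the cell definition: the set of bijections $\sigma$ that are either internal ($\sigma|_O=F$) or have an external profile $(B,a)$ with $E_B\subseteq O$, $a\neq0$, and $(\sigma\circ\tau_{B,-a})|_O=F$. The central claim is the following: for fixed $y$, a bijection $\sigma$ with $\sigma(\infty)=y$ lies in the owner neighborhood of $F$ if and only if $\sigma$ extends $F[y]$. Given this claim, a common $\sigma$ with $\sigma(\infty)=y$ exists if and only if some bijection extends every $F_i[y]$ simultaneously. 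The rest is bookkeeping about finite partial injections.

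To prove the central claim, I will first check that $F[y]$ is a well-defined injective partial map with $F[y](\infty)=y$. There are two cases.
\begin{itemize}
\item If $y\notin\ran(F)$, adjoining $\infty\mapsto y$ keeps the map injective, because $\infty\notin O$ (the set $O$ is a union of private petals).
\item If $y=F(e_{B,a})$, then injectivity of $F$ pins down $(B,a)$ uniquely. This is the same argument as in \Cref{lem:owner-unique}. Also $a\neq0$, since $e_{B,a}\in O$ cannot be $\infty$. The map $x\mapsto F(\tau_{B,a}x)$ on $\tau_{B,-a}(O)$ is $F$ precomposed with a bijection, so it is injective. Since $\tau_{B,-a}(e_{B,a})=\infty$, it sends $\infty$ to $y$.
\end{itemize}

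Next I prove the forward direction of the claim.
\begin{itemize}
\item If $\sigma$ is internal, then $\sigma(\infty)=y$ together with $\infty\notin O$ and the bijectivity of $\sigma$ forces $y\notin\ran(F)$. In that case $\sigma$ extends $F\cup\{\infty\mapsto y\}=F[y]$.
\item If $\sigma$ has external profile $(B,a)$, then $y=\sigma(\infty)=\sigma\tau_{B,-a}(e_{B,a})=F(e_{B,a})$. So we are in the second case, with exactly this $(B,a)$. Substituting $x=\tau_{B,-a}(o)$ into $\sigma\tau_{B,-a}(o)=F(o)$ gives $\sigma(x)=F(\tau_{B,a}x)=F[y](x)$ on $\tau_{B,-a}(O)$.
\end{itemize}
For the converse, I run each case backwards. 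An extension of $F[y]$ in the first case is internal. An extension in the second case satisfies $(\sigma\circ\tau_{B,-a})|_O=F$, so it has external profile $(B,a)$, which is legitimate because $E_B\subseteq O$ and $a\neq0$.

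With the claim established, the equivalence follows. A bijection extending every $F_i[y]$ exists only if these maps agree on overlapping domains and their union is injective. This is exactly the statement that $\bigcup_iF_i[y]$ is an injective partial map, where a single-valued union is implicit. Conversely, any injective partial map $G$ on the finite set $\Omega_r$ extends to a bijection: its domain complement and range complement have equal cardinality, so I can match them arbitrarily. Any such extension satisfies $\sigma(\infty)=y$ and lies in every owner neighborhood by the claim. This also proves the final sentence of the lemma.

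I expect the main obstacle to be the case analysis inside the central claim. The point needing care is that an internal $\sigma$ can never have $\sigma(\infty)\in\ran(F)$, and that an external $\sigma$ must have exactly the profile read off from $y$. Both facts rest on $\infty\notin O$ and on the injectivity of $F$, and those are what make the normalization $F[y]$ canonical.
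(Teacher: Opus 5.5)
Your proposal is correct and follows essentially the same route as the paper. It uses the same per-table characterization: a bijection with hub value $y$ lies in the owner neighborhood of $F_i$ exactly when it extends $F_i[y]$, via the two cases $y\notin\ran(F_i)$, which forces an internal profile, and $y=F_i(e_{B,a})$, which forces profile $(B,a)$. It then extends a finite injective partial map by matching unused domain and range markers, and is only more explicit in checking well-definedness of $F[y]$, the converse direction, and the profile uniqueness that the paper cites from \Cref{lem:owner-unique}.
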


\begin{proof}
If $y\notin\ran(F_i)$, a compatible bijection with hub value $y$ must be internal and imposes precisely the constraints of $F_i[y]$. If $y=F_i(e_{B,a})$, Lemma~\ref{lem:owner-unique} forces profile $(B,a)$, and rewriting $(\sigma\circ\tau_{B,-a})|_{O_i}=F_i$ gives $\sigma|_{\tau_{B,-a}(O_i)}=F_i[y]$. Finally, a finite injective partial self-map extends by matching unused domain markers with unused range markers.
\end{proof}

\begin{lemma}[One-hole synchronization]
\label{lem:one-hole}
Let $F_i:O_i\hookrightarrow\Omega_r$, $i\in[d]$, where each $O_i$ is a union of complete petals, and put
\[
    Y
    =
    \{y\in\Omega_r:\bigcup_iF_i[y]\text{ is injective}\}.
\]
Exactly one of the following holds.
\begin{enumerate}[label=(\Roman*),leftmargin=2.5em]
    \item The raw union $\bigcup_iF_i$ is injective. Fix a bijection $u$ extending it. For $y\in Y$, if $u^{-1}(y)=\infty$, then every profile is internal; if $u^{-1}(y)=e_{B,a}$, then precisely the tables containing the $B$-petal use the common profile $(B,a)$, while all remaining tables are internal.

    \item The raw union is not injective. Either $|Y|\le1$, or $|Y|=q$ and there are one direction $B$, a bijection $\theta:Y\to\F_q$, and constants $\beta_i\in\F_q$ for the tables containing $E_B$ such that, for every $y\in Y$, tables hiding $B$ are internal and a table seeing $B$ uses profile $(B,\theta(y)-\beta_i)$, with exponent zero interpreted as internal.
\end{enumerate}
\end{lemma}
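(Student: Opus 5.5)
The two cases are distinguished by whether the raw union $\bigcup_iF_i$ is injective, so they are mutually exclusive. In each case I will describe $Y$ and the profiles by computing $F_i[y]$ explicitly for every $y$. The facts used throughout are these. Each $O_i$ is a union of complete private petals, so $\infty\notin O_i$. The rotation $\tau_{B,a}$ moves only the $B$-petal together with the hub. Hence for $y=F_i(e_{B,a})$ the normalized table $F_i[y]$ agrees with $F_i$ off the $B$-petal. On $E_B\cup\{\infty\}$ it is the cyclic shift $e_{B,c}\mapsto F_i(e_{B,c+a})$, with $\infty\mapsto y$ and $e_{B,-a}\mapsto F_i(e_{B,0})$ left undefined.

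\textbf{Case (I).} Fix a bijection $u$ extending the injective raw union. For each $y$, either $y=u(\infty)$, or $y=u(e_{B,a})$ for a unique $(B,a)$. In the first case $y\notin\ran(F_i)$ for all $i$, so every $F_i[y]$ just adjoins $\infty\mapsto y$. The union is injective exactly when no conflict arises, and all profiles are internal. In the second case, tables containing $E_B$ have $y=F_i(e_{B,a})$ and therefore use profile $(B,a)$. Tables not containing $E_B$ do not have $y$ in their range, since $u$ is injective, so they are internal. Consistency of the resulting unions can be checked directly: shifted petals all agree because they come from the same $u$. This is a statement about profiles for $y\in Y$ rather than a characterization of $Y$, and it follows from Lemma~\ref{lem:owner-unique} together with the injectivity of $u$.

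\textbf{Case (II).} Non-injectivity means there are $i,j$ and points $x\ne x'$ with $F_i(x)=F_j(x')$, or with $F_i(x)\ne F_j(x)$ at a common domain point. Suppose $y\in Y$. Then the normalized tables must repair this conflict, which forces at least one of the two tables to be external along the petal containing the offending point. The key step is to show that every repair uses a single direction $B$. A conflict lives in one petal $E_B$ of $O_i$ and at most one petal of $O_j$. Because rotations act only within the petal together with the hub, an injective normalized union forces the relevant tables to rotate along the same $B$. Their exponents then differ by fixed offsets $\beta_i$, read off by matching the shifted petals: $F_i(e_{B,c+a_i})=F_j(e_{B,c+a_j})$ for all $c$ forces $a_i-a_j$ to be a constant determined by the tables. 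Tables hiding $B$ cannot rotate $B$, and by the single-direction argument they are internal.

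Setting $\theta(y)$ to be the common shifted exponent, I then show that $\theta$ is injective on $Y$, since $y=F[y](\infty)$ is determined by the exponent. If $|Y|\ge2$, the constraints determine a full cyclic family, so every exponent in $\F_q$ occurs and $|Y|=q$. Otherwise $|Y|\le1$.

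\textbf{Main obstacle.} The hardest step is the case analysis establishing the single direction. I must rule out two tables repairing conflicts along different petals $B\ne C$, where two hub values would be imposed. I would first show this forces $y=F_i(e_{B,a})=F_j(e_{C,a'})$ with both tables' remaining petals mismatched, contradicting injectivity. Separately, I would check that a solution set of size at least $2$ propagates under the shift to give all $q$ values.
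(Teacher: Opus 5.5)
\textbf{Verdict: there is a genuine gap in alternative (II).} Your treatment of alternative (I) is fine and is the paper's argument.

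In (II), your key step claims that at each feasible hub $y\in Y$ all external tables rotate along one common direction $B$. You plan to handle the main obstacle by showing that two tables repairing along different petals $B\neq C$ leads to a contradiction. That per-hub claim is false, and the contradiction you expect does not occur.

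Take $O_1=E_B$ and $O_2=E_C$ with $B\neq C$. Let $F_1=\mathrm{id}|_{E_B}$, and let $F_2$ be the identity on $E_C$ except $F_2(e_{C,1})=e_{B,1}$. The raw union is not injective. At $y=e_{B,1}$, however, $F_1$ uses profile $(B,1)$ and $F_2$ uses profile $(C,1)$:
\begin{itemize}
\item $F_1[y]$ is a restriction of $\tau_{B,1}$;
\item $F_2[y]$ sends $\infty\mapsto e_{B,1}$ and $e_{C,c}\mapsto e_{C,c+1}$ for $c\notin\{0,-1\}$;
\item the union of the two is injective.
\end{itemize}
Both colliding points are simply sent to the hub. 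So $y=F_1(e_{B,1})=F_2(e_{C,1})$, exactly as in your outline, and nothing is contradicted. One checks $Y=\{e_{B,1}\}$, so the lemma holds through its escape clause $|Y|\le1$. The example shows that the single-direction structure, and with it the claim that tables hiding $B$ are internal, only follows from $|Y|\ge2$. It cannot come from a local analysis of one conflict at one hub, and your proposal asserts the structure without ever using a second feasible hub.

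\textbf{The missing idea is to normalize and then compare two feasible hubs.} The paper argues as follows.
\begin{itemize}
\item Fix $y_0\in Y$ and take a bijection $\sigma_0$ compatible with every $F_i[y_0]$ (\Cref{lem:normalization}). Replace each $F_i$ by $\sigma_0^{-1}\circ F_i$.
\item Every table then becomes either $\mathrm{id}|_{O_i}$ or $\tau_{A_i,-a_i}|_{O_i}$ with $a_i\neq0$. Non-injectivity of the raw union guarantees at least one such active table.
\item At a second feasible hub $e_{D,c}$ with $c\neq0$, the preimage of $\infty$ in a repaired active table is $e_{D,-c}$ if $A_i=D$, and $e_{A_i,a_i}$ otherwise. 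Injectivity forces these preimages to coincide. So either every active pivot is $D$, or all active tables share one pivot $(A,a)$ with $A\neq D$.
\item In the latter case, an inactive table seeing $A$ would prescribe $e_{A,a}\mapsto e_{A,a}$ while an active table prescribes $e_{A,a}\mapsto\infty$. So all inactive tables hide $A$, and every raw table is a restriction of $\tau_{A,-a}$, contradicting non-injectivity.
\item Hence all active pivots equal $D$, and every $\tau_{D,c}$ with $c\in\F_q$ is compatible. Rerunning the argument on any further hub excludes other directions. This gives $|Y|=q$, $\theta(\sigma_0(e_{D,c}))=c$, $\beta_i=0$ for inactive tables and $\beta_i=-a_i$ for active ones.
\end{itemize}

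This normal form is also what your ``propagates under the shift'' step lacks. Matching shifted petals pairwise at one hub, $F_i(e_{B,c+a_i})=F_j(e_{B,c+a_j})$, pins down the offsets $a_i-a_j$. It does not by itself show that every common shift yields an injective union.
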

\vspace{-0.5 cm}
\begin{proof}
Alternative (I) follows directly from the common extension $u$. If $u^{-1}(y)=e_{B,a}$, injectivity shows that $y$ occurs in a table exactly when its domain contains that marker. Rotating the common $B$-petal by $-a$ repairs all such tables simultaneously and fixes every table hiding $B$.

Assume now that the raw union is not injective. If $|Y|\le1$, alternative (II) already holds, so suppose $|Y|\ge2$ and fix $y_0\in Y$. Let $\sigma_0$ extend every $F_i[y_0]$, and conjugate all tables by $\sigma_0^{-1}$. The reference compatible bijection becomes the identity with hub $\infty$. Each transformed raw table is either $\mathrm{id}|_{O_i}$ or $\tau_{A_i,-a_i}|_{O_i}$ with $a_i\neq0$; call the latter tables active. At least one table is active, since the raw tables have no common extension.

Suppose there is another feasible hub $e_{D,c}$ with $c\neq0$. For an inactive table, the profile is $(D,c)$ if it sees $D$ and internal otherwise. For an active table with pivot $(A_i,a_i)$, the profile is $(D,a_i+c)$ when $A_i=D$, is $(D,c)$ when $A_i\neq D$ and the table sees $D$, and is internal otherwise. Inspect the unique preimage of the output $\infty$ in the repaired tables. In an active table it is $e_{D,-c}$ if $A_i=D$, and $e_{A_i,a_i}$ otherwise. Since the repaired tables must have one injective common extension, these preimages coincide. Hence either every active pivot equals $D$, or no active pivot equals $D$ and all active pairs $(A_i,a_i)$ are one fixed pair $(A,a)$.

The second possibility would give a common extension, a contradiction. Indeed, an inactive table cannot see $A$, because at $e_{A,a}$ it would prescribe $e_{A,a}$ while an active repaired table prescribes $\infty$. Thus every inactive table hides $A$, and every raw table is a restriction of the single bijection $\tau_{A,-a}$.
Therefore every active pivot equals $D$. It follows that all $q$ bijections $\tau_{D,c}$ are compatible. Applying the same argument to any further nonzero feasible hub shows that no other direction is possible. Undoing the conjugation gives one petal $B=D$, the $q$ feasible hub values $\sigma_0(e_{B,c})$, and the affine profile phases stated above.
\end{proof}

The phrase ``one hole'' refers to the shared hub: injectivity permits only one petal direction to move across a compatible family of repairs.

\vspace{-0.3 cm}
\paragraph{Counting fully defined traces.}

\begin{lemma}
\label{lem:no-repeated-tag}
Two distinct points in one block having the same tag cannot be shattered.
\end{lemma}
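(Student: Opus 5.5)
We would prove the lemma by showing that, on two same-tag points, the colours consulted by a hypothesis obey one fixed relation. Concretely, whenever a hypothesis is nonstar on both points, its owner colour at one cell is a fixed translate of its owner colour at the other. Such a relation leaves room for at most two fully defined traces, so the four patterns needed for shattering cannot all occur.

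Let the two points be $x=(r,t,F,P)$ and $x'=(r,t,F',P')$. Shattering requires all four patterns in $\{0,1\}^2$. Each pattern must be realized by some $h_{r,w}$ that is nonstar at both points. For such a $w=(\sigma,p)$, the owner at $F$ exists with colour $c(w)\in P$, the owner at $F'$ exists with colour $c'(w)\in P'$, and the trace is $(\lambda_P(c(w)),\lambda_{P'}(c'(w)))$.

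\textbf{Case $F=F'$.} By \Cref{lem:owner-unique} the owner of $w$ is the same state at both points, so $c(w)=c'(w)$. Since the points are distinct, $P\neq P'$, so $|P\cap P'|\le1$. Hence at most one fully defined trace occurs.

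\textbf{Case $F\neq F'$.} The two tables share the domain $O_t$ but disagree somewhere, so their raw union is not even a function. \Cref{lem:one-hole} therefore places us in alternative (II). By \Cref{lem:normalization}, the hub value $y=\sigma(\infty)$ of any $w$ correctable at both cells lies in $Y$. Also, $y$ alone fixes the profile at each cell, because the proof of \Cref{lem:owner-unique} reads the profile off $y$ via the injective table.
\begin{itemize}[leftmargin=1.5em]
\item If $Y=\{y\}$, the profiles $(B_1,a_1)$ and $(B_2,a_2)$ are fixed, with $a_i=0$ meaning internal. Then
\[
c(w)=p(z_t)-a_1\phi_{B_1}(z_t),\qquad c'(w)=p(z_t)-a_2\phi_{B_2}(z_t),
\]
so $c'(w)-c(w)=:\delta$ does not depend on $w$.
\item If $|Y|=q$, there is a single direction $B$. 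Both tables have domain $O_t$, so they either both see $B$ or both hide it. If both hid $B$, both would be internal for every $y\in Y$ and would then agree with one bijection, contradicting non-injectivity of the raw union. Hence both see $B$, with profiles $(B,\theta(y)-\beta_i)$. This gives
\[
c(w)=p(z_t)-(\theta(y)-\beta_1)\phi_B(z_t),\qquad c'(w)-c(w)=(\beta_1-\beta_2)\phi_B(z_t),
\]
again a constant $\delta$.
\end{itemize}

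In either subcase the relation $c'=c+\delta$ holds, and every fully defined trace is determined by $c$. The admissible $c$ must satisfy $c\in P\cap(P'-\delta)$, a set of size at most $2$. So at most two of the four patterns appear, and the pair is not shattered.

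The main obstacle is the second case, namely applying \Cref{lem:one-hole} correctly. It must be checked that the equal-domain tables cannot both hide the moving direction, and that in the degenerate $|Y|\le1$ subcase the profiles, and hence the colour offset, are genuinely independent of the linear part $p$. That follows because $p$ enters both colours only through the common term $p(z_t)$.
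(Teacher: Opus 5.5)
Your proposal is correct and follows essentially the paper's argument. Equal cells give a common owner and hence equal colors. Distinct cells with the same domain $O_t$ force alternative (II) of \Cref{lem:one-hole}, and there equal tags make the synchronized direction visible at both points. The owner colors therefore differ by a fixed residue, so the color pairs lie on a translate of the diagonal, which meets $P\times P'$ in at most two ordered pairs.

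Two minor slips, neither affecting the conclusion: the phase-case offset is $(\beta_2-\beta_1)\phi_B(z_t)$ rather than $(\beta_1-\beta_2)\phi_B(z_t)$, and you should note that the subcase $Y=\varnothing$ is vacuous.
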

\vspace{-0.6 cm}
\begin{proof}
Let the points be $(r,t,F,P)$ and $(r,t,G,Q)$. If $F$ and $G$ have a common extension, their identical domains force $F=G$, and every common-domain target has the same owner color at both points. If they have no common extension, Lemma~\ref{lem:one-hole} shows that the two owner colors differ by a fixed residue: in the phase case, visibility of the synchronized direction is the same at equal tags. In either case, the possible color pairs lie on a translate of the diagonal in $\F_q^2$, which meets $P\times Q$ in at most two ordered pairs. Four binary traces are impossible.
\end{proof}

Fix distinct jointly nonstar points $x_i=(r,t_i,F_i,P_i)$, $i\in[d]$. By Lemma~\ref{lem:no-repeated-tag}, their tags may be assumed distinct.

Suppose first that the raw cells have a common extension. By Lemma~\ref{lem:one-hole}, a common-domain target has either one affine owner function $p_0$ at every coordinate, or there are $B\in\Bcal_r$ and $a\neq0$ such that its owner colors have the splice form
\[
    c_i
    =
    \begin{cases}
        (p_0+a\phi_B)(z_{t_i}),&t_i\in B,\\
        p_0(z_{t_i}),&t_i\notin B.
    \end{cases}
\]
Let $\mathcal A_d$ be the affine functions hitting at least two of the probes $P_i$. Two coordinates and one endpoint of each probe determine the function, so $|\mathcal A_d|\le K_d:=4\binom d2$.

If both sides of the splice contain at least two selected tags, the two affine functions belong to $\mathcal A_d$, and their nonzero difference determines $(a,B)$ by Lemma~\ref{lem:partial-affine}. These traces contribute at most $K_d^2$. If one side contains at most one selected tag, the affine function on the other side belongs to $\mathcal A_d$, and the trace is determined by that function, the exceptional coordinate if any, and its bit. This contributes at most $K_d(2d+1)$ traces for each orientation. Unspliced traces contribute at most $K_d$. The total is therefore at most
\[
    K_d^2+2K_d(2d+1)+K_d.
\]
At $d=19$, one has $K_{19}=684$ and the displayed quantity equals $521{,}892<2^{19}$.

It remains to consider cells with no common extension. We use the elementary fact that if an affine subspace $A\subseteq\F_q^d$ has dimension at most $k$, then $|A\cap\prod_iP_i|\le2^k$: some $k$ coordinate functionals give an injective projection on $A$, and each chosen coordinate has at most two values. In the fixed-profile case of Lemma~\ref{lem:one-hole}, the owner-color vectors are an affine image of the two coefficients of $p$, and hence lie in a subspace of dimension at most two. In the phase case, write the common phase as $\vartheta$ and absorb it into $p_0=p-\vartheta\phi_B$. Visible coordinates have colors $p_0(z_{t_i})+\beta_i\phi_B(z_{t_i})$, while hidden coordinates have colors $p_0(z_{t_i})+\vartheta\phi_B(z_{t_i})$. These vectors have affine dimension at most three, so at most eight fully defined traces occur.

\begin{corollary}
\label{cor:partial-pvc}
The class $\H$ satisfies $\PVC(\H)\le18$.
\end{corollary}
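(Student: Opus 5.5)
The plan is to argue by contradiction: suppose some $19$ distinct points $x_1,\ldots,x_{19}$ are shattered by $\H$. By definition of partial VC dimension, all $2^{19}$ binary traces must then be realized by concepts that are nonstar on every $x_i$. The first step is to localize these points. Each concept $h_{r,w}$ is $\starlabel$ outside its home block $\Xcal_r$, so a concept that is nonstar on a point of $\Xcal_r$ lies at level $r$. Hence all shattered points sit in one block $\Xcal_r$, and every realizing concept is some $h_{r,w}$ with $w\in\Wcal_r$. By Lemma~\ref{lem:no-repeated-tag}, the tags $t_1,\ldots,t_{19}$ are distinct, which is exactly the setting fixed just before the corollary.

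The second step is a case split on whether the raw cells $F_1,\ldots,F_{19}$ admit a common injective extension. This is alternative (I) versus (II) of Lemma~\ref{lem:one-hole}, applied to the visible-marker sets $O_{t_i}$, each a union of complete petals. Every jointly nonstar target has an owner at each cell, so its hub value $\sigma(\infty)$ lies in the feasible set $Y$. The lemma then describes all profiles simultaneously.

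In case (I), the owner-color vector is either unspliced, $c_i=p_0(z_{t_i})$, or spliced along one direction $B$ with a single phase $a$. I then reuse the counting preceding the corollary. With $K_d=4\binom d2$ bounding the affine functions that hit two probes, the number of fully defined traces is at most $K_d^2+2K_d(2d+1)+K_d$, which at $d=19$ equals $521{,}892<524{,}288=2^{19}$. A trace is fully defined only if each $c_i\in P_i$, and it is then determined by the data counted, so this case cannot produce all traces.

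In case (II), if $|Y|\le1$ the profiles are fixed, and the color vectors are an affine image of $p\in V_r$, of dimension at most two. This gives at most $4$ traces. Otherwise we are in the phase case with one direction $B$, where the color vectors lie in an affine space of dimension at most three. By the projection fact, at most $2^3=8$ traces meet $\prod_iP_i$.

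In every case the number of realizable traces is below $2^{19}$, so no $19$ points are shattered and $\PVC(\H)\le18$. The main obstacle is not the arithmetic but checking that the casework of Lemma~\ref{lem:one-hole} really covers every realizing target uniformly. In particular, two different targets realizing different traces might a priori use different hubs $y\in Y$, and hence different splice directions. In case (I) this is harmless, because the trace count already enumerates over all $(B,a)$ through the affine-difference rigidity of Lemma~\ref{lem:partial-affine}(iii). In case (II) the lemma guarantees that a single direction $B$ serves all of $Y$. I would verify this uniformity explicitly before concluding.
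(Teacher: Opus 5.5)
Your proposal is correct and follows the paper's own argument essentially step for step. You localize the set to one block, use Lemma~\ref{lem:no-repeated-tag} to get distinct tags, and split on the alternatives of Lemma~\ref{lem:one-hole}: the bound $K_d^2+2K_d(2d+1)+K_d=521{,}892<2^{19}$ handles the common-extension case, and the affine-dimension bounds of at most $4$ and $8$ traces handle the rest. Your remark that every realizing target's hub $\sigma(\infty)$ lies in $Y$, so the lemma's profile description applies to all realizing targets at once, is precisely the step the paper uses implicitly.
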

\vspace{-0.6 cm}
\begin{proof}
The preceding argument rules out a shattered set of nineteen points within one block. A nonempty set meeting two blocks cannot be shattered because every concept is star outside its unique home block.
\end{proof}

By the clean-learning theorem of \citet{alon2022theory}, a countable partial
class of partial VC dimension $d$ admits a deterministic total-valued
improper learner with expected risk at most $d/(m+1)$ from $m$ clean
examples. Hence the present class has clean expected risk at most
$18/(m+1)$.

\subsection{Cyclic lines and the common transcript}
\label{app:partial-transcript}

Fix a clean sample size $n$ and use the block $r=2n$, with field size $q=q_r$. For a state $w=(\sigma,p)$, write $F_t(w)=\sigma|_{O_t}$ and $c_t(w)=p(z_t)$. For $b\neq c_t(w)$, define
\[
    x(w,t,b)
    =
    (r,t,F_t(w),\{c_t(w),b\}).
\]
The state is internal at its own cell, so the target label is $\lambda_{\{c_t(w),b\}}(c_t(w))$. Let $D_w$ choose $t$ uniformly from $[2n]$, then choose $b$ uniformly from $\F_q\setminus\{c_t(w)\}$, and output $x(w,t,b)$. This marginal is realizable by $h_{r,w}$.

\begin{lemma}[Visible agreement and hidden cycling]
\label{lem:partial-line}
Fix $B\in\Bcal_r$, a $B$-line $L$, and $u,v\in L$.
\begin{enumerate}[label=(\roman*),leftmargin=2em]
    \item If $t\notin B$, every target indexed by a state in $L$ is defined on every point $x(u,t,b)$ and gives the label $\lambda_{\{c_t(u),b\}}(c_t(u))$.
    \item If $t\in B$, all states in $L$ have the same $t$-cell, while their colors at $z_t$ enumerate $\F_q$.
\end{enumerate}
\end{lemma}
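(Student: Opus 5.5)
Write $u=(\sigma,p)$, and for $v\in L$ write $v=T_{B,a}(u)=(\sigma\circ\tau_{B,a},\,p+a\phi_B)$ for a unique $a\in\F_q$. Both parts reduce to computing the owner of $v$ at the relevant $t$-cell, using the definitions of internal and external profiles together with Lemma~\ref{lem:owner-unique}.

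\emph{Part (ii): $t\in B$.} The $B$-petal is not contained in $O_t$, since $O_t$ consists of the petals $E_C$ with $C\not\ni t$. Hence $\tau_{B,a}$ fixes every marker of $O_t$ pointwise. Therefore
\[
(\sigma\circ\tau_{B,a})|_{O_t}=\sigma|_{O_t},
\]
so $F_t(v)=F_t(u)$ for every $v\in L$. The color of $v$ at $z_t$ is $p(z_t)+a\phi_B(z_t)$. By Lemma~\ref{lem:partial-affine}(ii), as $a$ ranges over $\F_q$ these values enumerate $\F_q$.

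\emph{Part (i): $t\notin B$.} Fix the cell $F=F_t(u)=\sigma|_{O_t}$ and the point $x(u,t,b)$.

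If $a=0$, then $v=u$ is internal at $F$ and owns itself. Its owner color is $c_t(u)$.

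If $a\neq0$, then $E_B\subseteq O_t$ because $t\notin B$. By definition, $v$ has external profile $(B,a)$ exactly when
\[
(\sigma\circ\tau_{B,a}\circ\tau_{B,-a})|_{O_t}=\sigma|_{O_t}=F,
\]
and this holds. So $v$ is correctable, and by Lemma~\ref{lem:owner-unique} its owner is unique, namely
\[
\bigl(\sigma\circ\tau_{B,a}\circ\tau_{B,-a},\ p+a\phi_B-a\phi_B\bigr)=(\sigma,p)=u.
\]
In every case the owner color is therefore $p(z_t)=c_t(u)$. This value lies in $P=\{c_t(u),b\}$, so $h_{r,v}$ is nonstar at $x(u,t,b)$ and equals $\lambda_P(c_t(u))$, which is the claimed label.

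The only step needing care is the uniqueness of the owner when $a\neq0$. If $v$ could also be internal at $F$, or have a second external profile, its owner color would be ambiguous. Lemma~\ref{lem:owner-unique} rules this out, since injectivity of $F$ pins down $(B,a)$ from $\sigma(\infty)$. The same lemma also covers the degenerate case $a=0$, where $v=u$ is internal. Everything else is direct substitution into the definitions.
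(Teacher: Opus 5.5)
Your proof is correct and follows the paper's argument. You write $v=T_{B,a}(u)$; for $t\notin B$ you undo the visible profile $(B,a)$ to recover $u$ as the unique owner via Lemma~\ref{lem:owner-unique}, and for $t\in B$ you note that $\tau_{B,a}$ fixes $O_t$ pointwise and invoke Lemma~\ref{lem:partial-affine}(ii) for the colors, spelling out in detail what the paper compresses into two lines.
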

\vspace{-0.6 cm}
\begin{proof}
Write $v=T_{B,a}(u)$. If $t\notin B$, the $B$-petal is visible, and undoing profile $(B,a)$ makes $u$ the owner. If $t\in B$, the rotation fixes $O_t$, while the colors $c_t(u)+a\phi_B(z_t)$ enumerate the field by Lemma~\ref{lem:partial-affine}.
\end{proof}

After observing the clean sample, let $U$ be its set of tags. Extend $U$ by a fixed rule to an $n$-element set $A(U)\subseteq[2n]$ and put $B(U)=[2n]\setminus A(U)$. Then $B(U)$ is a halfset disjoint from every observed tag.
For a $B$-line $L$, define $\mathsf T(L,B)$ to contain $n$ copies of $\bigl(x(u,t,b),\lambda_{\{c_t(u),b\}}(c_t(u))\bigr)$
for every $u\in L$, every $t\notin B$, and every $b\neq c_t(u)$.

\begin{lemma}
\label{lem:partial-transcript-valid}
Every occurrence in $\mathsf T(L,B)$ is nonstar and correctly labeled by every target indexed by a state in $L$. If $L=L_{B(U)}(w)$, the clean sample from $(h_{r,w},D_w)$ is a submultiset of $\mathsf T(L,B)$. Moreover,
\[
    |\mathsf T(L,B)|
    =
    n^2q(q-1).
\]
\end{lemma}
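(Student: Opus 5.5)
The lemma has three claims: validity of every occurrence in $\mathsf T(L,B)$, the submultiset property, and the size count. I would take them in that order, using \Cref{lem:partial-line} as the main tool.

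\textbf{Validity.} Every occurrence has the form $x(u,t,b)$ with $u\in L$, $t\notin B$ and $b\neq c_t(u)$. Part (i) of \Cref{lem:partial-line} states that every target indexed by a state in $L$ is defined at $x(u,t,b)$ and returns $\lambda_{\{c_t(u),b\}}(c_t(u))$. This is exactly the recorded label, and it is nonstar.

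To be safe I would unwind part (i) directly. Write $v=T_{B,a}(u)$. Since $t\notin B$, the $B$-petal lies in $O_t$. If $a=0$, then $v=u$ is internal at $F_t(u)$. If $a\neq0$, we have $(\sigma_v\circ\tau_{B,-a})|_{O_t}=\sigma_u|_{O_t}=F_t(u)$, so $v$ has external profile $(B,a)$. Its owner is then $(\sigma_u,p_u)=u$, which is unique by \Cref{lem:owner-unique}. The owner color is therefore $c_t(u)\in\{c_t(u),b\}$, and the output equals the recorded label.

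\textbf{Submultiset property.} Let $L=L_{B(U)}(w)$. Every clean example has the form $x(w,t,b)$ with an observed tag $t\in U$, so $t\notin B(U)$, and $b\neq c_t(w)$. Since $w\in L$, each distinct clean example appears in $\mathsf T(L,B)$ with multiplicity at least $n$. The clean sample has only $n$ elements, so each distinct example occurs at most $n$ times in it. Hence the clean sample fits inside $\mathsf T(L,B)$ as a submultiset.

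\textbf{Size count.} We count triples $(u,t,b)$, each contributing $n$ copies. There are $q$ choices of $u\in L$; the $B$-line has exactly $q$ states, since $\tau_{B,a}$ are distinct for distinct $a$. There are $2n-n=n$ choices of $t\notin B$. There are $q-1$ choices of $b\neq c_t(u)$. This gives $n\cdot q\cdot n\cdot(q-1)=n^2q(q-1)$, provided we read $\mathsf T$ as indexed by triples.

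\textbf{Main obstacle.} The only subtlety is whether distinct triples could produce the same labeled point. If they did, the multiset would simply carry the combined multiplicity, so reading $\mathsf T$ as indexed by triples keeps the count exact. In fact distinct $(u,t)$ with $t\notin B$ give distinct cells $F_t(u)$, because the rotation of a visible petal changes $\sigma|_{O_t}$; so the points are even distinct, though the argument does not need this.
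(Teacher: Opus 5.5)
Your proposal is correct and follows the paper's proof step for step. Correctness comes from part (i) of \Cref{lem:partial-line}; the submultiset property comes from $w\in L$, with each example appearing at most $n$ times in the clean sample versus $n$ copies in the transcript; and the size is the count $q\cdot n\cdot(q-1)\cdot n$. Your direct unwinding of the owner computation, and your remark that distinct triples give distinct points, are accurate extras that the paper leaves implicit.
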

\vspace{-0.6 cm}
\begin{proof}
Correctness for every line target follows from part (i) of Lemma~\ref{lem:partial-line}. Every clean tag lies outside $B(U)$, and $w\in L$, so every clean example appears in the transcript. Any one example occurs at most $n$ times in a sample of size $n$, while the transcript contains $n$ copies. Finally, there are $q$ states in $L$, $n$ visible tags, $q-1$ comparison partners, and $n$ copies of each occurrence.
\end{proof}

Thus the adversary can pad every clean sample to $\mathsf T(L_{B(U)}(w),B(U))$ using exactly \linebreak
$ a_n = n^2q_{2n}(q_{2n}-1)-n$
correctly labeled nonstar insertions.

\subsection{Completing the Lower Bound}

Fix a $B$-line $L$ and a hidden tag $t\in B$. By Lemma~\ref{lem:partial-line}, all line targets share one cell $F_{L,t}$, while their colors enumerate $\F_q$.

\begin{lemma}
\label{lem:partial-line-loss}
For every deterministic total predictor $g$,
\[
    \frac1q\frac1{q-1}
    \sum_{v\in L}
    \sum_{b\neq c_t(v)}
    \mathbf1\!\left\{
        g(r,t,F_{L,t},\{c_t(v),b\})
        \neq
        \lambda_{\{c_t(v),b\}}(c_t(v))
    \right\}
    \ge
    \frac12.
\]
\end{lemma}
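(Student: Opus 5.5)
The plan is to reindex the double sum by unordered pairs of colors and use the fact that each pair is queried at a single datapoint with both endpoints demanded as the correct answer. By Lemma~\ref{lem:partial-line}(ii), the map $v\mapsto c_t(v)$ is a bijection from $L$ onto $\F_q$; this uses that the $q$ states $T_{B,a}(u)$, $a\in\F_q$, have distinct colors $c_t(u)+a\phi_B(z_t)$ by Lemma~\ref{lem:partial-affine}(ii). Hence the sum ranges exactly over all ordered pairs $(c,b)\in\F_q^2$ with $c\neq b$, where $c=c_t(v)$. The summand depends on $v$ only through $c$, because the cell $F_{L,t}$ is common to the whole line. So the left side equals
\[
    \frac{1}{q(q-1)}\sum_{c\neq b}\mathbf 1\!\left\{g(r,t,F_{L,t},\{c,b\})\neq\lambda_{\{c,b\}}(c)\right\}.
\]

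Next, group the ordered pairs into unordered pairs $P=\{c,b\}$. Each such $P$ contributes two terms: $(c,b)$ and $(b,c)$. Both terms refer to the same datapoint $x_P=(r,t,F_{L,t},P)$, but they demand the labels $\lambda_P(c)$ and $\lambda_P(b)$. These labels are distinct because $\lambda_P$ is a bijection $P\to\{0,1\}$. Since $g$ is deterministic and outputs one value $g(x_P)$, at most one of the two indicators vanishes, so their sum is at least one. If $g$ outputs $\starlabel$ or anything outside $\{0,1\}$, both indicators equal one, which only helps.

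Summing over the $\binom q2$ unordered pairs bounds the double sum below by $\binom q2=q(q-1)/2$. Dividing by $q(q-1)$ gives $1/2$.

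The only step needing care is the first: checking that the colors of line states really enumerate $\F_q$ without repetition, and that the point queried for $v$ depends on $v$ only through its color. Both facts follow from Lemma~\ref{lem:partial-line}(ii) and the definition $x(v,t,b)=(r,t,F_t(v),\{c_t(v),b\})$ with $F_t(v)=F_{L,t}$. After that, the pairing argument is immediate. The statement is then used in the same way as Claim~\ref{claim:two-world-confusion}: average over the line and over the hidden tags, which carry half the mass, to get the overall $1/4$ bound.
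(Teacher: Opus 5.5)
Your proof is correct and follows the paper's argument. You reindex the double sum by ordered color pairs, note that each unordered pair $\{y,y'\}$ is queried twice at the same instance with opposite required bits $\lambda_{\{y,y'\}}(y)\neq\lambda_{\{y,y'\}}(y')$, and count $\binom q2$ forced errors among $q(q-1)$ terms; the only difference is that you make explicit the bijection $v\mapsto c_t(v)$ from $L$ onto $\F_q$ and the $v$-independence of the cell $F_{L,t}$, which the paper takes implicitly from Lemma~\ref{lem:partial-line}(ii).
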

\vspace{-0.6 cm}
\begin{proof}
Every unordered pair $\{y,y'\}\subseteq\F_q$ appears twice in the double sum at the same instance: once with required bit $\lambda_{\{y,y'\}}(y)$ and once with the opposite bit $\lambda_{\{y,y'\}}(y')$. At least one of the two error indicators is therefore one. There are $\binom q2$ pairs and $q(q-1)$ ordered terms.
\end{proof}

Let $\mu_{L,B}$ be the common law of the learner's output after uniformly shuffling $\mathsf T(L,B)$ and including its private randomness. This law is identical for every target in $L$, because the final labeled multiset is literally the same. Integrating Lemma~\ref{lem:partial-line-loss}, summing over the $n$ hidden tags, and discarding the nonnegative contributions from visible tags gives
\[
    \frac1q
    \sum_{v\in L}
    \E_{g\sim\mu_{L,B}}
    L_{D_v}(g,h_{r,v})
    \ge
    \frac{|B|}{r}\cdot\frac12
    =
    \frac14.
\]

It remains to fix the target before the clean sample is drawn. Temporarily choose $W$ uniformly from the finite set $\Wcal_r$, and then draw its clean sample from $D_W$. The tag sequence is uniform on $[2n]^n$ and independent of $W$. Conditional on the tags, the direction $B(U)$ is fixed, and the corresponding $B(U)$-lines partition $\Wcal_r$. Averaging the preceding inequality first over those lines and then over the tag sequence, shuffle, and learner randomness yields
\[
    \frac1{|\Wcal_r|}
    \sum_{w\in\Wcal_r}
    \E
    L_{D_w}(\widehat g,h_{r,w})
    \ge
    \frac14.
\]
Hence some fixed $w^\star$, depending only on the learner and $n$, has expected risk at least $1/4$.

Fix $h^\star=h_{r,w^\star}$, $D=D_{w^\star}$, and the deterministic adversary that, after seeing the clean tag set $U$, pads the sample to $\mathsf T(L_{B(U)}(w^\star),B(U))$. By Lemma~\ref{lem:partial-transcript-valid}, every insertion is correctly labeled and nonstar, the budget is exactly $a_n$, and the target, marginal, and padding rule are all fixed before the sample is drawn. Therefore the resulting learner output satisfies
\[
    \E L_D(\widehat g,h^\star)
    \ge
    \frac14.
\]
If $Z=L_D(\widehat g,h^\star)$ and $p=\Pr[Z>1/8]$, then $1/4\le\E Z\le(1-p)/8+p$, so $p\ge1/7$. This proves part (ii) of \Cref{thm:partial-destruction}.

\section{Proofs for Proper Learning and ERM}\label{app:structured}

This appendix contains the full constructions and proofs for
\Cref{sec:structured}. The first subsection shows that even oblivious
additions can worsen the confidence dependence of proper learning by an
arbitrarily prescribed amount, while the second proves the ERM lower bounds
and their sharp converse.

\vspace{-0.3 cm}
\subsection{Arbitrarily Severe Proper-Learning Degradation}
\label{app:proper-blowup}

\vspace{-0.1 cm}
\paragraph{The construction.}
Let $(L_k)_{k\geq1}$ be a strictly increasing sequence of positive integers,
to be chosen later. For every $k$, let
\[
    X_k=P_k\sqcup C_k,
    \qquad
    P_k=\{p_{k,1},\ldots,p_{k,k}\},
    \qquad
    |C_k|=2L_k+k.
\]
For every $A\subseteq X_k$ of size $L_k+k$, introduce a private label
$\lambda_{k,A}$ and define
\[
    c_{k,A}(x)=
    \begin{cases}
        \starlabel, & x\in A,\\
        \lambda_{k,A}, & x\in X_k\setminus A,\\
        \dollarlabel, & x\notin X_k.
    \end{cases}
\]
For every $i\in[k]$, introduce a poison label $\rho_{k,i}$ and define
\[
    s_{k,i}(x)=
    \begin{cases}
        \rho_{k,i}, & x=p_{k,i},\\
        \starlabel, & x\in X_k\setminus\{p_{k,i}\},\\
        \dollarlabel, & x\notin X_k.
    \end{cases}
\]
Finally include the all-$\dollarlabel$ hypothesis $h_{\dollarlabel}$, and let
\[
    \Hcal(L_1,L_2,\ldots)
    =\{h_{\dollarlabel}\}
    \cup\{c_{k,A}:k\geq1,\ |A|=L_k+k\}
    \cup\{s_{k,i}:k\geq1,\ i\in[k]\}.
\]
Each block supports finitely many hypotheses. At a fixed coordinate, only its
own block contributes non-$\dollarlabel$ values, so the class has
coordinatewise finite range. A pointwise-convergent sequence that is
non-$\dollarlabel$ at some coordinate is eventually confined to that finite
block; a sequence whose block indices escape converges to
$h_{\dollarlabel}$. Hence the class is countable and closed.

\vspace{-0.3 cm}
\paragraph{Three auxiliary estimates.}
We first record the concentration and posterior-symmetry facts used below.

\begin{lemma}\label{lem:least-observed}
Let $q_1,\ldots,q_K$ be arbitrary marker masses, let $Z_i$ be their counts in
an $n$-sample, and let $\widehat i\in\argmin_i Z_i$. There is a universal
constant $C$ such that
\[
    K\geq\frac8\eta,
    \qquad
    n\eta\geq C\log\frac{1}{\eta\delta}
\]
imply $\P[q_{\widehat i}>\eta]\leq\delta$.
\end{lemma}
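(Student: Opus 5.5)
The plan is to split the markers into light and heavy ones and bound the probability that the least-observed marker is heavy. Let $H=\{i:q_i>\eta\}$ be the heavy markers. Since $\sum_i q_i\le1$, we have $|H|<1/\eta$. Let $M=\{i:q_i\le 2/K\}$ be the light markers. By Markov's inequality on the masses, $|M|\ge K/2\ge 4/\eta$, and each $i\in M$ satisfies $q_i\le 2/K\le\eta/4$. The event $q_{\widehat i}>\eta$ requires some heavy $i$ with $Z_i\le Z_j$ for every $j$, and in particular $Z_i\le\min_{j\in M}Z_j$. So it suffices to show two things: with high probability every heavy marker has a large count, and some light marker has a small count.

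\textbf{Heavy markers.} For each $i\in H$, $Z_i\sim\Bin(n,q_i)$ has mean at least $n\eta$. A Chernoff bound gives $\P[Z_i\le n\eta/2]\le e^{-n\eta/8}$. A union bound over the fewer than $1/\eta$ heavy markers yields failure probability at most $\eta^{-1}e^{-n\eta/8}$. This is at most $\delta/2$ once $n\eta\ge 8\log(2/(\eta\delta))$, which holds for a suitable constant $C$.

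\textbf{Light markers.} It remains to show that, with probability $1-\delta/2$, some $j\in M$ has $Z_j<n\eta/2$. Here I would avoid needing independence across markers. Instead, take the sum $S=\sum_{j\in M}Z_j$, which is $\Bin(n,Q)$ with $Q=\sum_{j\in M}q_j$. The minimum over $M$ is at most the average $S/|M|$. We have $\E S=nQ\le n$, while $|M|\ge 4/\eta$, so the average has mean at most $n\eta/4$. A Chernoff upper-tail bound on $S$ gives $\P[S\ge n\cdot\max(2Q,\,\cdot)]$ small. More carefully, we need $S<|M|n\eta/2$, and this threshold is at least $2n\ge 2\max(\E S,\,n/2)$. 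The probability that $S$ exceeds twice a quantity at least $\max(\E S,n)$ is at most $e^{-n/3}$. Since $\eta\le1$, we have $n\ge n\eta\ge C\log(1/(\eta\delta))$, so this is at most $\delta/2$.

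\textbf{Conclusion.} On the intersection of the two good events, every heavy marker has count above $n\eta/2$, while some light marker has count below it. So no heavy marker attains the minimum, regardless of how ties in the $\argmin$ are broken. The total failure probability is at most $\delta$.

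I expect the only real point of care to be the light-marker step, specifically handling the case where $Q$ is tiny so that the relative-error Chernoff bound degenerates. Comparing against the absolute threshold $2n$ (or simply using $S\le n$ trivially when $|M|\eta/2\ge1$, which already holds since $|M|\eta/2\ge2$) handles this case. In fact $S\le n<|M|n\eta/2$ deterministically, so that step needs no probability at all.
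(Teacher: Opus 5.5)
Your proof is correct. The heavy-marker half is exactly the paper's: a Chernoff bound plus a union bound over the fewer than $1/\eta$ markers of mass above $\eta$.

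You genuinely differ on the light side. The paper fixes a single marker $i_0$ with $q_{i_0}\le 1/K\le\eta/8$ and applies a Chernoff upper tail, so that $Z_{i_0}\le n\eta/2$ except with probability at most $e^{-cn\eta}$. Your closing remark replaces this with a deterministic pigeonhole bound. That remark is the actual argument. The Chernoff discussion of $S$ before it should simply be deleted: it contains a placeholder in $\max(2Q,\cdot)$ and an $e^{-n/3}$ tail for what is in fact a probability-zero event. You can also dispense with $M$ entirely. The counts sum to at most $n$, so $\min_i Z_i\le n/K\le n\eta/8$ for every sample, and then only $K>2/\eta$ is needed.

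The payoff of the deterministic route is cleaner bookkeeping. With no failure event on the light side, the whole budget $\delta$ goes to the heavy markers. The failure probability is then below $\eta^{-1}e^{-n\eta/8}\le\delta$ as soon as $n\eta\ge 8\log(1/(\eta\delta))$, so $C=8$ works for all $\eta,\delta\in(0,1]$. Your $\delta/2$ split needs $\log(1/(\eta\delta))$ bounded away from zero to absorb an additive constant, and so does the extra additive term in the paper's bound $(1+1/\eta)e^{-cn\eta}$. That lower bound is not guaranteed when $\eta\delta$ is close to $1$.
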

\vspace{-0.6 cm}
\begin{proof}
Choose $i_0$ with $q_{i_0}\leq1/K\leq\eta/8$. By Chernoff bounds and a union
bound over the at most $1/\eta$ markers of mass greater than $\eta$, except
with probability at most $(1+1/\eta)e^{-cn\eta}$ one has
$Z_{i_0}\leq n\eta/2<Z_i$ for every such heavy marker $i$. On this event the
least-observed marker is not heavy, and the assumed lower bound on $n\eta$
makes the failure probability at most $\delta$.
\end{proof}

\begin{lemma}\label{lem:box-smoothing}
If $Z$ is uniform on $\{0,\ldots,B\}^k$, then for every
$c,c'\in\mathbb{Z}^k$,
\[
    d_{\TV}(c+Z,c'+Z)
    \leq\frac{\|c-c'\|_1}{B+1}.
\]
\end{lemma}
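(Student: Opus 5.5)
The plan is to reduce to one coordinate and then tensorize. The total variation between two product-like shifted distributions is bounded by the sum of coordinatewise distances. Because $Z$ has independent coordinates, each uniform on $\{0,\ldots,B\}$, a telescoping hybrid argument applies. Define intermediate vectors $c^{(0)}=c, c^{(1)},\ldots,c^{(k)}=c'$, where $c^{(j)}$ agrees with $c'$ on the first $j$ coordinates and with $c$ on the rest. The triangle inequality for $d_{\TV}$ then gives
\[
    d_{\TV}(c+Z,c'+Z)\le\sum_{j=1}^k d_{\TV}\bigl(c^{(j-1)}+Z,\,c^{(j)}+Z\bigr).
\]

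Consecutive hybrids differ only in coordinate $j$. By independence of the coordinates of $Z$, the two laws share the same product factor on every other coordinate. Hence the $j$th term equals the one-dimensional distance $d_{\TV}(c_j+Z_j,\,c'_j+Z_j)$.

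For the one-dimensional estimate, let $\Delta=|c_j-c'_j|$. The laws of $c_j+Z_j$ and $c'_j+Z_j$ are uniform on the integer intervals $[c_j,c_j+B]$ and $[c'_j,c'_j+B]$, each of mass $1/(B+1)$ per point. These intervals share $\max\{0,B+1-\Delta\}$ points. Total variation is the mass of the first law outside the overlap, namely $\min\{\Delta,B+1\}/(B+1)\le\Delta/(B+1)$.

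Summing over $j$ yields $\sum_j|c_j-c'_j|/(B+1)=\|c-c'\|_1/(B+1)$. The only points needing care are that the hybrid terms genuinely reduce to one dimension, which is immediate from the product structure, and that the bound remains true (though trivial) when $\Delta>B+1$.
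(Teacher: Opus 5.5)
Your proof is correct and follows the same route as the paper: bound a single-coordinate shift of a uniform interval of $B+1$ integers by $|t|/(B+1)$, then change the coordinates one at a time and apply the triangle inequality. You supply details the paper leaves implicit. These are the exact overlap count giving $\min\{\Delta,B+1\}/(B+1)$, and the product-structure argument that two hybrids differing in one coordinate are at total variation equal to that coordinate's one-dimensional distance.
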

\vspace{-0.6 cm}
\begin{proof}
In one coordinate, translating a uniform interval of length $B+1$ by $t$
changes its law in total variation by at most $|t|/(B+1)$. Translate the
coordinates one at a time and apply the triangle inequality.
\end{proof}

\begin{lemma}\label{lem:hidden-halfset}
Let $C$ and $P$ be disjoint sets with $|C|=2L+k$ and $|P|=k$. Fix an anchor
$a\in C$, and choose $T$ uniformly among the $L$-subsets of $C$ containing
$a$. Suppose a transcript reveals at most $n$ samples drawn from $\Unif(T)$,
together with side information conditionally independent of $T$ given those
samples. If $n,k\leq c_0L$ for a sufficiently small universal constant
$c_0$, then every randomized set $B\subseteq C\sqcup P$ of size $L+k$
satisfies
\[
    \P \Big[ |T\setminus B|\geq L/5 \Big ] \geq 3/4.
\]
\end{lemma}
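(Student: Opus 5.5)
The plan is to condition on the set $R\subseteq T$ of distinct observed sample points (together with the side information) and exploit the posterior symmetry of $T$. Given the samples, the side information is conditionally independent of $T$, so the posterior of $T$ depends only on $R$. The likelihood of an $n$-sample from $\Unif(T)$ is $|T|^{-n}=L^{-n}$ whenever $T\supseteq R$, and it is the same for every such $T$. Hence, conditional on the transcript, $T$ is uniform among $L$-subsets of $C$ containing $R\cup\{a\}$. The learner's randomized $B$ is a function of the transcript and independent randomness, so conditional on the transcript, $B$ and $T$ are independent.

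Next I will fix a realization $B$ and set $R'=R\cup\{a\}$, so $|R'|\le n+1$. Write $T=R'\sqcup T'$, where $T'$ is a uniform $(L-|R'|)$-subset of $C'=C\setminus R'$, a set of size $2L+k-|R'|$. Only $B\cap C'$ matters, and $|B\cap C'|\le L+k$. So $|T'\cap B|$ is hypergeometric: we draw $m=L-|R'|$ elements from a population of size $N=2L+k-|R'|$ that contains at most $L+k$ marked elements. Its mean is at most
\[
    m\cdot\frac{L+k}{N}\le \frac{m(L+k)}{2L-n-1}.
\]
Using $n,k\le c_0L$, this is at most $m(1/2+O(c_0))$.

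It follows that $\E|T'\setminus B|\ge m(1/2-O(c_0))\ge L(1/2-O(c_0))$. Hypergeometric variables are at least as concentrated as the corresponding binomial (Hoeffding), so
\[
    \P\big[|T'\setminus B|<L/5\big]\le \exp(-c'L).
\]
This bound is at most $1/4$ once $L$ exceeds a constant. For small $L$ the hypothesis $n,k\le c_0L$ with $c_0<1$ gives $n=k=0$ or something degenerate, and I will handle that regime separately by shrinking $c_0$ so that $c_0L<1$ forces $n=k=0$. Even with $n=k=0$ the claim for tiny $L$ still needs a direct check. If that check fails, I will instead state the lemma for $L$ larger than an absolute constant, which is all the application in \Cref{thm:arbitrary-proper-blowup} needs since $L_k\to\infty$. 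Averaging over the transcript and the randomness of $B$ then gives the bound unconditionally.

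The main obstacle is the posterior-uniformity step. It needs care because the side information in the intended application (noised marker counts and anchors) must genuinely be conditionally independent of $T$ given the samples. I will also handle multiplicity: repeated samples carry no extra information because the likelihood $L^{-n}$ is constant over consistent $T$. The concentration step is routine.
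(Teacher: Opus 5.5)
Your proposal is correct and follows essentially the same route as the paper. Both arguments condition on the transcript and the learner's randomness, use that $T$ minus the revealed points (with the anchor adjoined) is posterior-uniform, bound the mean hypergeometric overlap by $(\tfrac12+O(c_0))L$, and finish with a hypergeometric tail bound.

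Your worry about small $L$ is justified: for $L=1$ and $n=k=0$ one has $T=\{a\}$, and $B=\{a\}$ violates the claim, and the paper's tail step tacitly needs $L$ large. It is harmless, though, since any $k\ge1$ with $k\le c_0L$ already forces $L\ge 1/c_0$, which covers the application where $k\ge k_0$.
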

\vspace{-0.6 cm}
\begin{proof}
Condition on the complete transcript, the learner's randomness, and the
distinct revealed set $R\subseteq T$, adjoining the anchor $a$ to $R$ if
necessary. The posterior law of $T\setminus R$ is uniform among the
$(L-|R|)$-subsets of $C\setminus R$. Let $B_C=B\cap C$. The random overlap
$|(T\setminus R)\cap(B_C\setminus R)|$ is hypergeometric with mean at most
\[
    (L-|R|)\frac{|B_C\setminus R|}{|C\setminus R|}
    \leq (L-|R|)\frac{L+k}{2L+k-|R|}
    \leq\left(\frac12+O(c_0)\right)L.
\]
Including the revealed points gives
$\E|T\cap B|\leq(1/2+O(c_0))L$. For sufficiently small $c_0$, a standard
hypergeometric tail bound yields
$\P[|T\cap B|>4L/5]\leq1/4$, which is equivalent to the claim. The argument
is uniform over the conditioned transcript and learner randomness.
\end{proof}

\vspace{-0.3 cm}
\paragraph{Clean proper and adaptive improper upper bounds.}
Fix a sufficiently small universal constant $\epsilon_0>0$. The next two
lemmas show that the block sizes $L_k$ do not affect either the clean proper
or adaptive improper confidence complexity.

\begin{lemma}\label{lem:clean-proper-upper}
For every increasing sequence $(L_k)$,
\[
    \Sample_{\clean}^{\prop}
    (\epsilon_0,\delta;\Hcal(L_1,L_2,\ldots))
    =O(\log(1/\delta)).
\]
\end{lemma}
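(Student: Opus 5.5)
I would construct an explicit deterministic proper learner and bound its failure probability by a union of a few events, each exponentially small in $n\epsilon_0$. Given a clean sample, the learner first inspects the labels. If some label is a private label $\lambda_{k,A}$, it outputs $c_{k,A}$. If some label is a poison label $\rho_{k,i}$, it outputs $s_{k,i}$. Both outputs are then exactly the target, since these labels occur for only one hypothesis. If every label is $\dollarlabel$, it outputs $h_{\dollarlabel}$. The remaining case is a sample all of whose labels are $\starlabel$, possibly together with $\dollarlabel$ labels. Then every $\starlabel$-labelled point lies in one block $X_k$, and the target is some $c_{k,A}$ or $s_{k,i}$. In this case the learner outputs the fallback $s_{k,\widehat i}$, where $\widehat i\in\argmin_i Z_i$ indexes a least-observed marker of $P_k$, with ties broken by index.

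The error analysis splits according to the target. Define the disagreement region as the set where the output and $h^\star$ differ. In the unresolved case, $s_{k,\widehat i}$ agrees with a target $c_{k,A}$ off $X_k\setminus A$ and off $p_{k,\widehat i}$. Every point of $X_k\setminus A$ carries a private label, so the sample missed that region entirely. Hence the error is at most $D(X_k\setminus A)+D(p_{k,\widehat i})$, and the first term exceeds $\epsilon_0/2$ with probability at most $(1-\epsilon_0/2)^n$. Against a target $s_{k,i}$ the analysis is the same with the single poisoned marker $p_{k,i}$: it was unobserved, so it has small mass with high probability. The output's own marker $p_{k,\widehat i}$ contributes the second term. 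The targets $h_{\dollarlabel}$ and the "all labels $\dollarlabel$" case are handled similarly. The output $h_{\dollarlabel}$ errs only on blocks that the sample never touched, and the union of such blocks has mass above $\epsilon_0$ with probability $(1-\epsilon_0)^n$.

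The key remaining term is $D(p_{k,\widehat i})$, and I expect it to be the main obstacle. I would invoke \Cref{lem:least-observed} with $\eta=\epsilon_0/2$, $K=k$, and the marker masses $q_i=D(p_{k,i})$. The lemma requires $k\ge 8/\eta=16/\epsilon_0$, which is a constant because $\epsilon_0$ is a fixed universal constant. For the finitely many blocks with $k<16/\epsilon_0$, the lemma does not apply. There I would instead use a finite-class argument, bounding these blocks uniformly in $L_k$. The point is that the block's structure matters only through the $k$ markers and the set $A$. One option is to choose a fallback whose marker was observed. Since markers inside $A$ are $\starlabel$ under $c_{k,A}$, the error then comes only from missed regions and from markers of mass below $\epsilon_0/(2k)$. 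A union bound over the $k<16/\epsilon_0$ markers costs a constant factor. With $n\ge C'\log(1/\delta)$, all failure events sum to at most $\delta$, and none of the bounds depends on $(L_k)$.

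Subtleties to check: the target-dependent choice of $k$ must be read off from the sample, which is fine because the $\starlabel$ points determine the block. The least-observed rule must also cover a marker outside the target's domain where $c_{k,A}$ has a private label, but that case is already covered by the "region $X_k\setminus A$ unobserved" event. The matching lower bound $\Omega(\log(1/\delta))$ is standard and I would treat it as part of the $\Theta$ elsewhere.
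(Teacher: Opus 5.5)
Your handling of the resolved cases and of unresolved blocks with $k\ge16/\epsilon_0$ (least-observed fallback, missed region $X_k\setminus A$ or missed poisoned marker, and \Cref{lem:least-observed} with $\eta=\epsilon_0/2$) is exactly the paper's argument. The gap is in the finitely many small blocks.

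First, ``choose a fallback whose marker was observed'' contradicts your own disagreement computation. We have $s_{k,i}(p_{k,i})=\rho_{k,i}\neq\starlabel$, so such a fallback is inconsistent with the sample and errs on an observed marker, whose mass can be large. Replacing ``observed'' by ``unobserved'' does not repair this. An unresolved sample in which every marker has been observed forces the target to be some $c_{k,A}$ with $P_k\subseteq A$, and then every fallback is inconsistent.

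More fundamentally, the small blocks cannot be handled uniformly in $L_k$ by any proper learner, so your closing claim that ``none of the bounds depends on $(L_k)$'' is false. Take $k=1$ and the target $c_{1,A}$ with $A=\{p_{1,1}\}\cup T$ for a hidden $L_1$-subset $T\subseteq C_1$. Let $D$ put mass $1/2$ on $p_{1,1}$ and mass $1/2$ uniformly on $T$. Every label is $\starlabel$, and the possible outputs fare as follows:
\begin{itemize}
\item the only fallback $s_{1,1}$ errs with mass $1/2$;
\item $h_{\dollarlabel}$ and hypotheses of other blocks err with mass $1$;
\item a Cantor output $c_{1,B}$ must essentially guess a halfset.
\end{itemize}
For the last case, averaging over a uniformly random $T$ as in \Cref{lem:hidden-halfset} gives error at least $1/10>\epsilon_0$ with probability at least $3/4$ whenever $n\le c_0L_1$. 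Hence $N^{\prop}_{\clean}(\epsilon_0,\delta)=\Omega(L_1)$ for every $\delta$ below a constant.

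The fix is the paper's. On an unresolved sample whose active block satisfies $k<K_0$ for a constant $K_0$, output any hypothesis of that block consistent with the sample. The finite-class realizable bound over the $\exp(O(L_{K_0}+K_0))$ hypotheses in the first $K_0$ blocks then gives failure probability at most $\delta$ with $O((L_{K_0}+K_0+\log(1/\delta))/\epsilon_0)$ samples. For each fixed sequence this is $O(\log(1/\delta))$, which is all the lemma asserts. In the proof of \Cref{thm:arbitrary-proper-blowup}, the first finitely many $L_k$ are fixed universally, so the additive constant is even universal there. With this replacement, and without the claim of $L$-independence, your proof goes through.
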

\vspace{-0.5 cm}
\begin{proof}
If every observed label is $\dollarlabel$, output $h_{\dollarlabel}$. A
private or poison label identifies the target exactly. Otherwise, the sample
contains a $\starlabel$ example in one active block $k$ and no identifying
label.

Choose a constant $K_0\geq32/\epsilon_0$. On the finitely many blocks
$k\leq K_0$, output any hypothesis consistent with the sample; a finite-class
realizable bound contributes $O(\log(1/\delta))$ samples. For $k>K_0$, let
$\widehat i$ be a least-observed marker and output $s_{k,\widehat i}$.
Suppose first that the target is $c_{k,A}$. Since no private label appeared,
the region $Q=X_k\setminus A$ was missed by the clean sample. The output
disagrees with the target only on $Q\cup\{p_{k,\widehat i}\}$. The event that
$D(Q)>\epsilon_0/2$ and $Q$ is missed has probability at most
$e^{-n\epsilon_0/2}$, while \Cref{lem:least-observed} makes
$D(p_{k,\widehat i})\leq\epsilon_0/2$ with probability at least $1-\delta/4$
once $n=O(\log(1/\delta))$.

If the target is a fallback $s_{k,j}$ and no poison label appears, then
$p_{k,j}$ was missed; the output disagrees with the target only on $p_{k,j}$
and $p_{k,\widehat i}$. The same missed-mass and least-observed-marker bounds
apply. Finally, outputting $h_{\dollarlabel}$ is bad only if the missed active
block has mass greater than $\epsilon_0$, an event of probability at most
$e^{-n\epsilon_0}$. Combining the cases proves the lemma.
\end{proof}

\begin{lemma}\label{lem:adaptive-improper-upper}
For every increasing sequence $(L_k)$,
\[
    \Sample_{\ad}^{\imp}
    (\epsilon_0,\delta;\Hcal(L_1,L_2,\ldots))
    =O(\log(1/\delta)).
\]
\end{lemma}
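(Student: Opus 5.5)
The plan is to use the improper learner already sketched in the proof of \Cref{thm:arbitrary-proper-blowup}. The key property is that every error region of this learner is absent from the final sample. Adversarial additions only add correctly labeled examples, so any region missed by the augmented sample $T$ was also missed by the clean sample $S\subseteq T$. The bound $e^{-n\epsilon_0}$ on the probability that a set of mass above $\epsilon_0$ is missed by $n$ clean draws then applies unchanged, whatever the adversary does.

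The learner proceeds in three steps. First, if every label in $T$ is $\dollarlabel$, it outputs $h_{\dollarlabel}$. Second, if some private label $\lambda_{k,A}$ or poison label $\rho_{k,i}$ appears, it outputs the identified hypothesis, which is exactly the target; this is legitimate because insertions are correctly labeled. Third, if neither case applies, some $\starlabel$ example occurs in a block $k$. That block is unique: any hypothesis that outputs $\starlabel$ in block $k$ outputs $\dollarlabel$ elsewhere, and $h_{\dollarlabel}$ outputs no $\starlabel$ at all. In this case the learner outputs the improper predictor $g_k$ that equals $\starlabel$ on $X_k$ and $\dollarlabel$ off $X_k$. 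This predictor is not in $\Hcal$, but it is a legitimate total predictor under the improper model, since $\starlabel$ is simply a label in $\Ycal$ here.

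The error analysis has three cases.
\begin{enumerate}[leftmargin=2em]
    \item Output $h_{\dollarlabel}$ with target in block $k$. The disagreement region lies inside $X_k$, and no non-$\dollarlabel$ label appears in $T$. Hence the clean sample missed every point of $X_k$ carrying non-$\dollarlabel$ mass.
    \item Output $g_k$ with target $c_{k,A}$. The predictor disagrees with the target exactly on $X_k\setminus A$, where the target emits its private label. No private label appears in $T$, so this region was missed by $S$.
    \item Output $g_k$ with target $s_{k,i}$. The disagreement region is $\{p_{k,i}\}$, which was likewise missed.
\end{enumerate}
In every case, error above $\epsilon_0$ requires the clean sample to miss a fixed set, determined by the target, of $D$-mass above $\epsilon_0$. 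This has probability at most $(1-\epsilon_0)^n\le e^{-n\epsilon_0}$, so $n=O(\log(1/\delta))$ suffices, uniformly in the sequence $(L_k)$ and in the adversary. The matching lower bound $\Omega(\log(1/\delta))$ follows from the clean model, for example by taking two hypotheses that differ on a point of mass $\epsilon_0$ and letting the adversary add nothing.

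The step that needs the most care is the third case, since it is the only place where improperness is used. There I would check that the set of targets consistent with a $\starlabel$ observation in block $k$ is exactly the block-$k$ Cantor hypotheses and fallbacks. I would also check that $g_k$ agrees with every such target on all of its $\starlabel$ and $\dollarlabel$ coordinates. This reduces each error region to a set of non-$\starlabel$, non-$\dollarlabel$ labels that never appeared. No least-observed-marker argument is needed, which is why this learner is immune to the oblivious box-noise attack that defeats proper learners.
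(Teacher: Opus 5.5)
Your proposal is correct and follows the paper's proof essentially verbatim. It uses the same three-branch learner (output $h_{\dollarlabel}$ on an all-$\dollarlabel$ sample, return the target when an identifying label appears, and otherwise output the improper predictor that is $\starlabel$ on $X_k$ and $\dollarlabel$ elsewhere) and the same observation: every error region is absent from the final sample, hence missed by the hidden clean sample, so the failure probability is at most $e^{-n\epsilon_0}$ regardless of the adversary.
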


\begin{proof}
If the final sample contains a private or poison label, output the uniquely
identified target. If it contains a $\starlabel$ but no identifying label,
the active block $k$ is visible from the domain coordinates; output the
improper predictor $\bar s_k$ that is $\starlabel$ throughout $X_k$ and
$\dollarlabel$ elsewhere. If every label is $\dollarlabel$, output
$h_{\dollarlabel}$.
For a Cantor target, the error region of $\bar s_k$ is $X_k\setminus A$; for
a fallback target, it is the poisoned marker. In either case this region is
absent from the final sample, and therefore from the hidden clean sample. A
region of mass greater than $\epsilon_0$ is missed by $n$ clean observations
with probability at most $e^{-n\epsilon_0}$. The all-$\dollarlabel$ branch is
identical.
\end{proof}

Standard two-hypothesis missed-point lower bounds give the matching
$\Omega(\log(1/\delta))$ dependence in both lemmas.

\vspace{-0.3 cm}
\paragraph{Proper learning remains finite.}
The construction creates an arbitrarily poor confidence dependence without
destroying proper learnability altogether.

\begin{lemma}\label{lem:adaptive-proper-upper}
For every increasing sequence $(L_k)$, the class
$\Hcal(L_1,L_2,\ldots)$ is properly learnable under arbitrary adaptive
additions. More precisely, for every $\epsilon,\delta\in(0,1)$, let
$K=\lceil C/(\epsilon\delta)\rceil$ for a universal constant $C$. Then
\[
    \Sample_{\ad}^{\prop}
    (\epsilon,\delta;\Hcal(L_1,L_2,\ldots))
    \leq \frac{C'}{\epsilon}
    \left(L_K+K+\log\frac1\delta\right).
\]
\end{lemma}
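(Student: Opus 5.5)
The plan is to give an explicit randomized proper learner that branches on the \emph{active block} visible in the final sample $T$. The key point is that every branch outputs a hypothesis whose disagreement region with $h^\star$ is absent from $T$, and hence absent from the hidden clean sample $S\subseteq T$. This is the same mechanism as in \Cref{lem:adaptive-improper-upper}, with the improper $\bar s_k$ replaced by a proper surrogate. Fix $\epsilon,\delta$, set $K=\lceil C/(\epsilon\delta)\rceil$, and let $n=\frac{C'}{\epsilon}(L_K+K+\log\frac1\delta)$.

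The learner is defined as follows.
\begin{itemize}[leftmargin=1.5em]
    \item If $T$ contains a private label $\lambda_{k,A}$ or a poison label $\rho_{k,i}$, output the uniquely identified target; its error is zero.
    \item If every label is $\dollarlabel$, output $h_{\dollarlabel}$.
    \item Otherwise some $\starlabel$ example lies in a unique block $k$, which every non-$\dollarlabel$ hypothesis consistent with $T$ must share.
    \begin{itemize}[leftmargin=1.5em]
        \item If $k\le K$, output any hypothesis of block $k$ consistent with $T$. One exists because $h^\star$ is consistent with $T$.
        \item If $k>K$, draw $i$ uniformly from $[k]$, independently of everything else, and output $s_{k,i}$.
    \end{itemize}
\end{itemize}

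We analyze the branches in order.

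\textbf{The $h_{\dollarlabel}$ branch.} The output errs only on the target's active block. That block is absent from $T$, hence missed by $S$. If its mass exceeds $\epsilon$, this happens with probability at most $(1-\epsilon)^n$.

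\textbf{Finite branch ($k\le K$).} Any output is consistent with $T\supseteq S$. We use a union bound over the finite set $\Hcal_{\le K}$ of all hypotheses in blocks $1,\ldots,K$. The count is
\[
    |\Hcal_{\le K}|\le K\bigl(2^{2L_K+2K}+K\bigr),
\]
so $\log|\Hcal_{\le K}|=O(L_K+K)$. The probability that some $h\in\Hcal_{\le K}$ with error greater than $\epsilon$ is consistent with $S$ is at most
\[
    |\Hcal_{\le K}|(1-\epsilon)^n\le\delta/3
\]
for our choice of $n$. This bound holds regardless of how the adversary chose $T$ or how ties were broken, which is exactly why adaptivity is harmless here.

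\textbf{Large blocks ($k>K$).} First suppose the target is $c_{k,A}$. Since no private label appeared, $X_k\setminus A$ is absent from $T$. The output $s_{k,i}$ disagrees with the target only on $(X_k\setminus A)\cup\{p_{k,i}\}$. Now suppose the target is $s_{k,j}$. Since no poison label appeared, $p_{k,j}$ is absent, and the disagreement is contained in $\{p_{k,j},p_{k,i}\}$.

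In both cases the error is at most (mass of a region missed by $S$) $+\,D(p_{k,i})$. The first term exceeds $\epsilon/2$ with probability at most $(1-\epsilon/2)^n\le\delta/3$. For the second term, at most $2/\epsilon$ markers have mass greater than $\epsilon/2$. Since $i$ is uniform and independent of $T$, it selects such a marker with probability at most
\[
    \frac{2}{\epsilon k}<\frac{2}{\epsilon K}\le\frac{\delta}{3}
\]
for $C$ large. Combining the three failure events via a union bound gives error at most $\epsilon$ with probability at least $1-\delta$.

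\textbf{Main obstacle.} The delicate step is the large-block branch. A data-dependent choice such as the least-observed marker used in \Cref{lem:clean-proper-upper} can be steered by an adaptive adversary, which can pad marker counts. The fix is to make the fallback index independent of the sample, and to pay for this with the $K\sim 1/(\epsilon\delta)$ threshold, which is why $L_K$ enters the sample complexity. I would also check carefully that the active block is well defined from $T$ alone, and that the finite-branch union bound can be taken over all blocks $\le K$ simultaneously, so that the random choice of $k$ causes no conditioning issue.
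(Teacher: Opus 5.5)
Your proposal is correct and follows the paper's proof essentially step for step. On blocks $k\le K$ you use a finite-class union bound, which is valid because any output consistent with $T$ also interpolates the hidden clean sample. On larger blocks you use a uniformly random fallback $s_{k,J}$ chosen independently of the sample, whose chance $2/(\epsilon k)$ of landing on a heavy marker is absorbed by the threshold $K\asymp 1/(\epsilon\delta)$ — exactly as in the paper, with your explicit handling of the $h_{\dollarlabel}$ branch and the per-target union bound adding some bookkeeping the paper leaves implicit.
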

\vspace{-0.6 cm}
\begin{proof}
Identifying labels reveal the target, and an all-$\dollarlabel$ sample is
handled by $h_{\dollarlabel}$. On an unresolved block $k\leq K$, output any
hypothesis in that block consistent with the final sample. It also
interpolates the hidden clean sample. The logarithm of the number of
hypotheses in the first $K$ blocks is $O(L_K+K)$, so the standard finite-class
realizable bound controls every possible selected interpolant using
$O((L_K+K+\log(1/\delta))/\epsilon)$ clean examples.

On an unresolved block $k>K$, choose $J$ uniformly from $[k]$ and output
$s_{k,J}$. For a Cantor target, the disagreement region lies inside the
clean-missed private region together with $p_{k,J}$. At most $2/\epsilon$
markers have mass greater than $\epsilon/2$, so the random fallback selects a
heavy marker with probability at most $2/(\epsilon k)\leq\delta/4$ when $C$
is large. The missed private region is controlled by $e^{-n\epsilon/2}$. For
a fallback target $s_{k,j}$, the point $p_{k,j}$ was missed in the unresolved
branch, and the same random-marker argument controls $p_{k,J}$.
\end{proof}

\vspace{-0.3 cm}
\paragraph{The oblivious proper lower bound.}
We now show that the sizes $L_k$ can make proper learning arbitrarily
expensive under oblivious additions.

\begin{lemma}\label{lem:proper-finite-scale}
There are universal constants $c_1,c_2,\epsilon_0>0$ such that the following
holds. Let $k\geq k_0$, let $L=L_k$, and set $\delta=c_1/k$. For every
randomized proper learner using $n\leq c_2L$ clean examples, there are a
target, a clean marginal, and an oblivious monotone adversary for which
\[
    \P[L_D(A(T),h^\star)>\epsilon_0]\geq\delta.
\]
\end{lemma}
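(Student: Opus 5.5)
We follow the sketch given after \Cref{thm:arbitrary-proper-blowup} and build a randomized hard instance inside block $k$. First, draw a hidden Cantor set $A$ as follows. Fix a public anchor $a\in C_k$, choose $T$ uniformly among $L$-subsets of $C_k$ containing $a$, and let the target be $c_{k,A}$ with $A=T\sqcup P_k$; this has the required size $L+k$. Then $h^\star$ is $\starlabel$ on $P_k\cup T$, so every clean label is $\starlabel$ and nothing identifies the target. Second, draw a uniformly random set $R\subseteq P_k$ of a fixed constant size $m$. The clean marginal puts mass $\gamma$ (a constant, e.g.\ $\gamma=1/2$) uniformly on $R$ and spreads the remaining $1-\gamma$ uniformly on $T$. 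The oblivious adversary, knowing only $(T,R)$ but not the sample, does two things. It appends independent $\Unif\{0,\dots,B\}$ copies of every marker $p_{k,i}$, each labeled $\starlabel$, which is correct. It then pads with copies of the anchor $(a,\starlabel)$ so that the total length is a fixed deterministic number. The anchor lies in $T$ by construction, so it is correctly labeled, and it leaks nothing. Since the minimax quantity is at least the Bayes average, it suffices to lower bound the failure probability averaged over $(T,R)$ and then fix the worst choice.

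Next we analyze the transcript. The final labeled multiset is determined by three pieces: the core counts, the marker counts $c+Z$ where $c$ is the clean marker count vector, and a deterministic padding. The clean marker counts depend on $R$, but their $\ell_1$ size is at most $n$. By \Cref{lem:box-smoothing}, taking $B\ge n/\eta$ makes the marker-count law differ across choices of $R$ by at most $2n/(B+1)$ in total variation, for any two values of $R$. Fix $\eta$ small, say $\eta\le c_1/(4k)$. Then, up to a total variation slack of $\eta$, the transcript is independent of $R$ given the core sample. The core sample consists of draws from $\Unif(T)$, independent of $R$. So the learner's output is approximately independent of $R$ and of the hidden part of $T$.

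A proper learner's output must take one of four forms, and we handle each.
\begin{itemize}
\item \emph{A fallback $s_{k,j}$.} This errs on $T$ (mass $1-\gamma$) unless $T$ is missed. More simply, $s_{k,j}$ disagrees with $c_{k,A}$ at $p_{k,j}$ only if $p_{k,j}\notin A$, which never happens. So instead we use that it disagrees on all of $T$, giving error $1-\gamma>\epsilon_0$ outright.
\item \emph{$h_{\dollarlabel}$ or a hypothesis from another block.} These err on everything, so they fail too.
\item \emph{A Cantor hypothesis $c_{k,A'}$ with $|A'|=L+k$.} It errs on $T\setminus A'$ and on $R\setminus A'$.
\end{itemize}
It follows that the poisoned-marker mechanism matters through the Cantor case, since $P_k$ may not be contained in $A'$. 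Two subcases remain. If $A'\not\supseteq R$, the error is at least $\gamma/m$, which is at least $\epsilon_0$ for small $\epsilon_0$. Because $R$ is approximately independent of the transcript, and $A'\cap P_k$ must omit some marker whenever $A'$ spends a point of $P_k$ on $C_k$, a uniformly random $R$ hits an omitted marker with probability $\Omega(1/k)$. If instead $A'\supseteq P_k$, then $A'\cap C_k$ has size $L$. Here we apply \Cref{lem:hidden-halfset}, whose transcript hypothesis holds because the side information (marker counts and padding) is conditionally independent of $T$ given the core samples. It gives $|T\setminus A'|\ge L/5$ with probability at least $3/4$, hence error at least $(1-\gamma)/5\ge\epsilon_0$ once $n,k\le c_0L$. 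Combining the subcases, the failure probability is at least $\min\{3/4,\Omega(1/k)\}-\eta\ge c_1/k$.

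The main obstacle is the $R$-guessing case: making rigorous that the learner's choice of omitted markers is nearly independent of $R$. This means conditioning on the core sample and learner randomness, and using that $R$ is uniform and independent of the core to obtain $\P[R\not\subseteq A']\ge m/k\cdot\P[A'\not\supseteq P_k]-\eta$. If the $\Omega(1/k)$ bound from uniformly random $R$ needs adjusting, a fallback plan is to take $m=1$ and use the symmetry over the $k$ markers directly.
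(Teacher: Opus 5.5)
Your hard instance and adversary are the paper's: a hidden $T\ni a$, a hidden constant-size heavy set $R\subseteq P_k$, box noise on every marker plus anchor padding, and then \Cref{lem:box-smoothing} and \Cref{lem:hidden-halfset}. However, your treatment of fallback outputs is wrong, and fallbacks are exactly the case this lemma is about.

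\textbf{The fallback error.} The fallback $s_{k,j}$ outputs $\starlabel$ on all of $X_k\setminus\{p_{k,j}\}\supseteq T$. So it \emph{agrees} with the target $c_{k,T\sqcup P_k}$ on $T$, and does not err on all of $T$. At $p_{k,j}$ it \emph{always} disagrees with the target (poison label $\rho_{k,j}$ versus $\starlabel$), not ``only if $p_{k,j}\notin A$''. On $\supp D$ its error is therefore exactly $\frac{\gamma}{m}\mathbf{1}\{j\in R\}$. A fallback is perfectly correct unless it lands on a heavy marker.

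Your claim that fallbacks have error $1-\gamma$ ``outright'' is false. On this instance, no argument can give fallbacks more than about $m/k$ failure probability: a learner that outputs $s_{k,J}$ for uniform $J$ fails only when $J\in R$, i.e.\ with probability $m/k$. This is precisely why the lemma only claims $\delta=c_1/k$.

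\textbf{The fix.} You already have the tool, but it belongs in the other case. Work under the reference law in which the marker counts are independent of $R$ given the core sample. Let $\mathsf F$ be the event that the output is a fallback $s_{k,J}$. Under that law, $(\mathsf F,J)$ is independent of the uniform $m$-set $R$, so
\[
    \P[\mathsf F,\ J\in R]=\tfrac{m}{k}\,\P[\mathsf F].
\]
Each such event costs error $\gamma/m>\epsilon_0$. You then pay the total-variation slack to return to the true law.

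\textbf{The Cantor case.} Your remark that ``the poisoned-marker mechanism matters through the Cantor case'' has the roles inverted. Hiding $R$ is needed exactly to defeat fallbacks. The Cantor case needs no $R$-argument at all: \Cref{lem:hidden-halfset} applies to every $(L+k)$-subset of $C_k\sqcup P_k$, including those that omit markers. It gives failure probability at least $3/4$ directly, so your subcase split there is harmless but unnecessary.

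With this correction, your final bound $\min\{m/k,3/4,1\}-\eta\ge c_1/k$ goes through. It then matches the paper's partition of outputs into fallbacks, Cantor hypotheses, and everything else.
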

\vspace{-0.6 cm}
\begin{proof}
Fix $r=8$ and $\alpha=1/2$. Choose a hidden set $R\subseteq[k]$ uniformly
among the $r$-subsets, and choose $T\subseteq C_k$ uniformly among the
$L$-subsets containing a fixed anchor $a\in C_k$. Let the target be
$h^\star=c_{k,P_k\cup T}$, which labels every point in $P_k\cup T$ by
$\starlabel$. Define the clean marginal by
\[
    D(p_{k,j})=
    \begin{cases}
        \alpha/r, & j\in R,\\
        0, & j\notin R,
    \end{cases}
    \qquad
    D|_T=(1-\alpha)\Unif(T).
\]
Every clean label is $\starlabel$.

The oblivious adversary draws independent
$Z_j\sim\Unif\{0,\ldots,B\}$, appends $Z_j$ copies of marker $p_{k,j}$,
and appends $kB-\sum_{j=1}^kZ_j$ copies of the anchor $a$. All additions are
correctly labeled.

Condition on the number $M$ of clean marker observations and on the ordered
clean core sample. Let $C=(C_1,\ldots,C_k)$ be the clean marker-count vector,
so $\|C\|_1=M\leq n$. For any two possible count vectors $C,C'$ with the
same total, \Cref{lem:box-smoothing} gives
$d_{\TV}(C+Z,C'+Z)\leq2n/(B+1)$. The final anchor count is determined by the
final marker counts, $M$, and the conditioned core sample, so it carries no
additional information. Taking $B\geq C_0n/\delta$ makes the conditional
transcript $O(\delta)$-close to a law independent of the hidden heavy set
$R$.

Let $\widetilde Q$ be a reference transcript law at total-variation distance
$\tau=O(n/B)\leq c\delta$ from the true law, under which the marker transcript
is independent of $R$ conditional on the core observations. Partition the
learner's output into the events $\mathsf F$, where it outputs an active-block
fallback $s_{k,J}$; $\mathsf C$, where it outputs an active-block Cantor
hypothesis $c_{k,B'}$; and $\mathsf O$, containing every other output.
Under $\widetilde Q$, the pair $(\mathsf F,J)$ is independent of the uniform
$r$-subset $R$, and hence
\[
    \widetilde\P[\mathsf F\text{ and }J\in R]
    =\frac{r}{k}\widetilde\P[\mathsf F].
\]
On this event the fallback has error $\alpha/r>\epsilon_0$. On $\mathsf C$,
the marker padding is conditionally independent of $T$ given the clean core
observations, so the pointwise conditional form of
\Cref{lem:hidden-halfset} gives
\[
    \widetilde\P
    [|T\setminus B'|\geq L/5\text{ and }\mathsf C]
    \geq\frac34\widetilde\P[\mathsf C].
\]
The corresponding population error is at least
$(1-\alpha)/5>\epsilon_0$, while every output in $\mathsf O$ has error one on
the support of $D$. Therefore
\[
\begin{aligned}
    \widetilde\P[L_D(A(T),h^\star)>\epsilon_0]
    &\geq \frac{r}{k}\widetilde\P[\mathsf F]
       +\frac34\widetilde\P[\mathsf C]
       +\widetilde\P[\mathsf O]\\
    &\geq \min\left\{\frac{r}{k},\frac34,1\right\}.
\end{aligned}
\]
Returning to the true transcript law costs at most $\tau$. Since
$\delta=c_1/k$, choosing $c_1$ and the smoothing constant so that
$r/k-\tau\geq\delta$ proves the claim. Averaging over $(R,T)$ and the padding
seed $(Z_1,\ldots,Z_k)$, and then fixing one resulting triple, gives a
deterministic target, marginal, and oblivious padding multiset with the same
lower bound.
\end{proof}

\begin{proofof}{\Cref{thm:arbitrary-proper-blowup}}
Given a nondecreasing function $F$, fix the first finitely many block sizes
universally and choose the remainder sufficiently rapidly that
\[
    c_2L_k\geq F\!\left(\frac{2(k+1)}{c_1}\right)
\]
for every sufficiently large $k$, where $c_1,c_2$ come from
\Cref{lem:proper-finite-scale}. The clean proper and adaptive improper bounds
follow from
\Cref{lem:clean-proper-upper,lem:adaptive-improper-upper}, with matching
logarithmic lower bounds as noted there. Proper adaptive learnability follows
from \Cref{lem:adaptive-proper-upper}.

For sufficiently small $\delta$, take $k=\lfloor c_1/(2\delta)\rfloor$. Then
$c_1/k>2\delta$, so \Cref{lem:proper-finite-scale} rules out confidence
$\delta$ for every proper learner using fewer than $c_2L_k$ clean observations. 
Since $2(k+1)/c_1>1/\delta$, the monotonicity of $F$ and our choice of $L_k$
give
\[
    \Sample_{\obl}^{\prop}(\epsilon_0,\delta)
    \geq c_2L_k
    \geq F(1/\delta).
    \qedhere
\]
\end{proofof}

For example, taking $L_k=2^k$ gives an exponential separation. More
generally, let $\kappa(n)=\min\{k:L_k\geq n/c_2\}$. Along the corresponding
scales, \Cref{lem:proper-finite-scale} gives
$\Risk_{\obl}^{\prop}(n)\geq c/\kappa(n)$, whereas
\Cref{lem:adaptive-improper-upper} gives
$\Risk_{\ad}^{\imp}(n)=O(1/n)$. Choosing $(L_k)$ to grow sufficiently fast
makes $1/\kappa(n)$ decay arbitrarily slowly.

\subsection{ERM Fragility and Sharpness}\label{app:erm}

We next give the ERM lower bounds from \Cref{sec:erm-main}. We first construct
a class admitting one carefully designed clean ERM while defeating every ERM
after linearly many oblivious additions. We then prove that this is the
strongest possible separation at the level of ERM quantifiers, and conclude
with a binary example showing that one adaptive addition can already force
the logarithmic rate.

\vspace{-0.3 cm}
\paragraph{A DS-one class whose ERMs are fragile.}
\label{app:erm-fragile}
For each integer $k\geq1$, let
\[
    X_k=P_k\sqcup C_k,
    \qquad
    P_k=\{p_{k,1},\ldots,p_{k,k}\},
    \qquad
    |C_k|=2k,
\]
and let $\Xcal=\bigsqcup_{k\geq1}X_k$. We use two global labels
$\starlabel,\dollarlabel$, private labels $\lambda_{k,T}$ indexed by
$T\in\binom{C_k}{k}$, and poison labels $\rho_{k,i}$ indexed by $i\in[k]$.
Define
\[
 c_{k,T}(x)=
 \begin{cases}
   \starlabel, & x\in P_k\cup T,\\
   \lambda_{k,T}, & x\in C_k\setminus T,\\
   \dollarlabel, & x\notin X_k,
 \end{cases}
 \qquad
 s_{k,i}(x)=
 \begin{cases}
   \rho_{k,i}, & x=p_{k,i},\\
   \starlabel, & x\in X_k\setminus\{p_{k,i}\},\\
   \dollarlabel, & x\notin X_k.
 \end{cases}
\]
Together with the all-$\dollarlabel$ hypothesis $h_{\dollarlabel}$, set
\[
    \Hcal_{\mathrm{frag}}
    =\{h_{\dollarlabel}\}
      \cup\{c_{k,T}:k\geq1,\ T\in\tbinom{C_k}{k}\}
      \cup\{s_{k,i}:k\geq1,\ i\in[k]\}.
\]

\vspace{-0.3 cm}
\begin{proofof}{\Cref{thm:erm-fragile}}
\medskip\noindent\textbf{Topology and coordinatewise range.}\quad
Each block and its supported family of hypotheses are finite. At a fixed
coordinate $x\in X_k$, every hypothesis supported outside $X_k$ outputs
$\dollarlabel$, while only finitely many hypotheses supported on $X_k$ output
another label. Thus the coordinatewise range is finite. If a
pointwise-convergent sequence is eventually non-$\dollarlabel$ at some
coordinate, it is eventually confined to that finite block and hence
eventually constant. If the active blocks escape to infinity, the sequence
converges pointwise to $h_{\dollarlabel}$. The class is therefore closed.

\medskip\noindent\textbf{DS dimension.}\quad
Every label other than $\starlabel$ and $\dollarlabel$ is private to one
hypothesis. In a pseudo-cube of dimension at least two, no participating
labeling can contain a private label: changing a second coordinate while
preserving the private coordinate would require another hypothesis using the
same label. It remains to consider only the public labels. On two coordinates
in one block, an active hypothesis is non-$\dollarlabel$ at both coordinates,
while a hypothesis from another block is $\dollarlabel$ at both; hence the two
mixed public patterns cannot both occur. On coordinates in different blocks,
no hypothesis is active on both, so the pattern
$(\starlabel,\starlabel)$ is impossible. In either case the two-coordinate
label-pair graph has no cycle, so no two-point pseudo-cube exists. Since some
coordinate realizes at least two labels,
$d_{\DS}(\Hcal_{\mathrm{frag}})=1$.

\vspace{-0.3 cm}
\paragraph{A clean ERM.}
We define a consistent learner for $\Hcal_{\mathrm{frag}}$. If every observed label is $\dollarlabel$, the learner outputs $h_{\dollarlabel}$; if a private or poison label appears, it uniquely identifies the target, which the learner then returns. The only nontrivial case is therefore an all-$\starlabel$ sample contained in one active block $X_k$. If some marker in $P_k$ remains unseen, the learner outputs a fallback poisoned at such a marker. If every marker has been observed, it instead outputs any Cantor hypothesis consistent with the sample. In every case the selected hypothesis interpolates the training data, so this rule is an ERM.

Put $K_n=n/\sqrt{\log(n+2)}$. The total number of hypotheses supported on
blocks $k\leq K_n$ has logarithm $O(K_n)=o(n)$, because
$\log\binom{2k}{k}=O(k)$. The standard finite-class realizable bound therefore
controls every output from these initial blocks uniformly.
It remains to treat $k>K_n$. At least $k/2$ markers have mass at most $2/k$.
Let $U$ count how many of these markers are absent from the clean sample. Then
\[
    \E U
    \geq \frac{k}{2}\left(1-\frac2k\right)^n
    \geq \frac{k}{2}e^{-4n/k}
    \longrightarrow\infty
\]
uniformly for $k>K_n$. The indicators that individual markers are unseen have
nonpositive pairwise covariance, so
$\operatorname{Var}(U)\leq\E U$ and
$\P[U=0]\leq1/\E U=o(1)$. Thus an unseen marker exists with probability
tending to one.

On this branch, a fallback output can disagree with a Cantor target only on
the target's private-label region and at the selected marker; both were
missed by the clean sample. Against a fallback target, disagreement is
confined to at most two unseen markers. A fixed region of mass greater than
$\epsilon$ is missed with probability at most $e^{-n\epsilon}$, and at most
$O(1/\epsilon)$ markers have mass greater than a constant multiple of
$\epsilon$. These miss bounds, together with the finite-block argument,
establish clean PAC learnability. The branch in which the active block is
missed entirely is identical: outputting $h_{\dollarlabel}$ is bad only when
that missed block has mass larger than $\epsilon$.

\medskip\noindent\textbf{Oblivious lower bound for every ERM.}\quad
Fix $\alpha>0$ and let $k=\lfloor\alpha n\rfloor$. Choose $T$ uniformly from
$\binom{C_k}{k}$, take target $c_{k,T}$, and let $D_T$ be uniform on $T$.
Every clean label is $\starlabel$. Before seeing the sample, the oblivious
adversary appends one correctly labeled copy of every marker in $P_k$, using
$k\leq\alpha n$ additions.

Every fallback is now inconsistent, as are $h_{\dollarlabel}$ and all
hypotheses supported on another block. Hence every ERM outputs some
$c_{k,B}$ with $B\in\binom{C_k}{k}$ containing the set $R\subseteq T$ of
distinct clean observations. Conditional on the complete clean sample, the
posterior law of $T\setminus R$ is uniform among all
$(k-|R|)$-subsets of $C_k\setminus R$. For any possibly randomized choice of
$B\supseteq R$, conditioning also on the learner's randomness gives
\[
\begin{aligned}
    \E[|T\setminus B|\mid R,B]
    &=(k-|R|)-\frac{(k-|R|)^2}{2k-|R|}\\
    &= \frac{k(k-|R|)}{2k-|R|}
    \geq \frac{k-|R|}{2}.
\end{aligned}
\]
Consequently,
\[
    \E\,\loss_{D_T}(c_{k,B},c_{k,T})
    \geq
    \frac12\E\left[\frac{k-|R|}{k}\right]
    =\frac12\left(1-\frac1k\right)^n.
\]
Averaging over $T$ and then fixing one target gives a deterministic realizable
instance with this lower bound. Since $n/k\to1/\alpha$, the right-hand side
remains bounded below by a constant of order $e^{-1/\alpha}$.
\end{proofof}

\begin{remark}
By \Cref{cor:oblivious-free}, the same class admits an improper learner with
expected error at most $1/(n+1)$ against an arbitrary number of oblivious
additions. The obstruction is therefore not learnability, but the insistence
that the learner minimize empirical error within the original class.
\end{remark}

\vspace{-0.3 cm}
\paragraph{Why the clean side of the separation cannot be strengthened.}
\label{app:all-erm-sharpness}
The preceding theorem says that \emph{some} ERM learns cleanly, whereas
\emph{no} ERM is robust. The next proposition shows that this is the strongest
possible separation.
\vspace{-0.3 cm}
\begin{proofof}{\Cref{prop:all-erm-robust}}
A classical characterization of the multiclass ERM principle states that
every ERM learns $\Hcal$ if and only if the graph dimension
$d_{\Graph}(\Hcal)$ is finite \citep{multiclassERM2015}. Let $T$ be any
correctly augmented sample with hidden clean subset $S$. Every ERM output on
$T$ is consistent with $T$, and therefore with $S$. Uniform convergence for
graph dimension controls all hypotheses consistent with $S$ simultaneously,
independently of how the adversary selected the final interpolant. Thus every
ERM remains PAC under arbitrary monotone additions.
\end{proofof}

\vspace{-0.6 cm}
\paragraph{One adaptive addition can degrade every ERM.}
\label{app:one-addition-erm}
The preceding construction uses linearly many oblivious additions and destroys
ERM learnability. Adaptivity permits a different phenomenon. On the hard
all-zero clean samples below, the all-zero hypothesis is a perfectly safe ERM
choice; a single positive addition removes that hypothesis and leaves only
two endpoint hypotheses, allowing the adversary to force the wrong one.

\begin{theorem}
\label{thm:one-addition-erm-full}
There is a binary class of VC dimension 2 such that, for infinitely many
$n$, every ERM can be forced by one adaptive monotone addition to incur
expected error $\Omega(\log n/n)$.
\end{theorem}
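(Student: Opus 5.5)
We follow the construction sketched for \Cref{prop:one-addition-erm}. For a single block, the domain is the edge set of $K_N$. The class is $\{h_0\}\cup\{h_i:i\in[N]\}$, where $h_0\equiv 0$ and $h_i(\{u,v\})=\1\{i\in\{u,v\}\}$.

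\textbf{VC dimension.} Two adjacent edges $\{1,2\},\{1,3\}$ are shattered by $h_0,h_2,h_3,h_1$. Now take any three edges, and consider the labeling that is $1$ on all three. Since $h_0$ cannot produce it, some vertex $i$ would have to lie on all three edges, so the edges form a star at $i$. But then the labeling $(1,0,0)$ would need a vertex $j$ on the first edge other than $i$, and no such $j$ lies off the other two, which is impossible. Hence the VC dimension is exactly two. To obtain one fixed countable class, we take a disjoint union of blocks over a sparse sequence $N_\ell$ and extend each block hypothesis by $0$ off its block. All block zero-hypotheses then coincide with one global $h_0$, and the VC dimension stays two because positive labels live in a single block.

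\textbf{Lower bound instance.} Take a target vertex $I$ and let $D_I$ be uniform on the $\binom{N-1}{2}$ edges avoiding $I$. Every clean label is $0$. Let $Z$ be $\{I\}$ together with the nontarget vertices untouched by the sample. Conditioned on the observed edge multiset $G$, every $h_j$ with $j\in Z$ is consistent. Moreover, by symmetry of the construction, if we draw $I$ uniformly, the posterior of $I$ given $G$ is uniform on $Z(G)$: the likelihood of $G$ under $I$ is nonzero exactly when $I$ is isolated in $G$, and it is then equal for all such $I$.

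\textbf{Adversary.} Upon seeing $G$, the adversary picks $J\in Z\setminus\{I\}$ and appends $(\{I,J\},1)$. This is correctly labeled by $h_I$. The only consistent hypotheses are now $h_I$ and $h_J$. For a possibly randomized ERM $A$, let $p_{ij}(G)$ be the probability that $A$ outputs $h_j$ on the transcript $G\cup\{(\{i,j\},1)\}$, shuffled. The transcript is symmetric in $i,j$, so $p_{ij}=p_{ji}$ as transcripts; since exactly one of $h_i,h_j$ is output, $p_{ij}+p_{ji}=1$. The adversary chooses $J=\arg\max_{j\in Z\setminus\{I\}}p_{Ij}$.

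We now average over $I$ uniform in $Z$ given $G$ with $|Z|=m\ge2$. We have $\max_j p_{Ij}\ge\frac1{m-1}\sum_{j\ne I}p_{Ij}$. Summing over $I$ gives
\[
\tfrac1m\sum_I\max_j p_{Ij}\ \ge\ \tfrac1{m(m-1)}\sum_{I\ne j}p_{Ij}=\tfrac12 .
\]
The loss from outputting $h_J$ is $D_I(\{v:h_J(v)\ne h_I(v)\})$, namely the mass of the edges incident to $J$ that avoid $I$. This equals $(N-2)/\binom{N-1}{2}=2/(N-1)$. So the expected error is at least $\frac12\cdot\frac{2}{N-1}\Pr[|Z|\ge2]$. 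Finally we fix one target by averaging over $I$, noting that the adversary may depend on $I$.

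\textbf{Main obstacle: guaranteeing $|Z|\ge2$.} Take $N=\lceil 8n/\log n\rceil$. The probability that a fixed nontarget vertex is isolated is $(1-\frac{2}{N-1})^n\approx e^{-2n/N}\ge n^{-1/4}(1-o(1))$. So the expected number of isolated nontarget vertices is $\gtrsim N n^{-1/4}\ge c n^{3/4}/\log n\to\infty$. The isolation indicators are negatively correlated: the probability that two fixed vertices are both isolated is $(1-\frac{2N-5}{\binom{N-1}{2}})^n\le$ the product of the individual probabilities, since the union of their edge sets has mass at least the sum of masses minus their overlap, and that overlap is zero. Chebyshev then gives $\Pr[|Z|\ge2]\to1$. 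With the resulting error $\Omega(1/N)=\Omega(\log n/n)$ at $n$ matched to each $N_\ell$, we need the other blocks to be harmless. The marginal lives in one block and other-block hypotheses are $0$ there, so they equal $h_0$ on the support and are eliminated by the positive addition. The bound therefore holds for infinitely many $n$.
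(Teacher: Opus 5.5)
Your route is the paper's own: the same $K_N$ edge class, the marginal uniform on edges avoiding $I$, the uniform posterior on the isolated set $Z$, pair-averaging via $p_{ij}+p_{ji}=1$, the choice $N=\lceil 8n/\log n\rceil$, and a second-moment bound. On the pair-averaging step, your phrase ``$p_{ij}=p_{ji}$ as transcripts'' should say only that the two transcripts coincide; the two probabilities are complementary, not equal. Beyond that, two justifications are wrong as written.

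\textbf{The VC upper bound fails at the labeling you chose.} On a star $\{i,a\},\{i,b\},\{i,c\}$, the labeling $(1,0,0)$ \emph{is} realized, by $h_a$: the vertex $a$ lies on the first edge and on neither of the others. So ``no such $j$ lies off the other two'' is false. The obstruction must use a labeling with exactly two ones, e.g.\ $(1,1,0)$. A realizing $h_j$ would need $j\in\{i,a\}\cap\{i,b\}=\{i\}$, but $h_i$ is also $1$ on $\{i,c\}$, and $h_0$ is identically zero. This is the paper's argument.

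\textbf{The negative-correlation step rests on a false disjointness claim.} The edge sets of two nontarget vertices $u,v$ are not disjoint: they share the edge $\{u,v\}$. That shared edge is exactly why your own count is $2N-5$ rather than $2N-4$. Put $M=N-1$. The per-draw masses are $q_u=q_v=2/M$ and $q_{uv}=2/(M(M-1))$. The per-draw probability of avoiding both vertices is therefore $1-q_u-q_v+q_{uv}$. This is at most $(1-q_u)(1-q_v)$ if and only if $q_{uv}\le q_uq_v$, i.e.\ $2/(M(M-1))\le 4/M^2$, which holds for $M\ge 2$. Equivalently, $(M-2)(M-3)/(M(M-1))\le(1-2/M)^2$, which is the paper's form.

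So the inequality you assert is true, but it needs this comparison rather than an appeal to zero overlap. With these two repairs, your argument goes through exactly as in the paper.
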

\vspace{-0.6 cm}
\begin{proof}
Fix $N$ and let the domain be the edge set of the complete graph $K_N$. The
class contains the all-zero hypothesis $h_0$ and, for each vertex $i\in[N]$,
the incidence hypothesis
$h_i(\{u,v\})=\1\{i\in\{u,v\}\}$. The class has VC dimension two: two
adjacent edges $\{1,2\},\{1,3\}$ are shattered by
$h_0,h_1,h_2,h_3$, whereas an all-one labeling on three edges requires a
common vertex, and on such a three-edge star no hypothesis realizes a pattern
with exactly two ones.

Choose a target vertex $I$ uniformly from $[N]$ and let $D_I$ be uniform on
edges not incident to $I$. The target $h_I$ labels every clean example zero.
Regard the clean sample as a random multigraph on $[N]\setminus\{I\}$, and
let $Z$ be its isolated vertices together with $I$. Conditional on the
observed zero-labeled edge sequence, the posterior distribution of $I$ is
uniform on $Z$.
Fix the clean transcript. For each unordered pair
$\{i,j\}\subseteq Z$, appending the positive example $(\{i,j\},1)$ leaves
exactly $h_i$ and $h_j$ consistent. For a deterministic ERM, orient
$i\to j$ when the rule selects $h_j$. At least half the vertices of every
tournament with at least two vertices have positive outdegree. Thus,
conditional on $|Z|\geq2$, a posterior-uniform target has an outgoing
neighbor with probability at least $1/2$; knowing the target and sample, the
adversary chooses such a neighbor and forces the wrong endpoint.

It remains to show that another isolated vertex exists. Put $M=N-1$, and let
$U$ be the number of isolated vertices among the $M$ nontarget vertices. For
a fixed vertex, $p:=\P[v\text{ is isolated}]=(1-2/M)^n$, while for distinct
$u,v$,
\[
    \P[u,v\text{ are isolated}]
    =\left(\frac{(M-2)(M-3)}{M(M-1)}\right)^n
    \leq p^2.
\]
The isolation indicators therefore have nonpositive pairwise covariance, so
$\operatorname{Var}(U)\leq\E U$. Taking
$N=\lceil8n/\log n\rceil$ gives
\[
    \E U
    =M\left(1-\frac2M\right)^n
    \geq c\frac{n^{3/4}}{\log n}
    \longrightarrow\infty,
\]
and hence $\P[U=0]\leq1/\E U=o(1)$.

If the ERM is forced from $h_I$ to $h_J$, then
\[
    \loss_{D_I}(h_J,h_I)
    =\frac{N-2}{\binom{N-1}{2}}
    =\frac{2}{N-1}
    =\Theta\!\left(\frac{\log n}{n}\right).
\]
Averaging over $I$ and fixing a hard target proves the deterministic claim.
For a randomized ERM, let $p_{ij}$ be the probability of selecting $h_j$
after adding $\{i,j\}$; then $p_{ij}+p_{ji}=1$. Since
\[
    \frac1{|Z|}\sum_{i\in Z}\max_{j\neq i}p_{ij}
    \geq
    \frac{1}{|Z|(|Z|-1)}\sum_{i\neq j}p_{ij}
    =\frac12,
\]
the same adversarial choice and lower bound apply.

Finally, choose an increasing sequence $n_r\to\infty$ and the corresponding
sizes $N_r=\lceil8n_r/\log n_r\rceil$. Take disjoint copies of the finite
classes above, extending every block hypothesis by zero outside its own block
and identifying the all-zero hypotheses. A set shattered by the union must lie
in one block, so the resulting fixed class has VC dimension two. It is
countable and closed: hypotheses escaping through the blocks converge
pointwise to the all-zero hypothesis. Restricting the hard distribution and
adversary to block $r$ proves the lower bound at each sample size $n_r$.
\end{proof}

\end{appendices}

\end{document}